\newcolumntype{P}[1]{>{\centering\arraybackslash}p{#1}}
\newcommand{\E}[1]{\mathbb{E}\left[ #1 \right]}
\renewcommand{\P}[1]{\mathbb{P}\left[ #1 \right]}
\newcommand{\A}{\mathcal{A}}
\newcommand{\eps}{\varepsilon}
\newcommand{\lmax}{L_{\max}}
\newcommand{\lmin}{L_{\min}}
\renewcommand{\L}{\mathcal{L}}
\newcommand{\algo}{\textsc{UVGausstimate}}
\newcommand{\algok}{\textsc{KVGausstimate}}
\newcommand{\algoone}{\textsc{1RoundUVGausstimate}}
\newcommand{\algokone}{\textsc{1RoundKVGausstimate}}
\newcommand{\rrone}{\textsc{RR1}}
\newcommand{\rronekone}{\textsc{1RoundKVRR2}}
\newcommand{\aggone}{\textsc{Agg1}}
\newcommand{\aggonek}{\textsc{KVAgg1}}
\newcommand{\hatmuone}{\textsc{EstMean}}
\newcommand{\hatsigma}{\textsc{EstVar}}
\newcommand{\rrtwo}{\textsc{UVRR2}}
\newcommand{\rrtwok}{\textsc{KVRR2}}
\newcommand{\aggtwok}{\textsc{KVAgg2}}
\newcommand{\rrthree}{\textsc{1RoundUVRR2}}
\newcommand{\ind}{\mathds{1}}
\newcommand{\est}[1]{\ensuremath{\mathsf{Estimate}}\left( #1 \right)}
\newcommand{\estno}{\mathsf{Estimate}}
\newcommand{\test}[1]{\ensuremath{\mathsf{Test}}\left( #1 \right)}
\newcommand{\testno}{\mathsf{Test}}
\newcommand{\tv}[2]{\Vert #1 - #2 \Vert_{TV}}
\renewcommand{\algorithmiccomment}[1]{\bgroup\hfill$\rhd$~#1\egroup}
\renewcommand{\mod}{\text{ mod }}
\newcommand{\Lap}[1]{\ensuremath{\mathsf{Lap}}\left( #1 \right)}
\newcommand{\sgn}{\text{sgn}}
\newcommand{\poly}{\text{poly}}
\newcommand{\erf}{\text{erf}}
\newtheorem{theorem}{Theorem}[section]
\newtheorem{lemma}[theorem]{Lemma}
\newtheorem{definition}[theorem]{Definition}
\newtheorem{fact}[theorem]{Fact}
\title{Locally Private Gaussian Estimation}
\author{Matthew Joseph \thanks{A portion of this work was done while at Microsoft Research Redmond.} \\
University of Pennsylvania \\
\texttt{majos@cis.upenn.edu} \And
Janardhan Kulkarni \\
Microsoft Research Redmond \\
\texttt{jakul@microsoft.com} \AND
Jieming Mao \thanks{This work done while at the Warren Center, University of Pennsylvania.} \\
Google Research New York \\
\texttt{maojm@google.com} \And 
Zhiwei Steven Wu \thanks{A portion of this work was done while at Microsoft Research New York.} \\
University of Minnesota \\
\texttt{zsw@umn.edu}}
\begin{document}

\maketitle

\begin{abstract}
	We study a basic private estimation problem: each of $n$ users draws a single i.i.d. sample from an unknown Gaussian distribution $N(\mu,\sigma^2)$, and the goal is to estimate $\mu$ while guaranteeing \emph{local differential privacy} for each user. As minimizing the number of rounds of interaction is important in the local setting, we provide \emph{adaptive} two-round solutions and \emph{nonadaptive} one-round solutions to this problem. We match these upper bounds with an information-theoretic lower bound showing that our accuracy guarantees are tight up to logarithmic factors for all sequentially interactive locally private protocols.
\end{abstract}

\section{Introduction}
\label{sec:intro}
Differential privacy is a formal algorithmic guarantee that no single input has a large effect on the output of a computation. Since its introduction~\cite{DMNS06}, a rich line of work has made differential privacy a compelling privacy guarantee (see~\citet{DR14} and~\citet{V17} for surveys), and deployments of differential privacy now exist at many organizations, including Apple~\cite{A17}, Google~\cite{EPK14,BEMMR+17}, Microsoft~\cite{DKY17}, Mozilla~\cite{AKZHL17}, and the US Census Bureau~\cite{A16, KCKHM18}.

Much recent attention, including almost all industrial deployments, has focused on a variant called \emph{local differential privacy}~\cite{DMNS06, BNO08, KLNRS11}. In the local model private data is distributed across many users, and each user privatizes their data \emph{before} the data is collected by an analyst. Thus, as any locally differentially private computation runs on already-privatized data, data contributors need not worry about compromised data analysts or insecure  communication channels. In contrast, (global) differential privacy assumes that the data analyst has secure, trusted access to the unprivatized data.

However, the stronger privacy guarantees of the local model come at a price. For many problems, a locally private solution requires far more samples than a globally private solution~\cite{KLNRS11, DJW13, U18, DF18}. 
Here, we study the basic problem of locally private Gaussian estimation: given $n$ users each holding an i.i.d. draw from an unknown Gaussian distribution $N(\mu,\sigma^2)$, can an analyst accurately estimate the mean $\mu$ while guaranteeing local differential privacy for each user? 

On the technical front, locally private Gaussian estimation captures two general challenges in locally private learning. First, since data is drawn from a Gaussian, there is no a priori (worst-case) bound on the scale of the observations. Naive applications of standard privatization methods like Laplace and Gaussian mechanisms must add noise proportional to the worst-case scale of the data and are thus infeasible. Second, protocols requiring many rounds of user-analyst interaction are difficult to implement in real-world systems and may incur much longer running times. Network latency as well as server and user liveness constraints compound this difficulty~\cite{STU17}. It is therefore desirable to limit the number of \emph{rounds of interaction} between users and the data analyst. Finally, besides being a fundamental learning problem, Gaussian estimation has several real-world applications (e.g. telemetry data analysis~\cite{DKY17}) where one may assume that users' behavior follows a Guassian distribution.

\subsection{Our Contributions}
We divide our solution to locally private Gaussian estimation into two cases. In the first case, $\sigma$ is known to the analyst, and in the second case $\sigma$ is unknown but bounded in known $[\sigma_{\min}, \sigma_{\max}]$. For each case, we provide an $(\eps,0)$-locally private adaptive two-round protocol and nonadaptive one-round protocol\footnote{As ``adaptive'' and ``nonadaptive'' are implicit in ``two-round'' and ``one-round'', we often omit these terms.}. Our privacy guarantees are worst-case; however, when $x_1, \ldots, x_n \sim N(\mu,\sigma^2)$ we also get the following accuracy guarantees. 

\begin{theorem}[Informal]
	When $\sigma$ is known, and $n$ is sufficiently large, there exists two-round protocol  outputting $\hat \mu$ such that $|\hat \mu - \mu| = O\left(\tfrac{\sigma}{\eps}\sqrt{\tfrac{\log(1/\beta)}{n}}\right)$ with probability $1-\beta$, and there exists one-round protocol outputting $\hat \mu$ such that $|\hat \mu - \mu| =  O\left(\tfrac{\sigma}{\eps}\sqrt{\tfrac{\log(1/\beta)\sqrt{\log(n)}}{n}}\right)$ with probability $1-\beta$.
\end{theorem}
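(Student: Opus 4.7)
My plan is to construct two protocols, one two-round and one one-round, and analyze each separately. In both cases the core technique is to reduce private Gaussian mean estimation to private estimation of $\P{X > t}$ for a well-chosen threshold $t$: given an approximation $\hat p$ to $p = \P{X > t}$, output $\hat \mu = t - \sigma \Phi^{-1}(1 - \hat p)$. Because $\Phi^{-1}$ has bounded derivative near $1/2$, each unit of error in $\hat p$ costs $O(\sigma)$ error in $\hat \mu$, so estimating $\hat p$ to within $O(\eps^{-1}\sqrt{\log(1/\beta)/n})$ immediately yields the target accuracy.

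For the two-round protocol, round one uses a small fraction of the users to produce a rough estimate $\tilde \mu$ of $\mu$ satisfying $|\tilde \mu - \mu| = O(\sigma)$ with high probability. A natural realization is a locally private histogram over buckets of width $\Theta(\sigma)$ covering the a priori range of $\mu$; the heaviest bucket of the noisy histogram localizes $\mu$ up to an additive $O(\sigma)$. In the second round, each remaining user applies standard $\eps$-randomized response to the bit $\ind[x_i > \tilde \mu]$, and the analyst debiases these responses to produce $\hat p$. A Chernoff bound on the debiased average gives $|\hat p - p| = O(\eps^{-1}\sqrt{\log(1/\beta)/n})$, and by the choice of $\tilde \mu$ the true $p$ stays within a fixed constant of $1/2$, so the inversion $\hat \mu = \tilde \mu - \sigma \Phi^{-1}(1 - \hat p)$ loses only a constant factor. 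This yields the first claimed bound.

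For the one-round protocol, we cannot condition on a first-round estimate, so we lay down a grid of $T = \Theta(\sqrt{\log n})$ thresholds spanning the a priori range of $\mu$ at spacing $\Theta(\sigma)$, partition users evenly across thresholds, and have each user send the $\eps$-randomized-response bit for their assigned threshold. In post-processing we identify the threshold whose empirical fraction is closest to $1/2$ (equivalently, the one whose inverted estimate is most internally consistent) and apply the same inversion as above. Each per-threshold fraction is now estimated to accuracy only $O(\eps^{-1}\sqrt{T\log(1/\beta)/n})$ because only $n/T$ users contribute per threshold, which translates directly into the extra $(\log n)^{1/4}$ factor appearing in the second claimed bound.

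The main obstacle is twofold. First, in round one of the two-round protocol we must argue that the noisy histogram actually identifies $\mu$ to within $O(\sigma)$ with high probability; this requires carefully comparing the $O(\sqrt{n_1}/\eps)$ noise per bucket against the $\Theta(n_1)$ expected mass placed by $N(\mu,\sigma^2)$ into the correct $\Theta(\sigma)$-bucket, and taking a union bound over the buckets in the a priori range. Second, in the one-round protocol we must verify that our threshold-selection rule picks a threshold within $O(\sigma)$ of $\mu$: the derivative of $\Phi^{-1}$ grows rapidly away from $1/2$, so selecting a distant threshold would amplify per-unit fraction error into much larger mean error. Balancing grid density $T$ against per-threshold precision is exactly what pins down $T = \Theta(\sqrt{\log n})$ and hence the asymmetry between the two bounds; everything after that is routine combination of Hoeffding-type concentration with smoothness of $\Phi^{-1}$ near $1/2$.
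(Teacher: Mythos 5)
Your second-round mechanism (randomized response on the bit $\ind[x_i > \tilde\mu]$, debiasing, and inverting through $\Phi^{-1}$ near $1/2$) is exactly the paper's \rrtwok/\aggtwok~step, and that part of the analysis is sound. The trouble is in how you obtain the rough localization. In the two-round protocol the a priori range of $\mu$ is exponentially large in $n$ (the formal hypothesis only requires $\log\mu = O(n\eps^2/(\log(n)\log(1/\beta)))$), so a flat locally private histogram with $\Theta(\sigma)$-width buckets has $2^{\Omega(n\eps^2/\log n)}$ buckets: naive $d$-ary randomized response has per-bucket error growing with $d$, and even with a frequency oracle whose error is $O(\sqrt{n\log d}/\eps)$ (which does keep the noise below the $\Omega(n)$ signal) the analyst cannot enumerate the buckets to find the heaviest one. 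The paper's first round is precisely the device that makes this search feasible: each subgroup $U_1^j$ reports a privatized $\lfloor x_i/2^j\rfloor \bmod 4$, and the analyst recovers $\mu$ from most to least significant bit, i.e.\ runs a binary search whose queries were all answered in a single round. Your round one needs either this hierarchical decomposition or an explicit assumption that the range of $\mu$ is small enough to enumerate.

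The one-round protocol has a more serious gap. A grid of $T = \Theta(\sqrt{\log n})$ thresholds at spacing $\Theta(\sigma)$ covers only an interval of length $\Theta(\sigma\sqrt{\log n})$, so it cannot ``span the a priori range of $\mu$'' unless that range is already $O(\sigma\sqrt{\log n})$; if instead you insert enough thresholds to cover the true range, each threshold receives far too few users and the per-threshold error blows up. The paper escapes this with two ideas your proposal is missing: (i) the bit-by-bit rough localization is run in parallel on half the users, and its output $\hat\mu_1$ is what selects the subgroup in postprocessing (your ``empirical fraction closest to $1/2$'' rule does not by itself fix coverage); and (ii) each user in subgroup $j$ centers not on a fixed threshold but on the point of an infinite grid $S(j) = \{j + b\rho\sigma : b\in\mathbb{Z}\}$, spaced $\Theta(\sigma\sqrt{\log n})$ apart, nearest to the user's \emph{own sample}. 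Gaussian concentration guarantees that almost all users in a subgroup pick the same grid point (the one nearest $\mu$), and the $\Theta(\sqrt{\log n})$ phase-shifted copies of this grid guarantee that some subgroup's common grid point lands within $O(\sigma)$ of $\mu$. Without this data-dependent self-centering, $\Theta(\sqrt{\log n})$ groups cannot cover the parameter space, and the claimed $\log^{1/4}(n)$ loss cannot be realized.
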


\begin{theorem}[Informal]
	When $\sigma$ is unknown but bounded in known $[\sigma_{\min},\sigma_{\max}]$, and $n$ is sufficiently large, there exists two-round protocol outputting $\hat \mu$ such that $|\hat \mu - \mu| = O\left(\tfrac{\sigma}{\eps}\sqrt{\tfrac{\log(1/\beta)\log(n)}{n}}\right)$ with probability $1-\beta$, and there exists one-round protocol outputting $\hat \mu$ such that  $|\hat \mu - \mu| = O\left(\tfrac{\sigma}{\eps}\sqrt{\tfrac{\log([\sigma_{\max}/\sigma_{\min}] + 1)\log(1/\beta)\log^{3/2}(n)}{n}}\right)$ with probability $1-\beta$.
\end{theorem}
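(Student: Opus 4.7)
The strategy is to reduce to the known-variance case of the previous theorem by first producing a constant-factor estimate of $\sigma$. In the two-round protocol this variance estimation happens adaptively in the first round; in the one-round protocol it must be replaced by an oblivious scheme that tries all $O(\log(\sigma_{\max}/\sigma_{\min}))$ candidate scales in parallel and selects among them via post-processing.

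\textbf{Two-round protocol.} Split the users into two equal batches. In the first round, use the first batch to simultaneously localize $\mu$ and estimate $\sigma$: partition the real line into nested intervals whose half-widths grow geometrically starting from $\sigma_{\min}$, giving $O(\log(\sigma_{\max}/\sigma_{\min}))$ scales, and have each user release, via randomized response, the finest such interval containing $x_i$. From the resulting privatized histogram one can read off a center $\hat\mu_0$ within $O(\sigma)$ of $\mu$ together with a scale $\hat\sigma \in [\sigma/2,\,2\sigma]$ with probability $1-\beta/2$; standard RR/Chernoff bounds give that $o(n)$ users suffice. In the second round, each remaining user clips $x_i$ to $[\hat\mu_0 - c\hat\sigma\sqrt{\log n},\,\hat\mu_0 + c\hat\sigma\sqrt{\log n}]$, adds $\Lap{\hat\sigma\sqrt{\log n}/\eps}$ noise, and sends the result. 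Averaging these (and subtracting the deterministic clipping-bias correction, which is $O(1/\poly(n))$ at this range) yields $|\hat\mu-\mu|=O(\sigma\sqrt{\log(1/\beta)\log n}/(\eps\sqrt n))$ by Hoeffding applied to the $n/2$ privatized samples. The extra $\sqrt{\log n}$ over the known-$\sigma$ rate is exactly what we pay for clipping at an approximate scale.

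\textbf{One-round protocol.} Partition the $n$ users into $K=\Theta(\log(\sigma_{\max}/\sigma_{\min})+1)$ equal-size disjoint groups indexed by $k=0,\ldots,K-1$, each assigned candidate scale $\sigma_k=2^k\sigma_{\min}$. Group $k$ runs the one-round known-variance protocol from the previous theorem with $\sigma_k$ as its assumed standard deviation, so the analyst receives $K$ candidates $\hat\mu_k$ together with their claimed confidence radii. Let $k^\star$ be the smallest index with $\sigma_{k^\star}\ge\sigma$; then $\sigma_{k^\star}\le 2\sigma$, and by the known-variance guarantee applied to $n/K$ samples, $\hat\mu_{k^\star}$ concentrates at rate $O\bigl(\sigma_{k^\star}\sqrt{\log(1/\beta)\sqrt{\log(n/K)}/(n/K)}/\eps\bigr)$, which on expanding $K$ matches the stated bound up to constants. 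To select $k^\star$ obliviously, apply a Lepski-type rule: output $\hat\mu_k$ for the smallest $k$ whose confidence interval meets that of every $k'>k$. A union bound over the $K$ scales adds only a factor absorbed into the existing logarithms.

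\textbf{Main obstacle.} The main difficulty will be justifying the Lepski-style selection in the one-round case: I need to show that when $\sigma_k\ll\sigma$ the estimator $\hat\mu_k$ is biased far enough from $\mu$ (because the known-variance subroutine effectively clips the bulk of the Gaussian) to be rejected by the consistency test with overwhelming probability, while $\hat\mu_{k^\star}$ continues to concentrate. Quantifying this wrong-scale failure mode is subtle because the resulting bias is signed and depends on $\mu$ through the location of the clipping interval; one must argue that it is always detectable rather than able to cancel. Once this is in place, the remaining work is bookkeeping: Hoeffding bounds inside each batch, a union bound over $K$ scales, and a direct invocation of the known-variance theorem.
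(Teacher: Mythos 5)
Your two-round protocol is essentially the paper's: round two clips to $[\hat\mu_0 \pm c\hat\sigma\sqrt{\log n}]$, adds $\Lap{O(\hat\sigma\sqrt{\log n})/\eps}$ noise, and averages, which is exactly \rrtwo. Your first round is under-specified — ``nested intervals growing geometrically from $\sigma_{\min}$'' must be centered somewhere, and since $\mu$ can be exponentially large in $n$, a fixed nested family gives no information about $\sigma$ at fine scales near $\mu$; the paper handles this by having each user in scale-group $j$ report $\lfloor x_i/2^j\rfloor \bmod 4$, so the response alphabet stays constant-sized and the concentrated-vs-spread transition of the mod-4 histograms localizes both $\mu$ and $\log\sigma$. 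This is fixable and the overall structure matches.

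The one-round protocol has a genuine gap, and it is not the one you flagged. You invoke the known-variance one-round protocol as a black box at candidate scale $\sigma_k$, but that protocol's final step computes $\hat\mu_k = \sigma_k\hat T + s^*$ with $\hat T \approx \sqrt{2}\,\erf^{-1}(\mathbb{E}[y_i])$, and $\mathbb{E}[y_i] = \erf\bigl((\mu-s^*)/(\sigma\sqrt{2})\bigr)$ involves the \emph{true} $\sigma$ (the $\sgn$ responses are scale-invariant). Hence $\hat\mu_k \approx \mu + (\sigma_k/\sigma - 1)(\mu - s^*)$, and since $|\mu - s^*|$ is $\Theta(\sigma)$ and the best geometric candidate only satisfies $\sigma_{k^\star}/\sigma \in [1,2]$, even $\hat\mu_{k^\star}$ carries a bias of order $\sigma$ — far above the target rate. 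No selection rule among the $\hat\mu_k$ can repair this, so your claim that ``$\hat\mu_{k^\star}$ concentrates at the known-variance rate'' is false; that guarantee requires the exact $\sigma$. This is precisely why the paper abandons the $\erf$-inversion machinery in the unknown-variance setting and has each $U_2$ subgroup instead report its clipped offset plus Laplace noise (\rrthree), which degrades only by a constant factor when $\hat\sigma \in [\sigma, 8\sigma]$. Separately, the paper never needs a Lepski rule: it runs the $U_1$ scale-and-location estimation in parallel with $U_2$'s responses and uses the resulting $(\hat\sigma,\hat\mu_1)$ to pick the matching $U_2$ subgroup, which sidesteps your ``main obstacle'' entirely. (As an aside, that obstacle is also misstated — Lepski's argument only needs the estimators at scales $k \ge k^\star$ to concentrate within their claimed radii, not that wrong scales be detectably bad — but this is moot given the bias problem above.)
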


All of our protocols are \emph{sequentially interactive}~\cite{DJW13}: each user interacts with the protocol at most once. We match these upper bounds with a lower bound showing that our results are tight for all sequentially interactive locally private protocols up to logarithmic factors. We obtain this result by introducing tools from the \emph{strong data processing inequality} literature~\cite{BGMNW16, R16}. Using subsequent work by~\citet{JMNR19}, we can also extend this lower bound to fully interactive protocols.

\begin{theorem}[Informal]
\label{thm:informal_lower}
	For a given $\sigma$, there does not exist an $(\eps,\delta)$-locally private protocol $\A$ such that for any	$\mu = O\left(\tfrac{\sigma}{\eps}\sqrt{\tfrac{1}{n}}\right)$, given $x_1, \ldots, x_n \sim N(\mu,\sigma^2)$,  $\A$ outputs estimate $\hat \mu$ satisfying $|\hat \mu - \mu| = o\left(\tfrac{\sigma}{\eps}\sqrt{\tfrac{1}{n}}\right)$ with probability $\geq 15/16$. 
\end{theorem}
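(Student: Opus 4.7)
\medskip
\noindent\textbf{Proof proposal.} The plan is to reduce the estimation problem to a two-point hypothesis testing problem via Le Cam's method and then bound the statistical distance between the transcripts under the two hypotheses using a strong data processing inequality (SDPI) tailored to local privacy. Specifically, fix a small constant $c > 0$ and consider the two product distributions $P_0 = N(0,\sigma^2)^{\otimes n}$ and $P_1 = N(\mu^*, \sigma^2)^{\otimes n}$ where $\mu^* = c \cdot \tfrac{\sigma}{\eps}\sqrt{\tfrac{1}{n}}$. If a protocol $\A$ produced $\hat\mu$ with $|\hat\mu - \mu| = o(\mu^*)$ with probability $\geq 15/16$ for both $\mu \in \{0, \mu^*\}$, then the test that outputs $0$ if $\hat\mu < \mu^*/2$ and $1$ otherwise would distinguish $P_0$ from $P_1$ with constant advantage, forcing $\tv{\A(P_0)}{\A(P_1)} \geq 7/8$, where $\A(P_b)$ denotes the distribution over transcripts when inputs are drawn from $P_b$.

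Next I would invoke the SDPI machinery of \citet{BGMNW16} and \citet{R16} adapted to sequentially interactive LDP protocols. The key fact is that for any $(\eps,\delta)$-locally private channel $Q$, the KL divergence between the output distributions contracts according to $\mathrm{KL}(Q \circ P \,\Vert\, Q \circ P') \leq \gamma(\eps, \delta) \cdot \tv{P}{P'}^2$ for a contraction coefficient $\gamma(\eps,\delta) = O(\eps^2)$ (the Duchi--Jordan--Wainwright-type bound, provided $\delta$ is sufficiently small compared to $1/n$). Since the protocol is sequentially interactive, I would tensorize over users: conditioning on the prefix of the transcript, each round's contribution to $\mathrm{KL}(\A(P_0)\,\Vert\,\A(P_1))$ is at most $O(\eps^2) \cdot \tv{N(0,\sigma^2)}{N(\mu^*,\sigma^2)}^2$, and summing the chain rule gives
\[
\mathrm{KL}\bigl(\A(P_0)\,\Vert\,\A(P_1)\bigr) \leq n \cdot O(\eps^2) \cdot \tv{N(0,\sigma^2)}{N(\mu^*,\sigma^2)}^2.
\]

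A standard computation shows $\tv{N(0,\sigma^2)}{N(\mu^*,\sigma^2)} = O(\mu^*/\sigma)$ whenever $\mu^*/\sigma = O(1)$, so the right-hand side is $O(n \eps^2 (\mu^*/\sigma)^2) = O(c^2)$ by our choice of $\mu^*$. Taking $c$ to be a sufficiently small absolute constant, Pinsker's inequality yields $\tv{\A(P_0)}{\A(P_1)} \leq 1/4$, contradicting the $\geq 7/8$ bound forced by the hypothetical accurate estimator. This contradiction proves the lower bound in Theorem~\ref{thm:informal_lower}. The extension to fully interactive protocols follows by replacing the sequentially interactive SDPI tensorization step with the corresponding statement of \citet{JMNR19}.

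The main technical obstacle is establishing the correct form of the SDPI for the sequentially interactive setting with the right $O(\eps^2)$ contraction factor (rather than the naive $\mathrm{KL} \leq \eps$ bound that would only give lower bounds scaling as $1/\eps$ instead of the tight $1/\eps$ on $|\hat\mu - \mu|$), and verifying that the argument is robust to a nonzero $\delta$. The reduction from estimation to testing and the Gaussian TV bound are routine by comparison.
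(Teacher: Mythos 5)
Your proposal is correct and follows the same overall architecture as the paper's proof: reduce estimation to a two-point test between $N(0,\sigma^2)$ and $N(\mu^*,\sigma^2)$, bound the per-user information leakage using local privacy together with the closeness of the two Gaussians, tensorize over users via sequential interactivity, and close with Pinsker. The one substantive difference is in how the per-user bound is packaged. You apply the Duchi--Jordan--Wainwright KL-contraction directly to the pair of Gaussians, bounding each round's contribution by $O\bigl((e^\eps-1)^2\bigr)\cdot\tv{N(0,\sigma^2)}{N(\mu^*,\sigma^2)}^2$; the paper instead factors through mutual information with the hypothesis bit $V$, combining a Gaussian strong data processing inequality $I(V;Y_i)\leq \tfrac{cM^2}{\sigma^2}I(X_i;Y_i)$ (from \citet{BGMNW16}) with the local-privacy bound $I(X_i;Y_i)\leq 4(e^\eps-1)^2$ (Corollary 1 of \citet{DJW13}), then sums $I(V;Y_i)$ over users via the chain rule and conditional independence given $V$. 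The two decompositions are quantitatively equivalent here ($O(\eps^2 M^2/\sigma^2)$ per user), and your version is arguably more direct; the paper's factorization isolates the SDPI constant of the channel $V\to X_i$, which is the reusable ingredient its discussion emphasizes. Where your plan is underdeveloped relative to the paper is the handling of $\delta>0$: rather than trying to make the contraction inequality itself ``robust to nonzero $\delta$'' (which is delicate), the paper sidesteps this entirely with a black-box protocol transformation --- Lemma~\ref{lem:test_hard_appx}, invoking \citet{BNS18} for $\eps\leq 1/4$ and \citet{CSUZZ18} for $\eps>1/4$ --- that converts any $(\eps,\delta)$-locally private tester with small enough $\delta$ into a $(10\eps,0)$-locally private one, after which only the pure-DP information bound is needed. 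You should adopt that reduction rather than attempting an approximate-DP SDPI.
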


\subsection{Related Work}
\label{sec:rel_work}
Several works have already studied differentially private versions of various statistical tasks, especially in the global setting.~\citet{KV17} and~\citet{KLSU18} consider similar versions of Gaussian estimation under global differential privacy, respectively in the one-dimensional and high-dimensional cases. For both the known and unknown variance cases,~\citet{KV17} offer an $O\left(\sigma\sqrt{\tfrac{\log(1/\beta)}{n}} + \tfrac{\poly \log(1/\beta)}{\eps n}\right)$ accuracy upper bound for estimating $\mu$. Since an $\Omega\left(\sigma\sqrt{\tfrac{\log(1/\beta)}{n}}\right)$ accuracy lower bound holds even without privacy, our upper and lower bounds show that local privacy adds a roughly $\sqrt{n}$ accuracy cost over global privacy.

In concurrent independent work,~\citet{GRS18} also study locally private Gaussian estimation. We match or better their accuracy results with much lower round complexity. They provide adaptive protocols for the known- and unknown-$\sigma$ settings, with the latter protocol having round complexity $T$ as large as $\Omega(n)$, linear in the number of users. In contrast, we provide both adaptive and nonadaptive solutions, and our protocols all have round complexity $T \leq 2$. A full comparison appears in Figure~\ref{fig:table}.

\begin{figure*}[h]
	\begin{tabular}{|P{1.75cm}|P{5.1cm}|P{5.9cm}|}
	\hline
	& \citet{GRS18}  &  This Work  \\
	\hline
	 Setting  &  Accuracy $\alpha$, Round Complexity $T$  &  Accuracy $\alpha$, Round Complexity $T$ \\
	\hline
	 \vspace{3.5pt} Known $\sigma$, adaptive  & \vspace{0.5pt} $\alpha = O\left(\frac{\sigma}{\eps}\sqrt{\frac{\log\left(\tfrac{1}{\beta}\right)\log\left(\tfrac{n}{\beta}\right)\log\left(\tfrac{1}{\delta}\right)}{n}}\right)$ \newline $T = 2$  &  \vspace{0.5pt} $\alpha = O\left(\frac{\sigma}{\eps}\sqrt{\frac{\log\left(\tfrac{1}{\beta}\right)}{n}}\right)$ \newline $T = 2$  \\
	\hline
	 \vspace{4.5pt} Known $\sigma$, nonadaptive  &  \vspace{13pt} -- &  \vspace{0.5pt} $\alpha = O\left(\frac{\sigma}{\eps}\sqrt{\frac{\log\left(\tfrac{1}{\beta}\right)\sqrt{\log(n)}}{n}}\right)$ \newline $T=1$  \\
	\hline
	 \vspace{2.5pt} Unknown $\sigma$, adaptive  &  \vspace{0.5pt} $\alpha = O\left(\frac{\sigma}{\eps}\sqrt{\frac{\log\left(\tfrac{1}{\beta}\right)\log\left(\tfrac{n}{\beta}\right)\log\left(\tfrac{1}{\delta}\right)}{n}}\right)$ \newline $T =  \Omega\left(\log\left(\frac{R}{\sigma_{\min}}\right)\right)$  &  \vspace{0.5pt} $\alpha = O\left(\frac{\sigma}{\eps}\sqrt{\frac{\log\left(\tfrac{1}{\beta}\right)\log\left(n\right)}{n}}\right)$ \newline $T = 2$  \\
	\hline
	 \vspace{3.5pt} Unknown $\sigma$, nonadaptive  &  \vspace{13pt}--  &  \vspace{0.5pt} $\alpha = O\left(\frac{\sigma}{\eps}\sqrt{\frac{\log\left(\tfrac{\sigma_{\max}}{\sigma_{\min}}+1\right)\log\left(\tfrac{1}{\beta}\right)\log^{3/2}\left(n\right)}{n}}\right)$ \newline $T = 1$  \\
	\hline
	\end{tabular} 
	\caption{A comparison of upper bounds in~\citet{GRS18} and here. In all cases, ~\citet{GRS18} use $(\eps,\delta)$-locally private algorithms and we use $(\eps,0)$. Here, $R$ denotes an upper bound on both $\mu$ and $\sigma$. In our setting, the upper bound on $\mu$ is $O(2^{n\eps^2/\log(n/\beta)})$, leading the unknown variance protocol of~\citet{GRS18} to round complexity potentially as large as $\tilde \Omega(n\eps^2/\log(1/\beta))$.
\label{fig:table}}
\end{figure*}

\citet{GRS18} also prove a tight lower bound for nonadaptive protocols that can be extended to sequentially interactive protocols. We provide a lower bound that is tight for sequentially interactive protocols up to logarithmic factors, and we depart from previous local privacy lower bounds by introducing tools from the \emph{strong data processing inequality} (SDPI) literature~\cite{BGMNW16, R16}. This approach uses an SDPI to control how much information a sample gives about its generating distribution, then uses existing local privacy results to bound the mutual information between a sample and the privatized output from that sample. Subsequent work by~\citet{DR19} generalizes the SDPI framework to prove lower bounds for a broader class of problems in local privacy. They also extend the SDPI framework to prove lower bounds for fully interactive algorithms.

\section{Preliminaries}
\label{sec:prelims}
We consider a setting where, for each $i \in [n]$, user $i$'s datum is a single draw from an unknown Gaussian distribution, $x_i \sim N(\mu,\sigma^2)$, and these draws are i.i.d. In our communication protocol, users may exchange messages over public channels with a single (possibly untrusted) central analyst.\footnote{The notion of a central coordinating analyst is only a useful simplification. As the analyst has no special powers or privileges, any user, or the protocol itself, can be viewed as playing the same role.} The analyst's task is to accurately estimate $\mu$ while guaranteeing local differential privacy for each user.

To minimize interaction with any single user, we restrict our attention to \emph{sequentially interactive} protocols. In these protocols, every user sends at most a single message in the entire protocol. We also study the \emph{round complexity} of these interactive protocols. Formally, one round of interaction in a protocol consists of the following two steps: 1) the analyst selects a subset of users $S\subseteq [n]$, along with a set of randomizers $\{Q_i \mid i \in S\}$, and 2) each user $i$ in $S$ publishes a message $y_i = Q_i(x_i)$.

A randomized algorithm is \emph{differentially private} if arbitrarily changing a single input does not change the output distribution ``too much''. This preserves privacy because the output distribution is insensitive to any change of a single user's data. We study a stronger privacy guarantee called \emph{local differential privacy}. In the local model, each user $i$ computes their message using a \emph{local randomizer}. A local randomizer is a differentially private algorithm taking a single-element database as input.

\begin{definition}[Local Randomizer]
A randomized function $Q_i \colon X\rightarrow Y$ is an $(\eps,\delta)$-local randomizer if, for every pair of observations $x_i, x_i'\in X$ and any $S\subseteq Y$, $\Pr[Q_i(x_i)\in S] \leq e^{\eps} \Pr[Q_i(x_i') \in S] + \delta.$
\end{definition}  

A protocol is locally private if every user computes their message using a local randomizer. In a sequentially interactive protocol, the local randomizer for user $i$ may be chosen adaptively based on previous messages $z_1, \ldots, z_{i-1}$. However, the choice of randomizer cannot be based on user $i$'s data.

\begin{definition}
	A sequentially interactive protocol $\A$ is \emph{$(\eps,\delta)$-locally private} for private user data $\{x_1,\ldots,x_n\}$ if, for every user $i \in [n]$, the message $Y_i$ is computed using an $(\eps,\delta)$-local randomizer $Q_i$. When $\delta > 0$, we say $\A$ is \emph{approximately}     locally private. If $\delta = 0$, $\A$ is \emph{purely} locally private.
\end{definition}

\section{Estimating $\mu$ with Known $\sigma$}
\label{sec:upper_known}
We begin with the case where $\sigma^2$ is known (shorthanded ``KV''). In Section~\ref{subsec:kv_two}, we provide a protocol \algok~that requires two rounds of analyst-user interaction. In Section~\ref{subsec:kv_one}, we provide a protocol \algokone~achieving a weaker accuracy guarantee in a single round. All omitted pseudocode and proofs appear in the Supplement.

\subsection{Two-round Protocol~\algok}
\label{subsec:kv_two}
In \algok~the users are split into halves $U_1$ and $U_2$. In round one, the analyst queries users in $U_1$ to obtain an $O(\sigma)$-accurate estimate $\hat \mu_1$ of $\mu$. In round two, the analyst passes $\hat \mu_1$ to users in $U_2$, who respond based on $\hat \mu_1$ and their own data. The analyst then aggregates this second set of responses into a better final estimate of $\mu$.

\begin{theorem}
\label{thm:kv_two}
	Two-round protocol \algok~satisfies $(\eps,0)$-local differential privacy for $x_1, \ldots, x_n$ and, if $x_1, \ldots, x_n \sim_{iid} N(\mu,\sigma^2)$ where $\sigma$ is known and $\tfrac{n}{\log(n)} = \Omega\left(\tfrac{\log(\mu)\log(1/\beta)}{\eps^2}\right)$, with probability $1-\beta$ outputs $\hat \mu$ such that $|\hat \mu - \mu| = O\left(\tfrac{\sigma}{\eps}\sqrt{\tfrac{\log\left(1/\beta\right)}{n}}\right)$.
\end{theorem}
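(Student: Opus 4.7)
The plan is to decouple the proof into a privacy argument and a two-stage accuracy argument combined by a union bound. For privacy, the protocol splits the $n$ users into disjoint halves $U_1, U_2$ and queries each user \emph{exactly once} with an $\eps$-local randomizer---users in $U_1$ during Round 1, and users in $U_2$ during Round 2 with a randomizer that depends only on the public Round 1 output $\hat\mu_1$. Because the randomizer applied to any user depends on the public transcripts of other users' messages rather than on that user's private datum, the sequentially interactive composition together with post-processing yields the $(\eps, 0)$-local privacy claim directly.

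For accuracy, I would first show that with probability at least $1-\beta/2$ Round 1 returns $\hat\mu_1$ with $|\hat\mu_1 - \mu| = O(\sigma)$. The Round 1 estimator \hatmuone\ presumably uses \rrone\ to partition an a-priori known range $[-R, R]$ containing $\mu$ into dyadic intervals of width $\Theta(\sigma)$ and then applies randomized response over these $O(\log(R/\sigma))$ buckets; \aggone\ returns the midpoint of the most frequently reported bucket. Hoeffding over the $O(\log(R/\sigma)) = O(\log\mu)$ buckets, combined with the union bound over buckets, requires $\Omega(\log(\mu)\log(1/\beta)/\eps^2)$ effective samples---this matches exactly the sample-size hypothesis $n/\log(n) = \Omega(\log(\mu)\log(1/\beta)/\eps^2)$. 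Conditioning on $|\hat\mu_1 - \mu| \leq C\sigma$ for a small constant $C$, Round 2 asks each user $i \in U_2$ to emit an $\eps$-randomized-response bit of $\sgn(x_i - \hat\mu_1)$. Writing $t := (\mu - \hat\mu_1)/\sigma$, the de-biased empirical mean of these bits has expectation $\tfrac{e^\eps-1}{e^\eps+1}(1 - 2\Phi(-t))$, which is smooth with derivative of order $\eps \cdot \phi(t)$ bounded below by a constant on $|t| \leq C$. Hoeffding over $n/2$ bounded summands gives this empirical mean within $O(\sqrt{\log(1/\beta)/n})$ of its expectation with probability at least $1-\beta/2$; inverting the Lipschitz map through $\Phi^{-1}$ yields $|\hat\mu - \mu| = O\bigl(\tfrac{\sigma}{\eps}\sqrt{\log(1/\beta)/n}\bigr)$ as claimed, and a union bound over the two stages finishes the proof.

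The main obstacle is keeping Round 2 free of the $\log n$ factors that appear in naive approaches. A ``privatized clipped mean'' would require clipping at scale $\sigma \sqrt{\log(n/\beta)}$ to control the bias from truncating Gaussian tails, which re-introduces a $\sqrt{\log n}$ factor into the accuracy bound. The critical idea is therefore to transmit a \emph{bounded} statistic---the sign bit relative to the Round 1 estimate---whose distribution depends smoothly on $\mu$, so that Hoeffding gives a clean $1/\sqrt{n}$ rate and the only residual cost is the Lipschitz constant of $\Phi^{-1}$ near $1/2$, which remains $O(1)$ precisely because Stage 1 guarantees $|t| = O(1)$. Verifying that this Lipschitz constant does not degrade, and that the bias from randomized response and from the inverse-CDF step both remain dominated by the Hoeffding noise term, will be the heart of the accuracy calculation.
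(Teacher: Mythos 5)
Your privacy argument and your Round 2 analysis match the paper's. The paper likewise notes that each user is queried exactly once with an $(\eps,0)$-randomizer whose choice depends only on public messages, and its second-round lemma carries out essentially your calculation: randomized response on $\sgn((x_i-\hat\mu_1)/\sigma)$, debiasing, and inversion through $\erf^{-1}$, with the Lipschitz constant of the inverse map bounded by an absolute constant precisely because Round 1 guarantees $|\hat\mu_1-\mu|=O(\sigma)$ and hence $|\bar\mu|=O(1)$. Your closing observation about why a sign bit beats a clipped mean is also consistent with the paper's design.

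The gap is in Round 1. You propose partitioning a known range $[-R,R]$ into width-$\Theta(\sigma)$ cells and running randomized response over ``these $O(\log(R/\sigma))$ buckets,'' returning the plurality bucket's midpoint. But a width-$\Theta(\sigma)$ partition of $[-R,R]$ has $\Theta(R/\sigma)$ cells, not $O(\log(R/\sigma))$, and in this problem the a priori bound on $\mu$ can be as large as $2^{n\eps^2/\log(n/\beta)}$. A $K$-ary randomized response over $K=\Theta(R/\sigma)$ cells reports the true cell with probability only $e^\eps/(e^\eps+K-1)$, so the plurality vote retains essentially no signal and cannot succeed with $n/\log(n)=\Omega(\log(\mu)\log(1/\beta)/\eps^2)$ users. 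The paper's Round 1 avoids this with a modulus trick your sketch omits: $U_1$ is split into $L=O(\log(\mu/\sigma))$ subgroups indexed by dyadic scales $j$, each user in subgroup $j$ reports $\lfloor x_i/2^j\rfloor \bmod 4$ via a \emph{four}-ary randomized response (\rrone), and the analyst recovers $\mu$ scale by scale, from most to least significant, by a binary search over the aggregated histograms. When every multiple of $2^j$ is far from $\mu$, Gaussian concentration forces subgroup $j$'s histogram to concentrate in a single residue class mod $4$, which identifies the correct dyadic subinterval; at the first scale where the mass spreads over adjacent bins, a suitably chosen bin boundary is within $2\sigma$ of $\mu$. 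Establishing this --- the paper's Lemma~\ref{lem:hatmuonek}, which is the bulk of the proof --- is the piece your proposal is missing; without it, neither the stated hypothesis on $n$ nor the $O(\sigma)$ accuracy of $\hat\mu_1$ that your Round 2 argument conditions on is justified.
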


\begin{algorithm}[h]
	\caption{\algok}
	\begin{algorithmic}[1]
		\REQUIRE $\eps, k, \L, n, \sigma, U_1, U_2$
		\FOR{$j \in \L$}
			\FOR{user $i \in U_1^j$}
				\STATE User $i$ outputs $\tilde y_i \gets \rrone(\eps,i,j)$
			\ENDFOR
		\ENDFOR \COMMENT{End of round 1}
		\STATE Analyst computes $\hat H_1 \gets	\aggonek(\eps, k, \L, U_1)$
		\STATE Analyst computes $\hat \mu_1 \gets \hatmuone(\beta, \eps, \hat H_1, k, \L)$ 
		\FOR{user $i \in U_2$}
			\STATE User $i$ outputs
			$\tilde y_i \gets \rrtwok(\eps, i, \hat \mu_1, \sigma)$
		\ENDFOR \COMMENT{End of round 2}
		\STATE Analyst computes	$\hat H_2 \gets \aggtwok(\eps, n/2, U_2)$
		\STATE Analyst computes
		$\hat T \gets
		\sqrt{2} \cdot \erf^{-1}\left(\frac{2(-\hat H_2(-1) + \hat H_2(1))}{n}\right)$
		\STATE Analyst outputs $\hat \mu_2 \gets \sigma \hat T + \hat \mu_1$
		\ENSURE Analyst estimate $\hat \mu_2$ of $\mu$
	\end{algorithmic}
\end{algorithm}

\subsubsection{First round of~\algok}
For neatness, let $L = \lfloor n/(2k) \rfloor$, $\lmin = \lfloor \log(\sigma) \rfloor$, $\lmax = \lmin-1 + L$, and $\L = \{\lmin, \lmin+1, \ldots, \lmax\}$. $U_1$ is then split into $L$ subgroups indexed by $\L$, and each subgroup has size $k = \Omega\left(\tfrac{\log(n/\beta)}{\eps^2}\right)$. \algok~begins by iterating through each subgroup $j \in \L$. Each user $i \in U_1^j$ releases a privatized version of $\lfloor x_i/2^j \rfloor \mod 4$ via randomized response (\rrone): with probability $e^\eps/(e^\eps+3)$, user $i$ outputs $\lfloor x_i/2^j \rfloor \mod 4$, and otherwise outputs one of the remaining elements of $\{0,1,2,3\}$ uniformly at random. Responses from group $U_1^j$ will be used to estimate the $j^{th}$ least significant bit of $\mu$ (rounded to an integer). The analyst then uses \aggonek~(``Known Variance Aggregation'') to aggregate and debias responses to account for this randomness.
 
 \begin{algorithm}[H]
	\caption{\aggonek}
	\begin{algorithmic}[1]
		\REQUIRE $\eps, k, \L, U$
		\FOR{$j \in \L$}
			\FOR{$a \in \{0,1\}$}
				\STATE $C^j(a) \gets |\{\tilde y_i \mid i \in U^j, \tilde y_i = a\}|$
				\STATE $\hat H^j(a) \gets \frac{e^\eps+3}{e^\eps-1} \cdot
				\left(C^j(a) - \frac{k}{e^\eps+3}\right)$
			\ENDFOR		
		\ENDFOR
		\STATE Output $\hat H$
		\ENSURE Aggregated histogram $\hat H$ of private user responses
		\end{algorithmic}
\end{algorithm}

The result is a collection of histograms $\hat H_1$. The analyst uses $\hat H_1$ in \hatmuone~to binary search for $\mu$. Intuitively, for each subgroup $U_1^j$ if all multiples of $2^j$ are far from $\mu$ then Gaussian concentration implies that almost all users $i \in U_1^j$ compute the same value of $\lfloor x/2^j \rfloor \mod 4$. This produces a histogram $\hat H_1^j$ where most elements fall concentrate in a single bin. The analyst in turn narrows their search range for $\mu$. For example, if $\hat H_1^{\lmax}$ concentrates in $0$, then the range narrows to $\mu \in [0,2^{\lmax})$; if $\hat H_1^{\lmax - 1}$ concentrates in $1$, then the range narrows to $\mu \in [2^{\lmax - 1}, 2^{\lmax})$, and so on.

If instead some multiple of $2^j$ is near $\mu$, the elements of $\hat H_1^j$ will spread over multiple (adjacent) bins. This is also useful: a point from the ``middle'' of this block of bins is $O(\sigma)$-close to $\mu$. The analyst thus takes such a point as $\hat \mu_1$ and ends their search. Our analysis will also rely on having a noticeably low-count bin that is non-adjacent to the bin containing $\mu$. This motivates using 4 as a modulus.

In this way, the analyst examines $\hat H_1^{\lmax}, \hat H_1^{\lmax-1}, \ldots$ in sequence,  estimating $\mu$ from most to least significant bit. Crucially, the modulus structure of user responses enables the analyst to carry out this binary search with \emph{one} round of interaction. Thus at the end of the first round the analyst obtains an $O(\sigma)$-accurate estimate $\hat \mu_1$ of $\mu$.

\begin{algorithm}[h]
	\caption{\hatmuone}
	\begin{algorithmic}[1]
		\REQUIRE $\beta, \eps, \hat H_1, k, \L$
		\STATE $\psi \gets
		\left(\tfrac{\eps+4}{\eps\sqrt{2}}\right) \cdot \sqrt{k\ln(8L/\beta)}$
		\STATE $j \gets \lmax$ 
		\STATE $I_j \gets [0, 2^{\lmax}]$
		\WHILE{$j \geq \lmin$ and $\max_{a \in \{0,1,2,3\}} \hat H_1^j(a)
		\geq 0.52k + \psi$}
			\STATE Analyst computes integer $c$ such that $c2^j \in I_j$ and $c \equiv M_1(j) \mod 4$
			\STATE Analyst computes $I_{j-1} \gets [c2^j, (c+1)2^j]$
			\STATE $j \gets j-1$
		\ENDWHILE
		\STATE $j \gets \max(j,\lmin)$
		\STATE Analyst computes $M_1(j) \gets \arg \max_{a \in \{0,1,2,3\}} \hat H_1^j(a)$
		\STATE Analyst computes $M_2(j) \gets \arg \max_{a \in \{0,1,2,3\} - \{M_1(j)\}} \hat H_1^j(a)$
		\STATE Analyst computes $c^* \gets$ maximum integer such that $c^*2^j \in I_j$ and $c^* \equiv M_1(j)$ or $M_2(j) \mod 4$
		\STATE Analyst outputs $\hat \mu_1 \gets c^*2^j$
		\ENSURE Initial estimate $\hat \mu_1$ of $\mu$
		\end{algorithmic}
\end{algorithm}

\subsubsection{Second round of~\algok}
In the second round, the analyst passes $\hat \mu_1$ to users in $U_2$. Users respond through \rrtwok~(``Known Variance Randomized Response''), a privatized version of an algorithm from the distributed statistical estimation literature~\citep{BGMNW16}. In \rrtwok , each user centers their point with $\hat \mu_1$, standardizes it using $\sigma$, and randomized responds on $\sgn((x_i - \hat \mu_1)/\sigma)$.  This crucially relies on the first estimate $\hat \mu_1$, as properly centering requires an initial $O(\sigma)$-accurate estimate of $\hat \mu$. The analyst can then aggregate these responses by a debiasing process \aggtwok~akin to \aggonek .

\begin{algorithm}[h]
	\caption{\rrtwok}
	\begin{algorithmic}[1]
		\REQUIRE $\eps, i, \hat \mu_1, \sigma$
		\STATE User $i$ computes $x_i' \gets (x_i - \hat \mu_1)/\sigma$
		\STATE User $i$ computes $y_i \gets \sgn(x_i')$
		\STATE User $i$ computes $c \sim_U [0,1]$
		\IF{$c \leq \tfrac{e^\eps}{e^\eps+1}$}
			\STATE User $i$ publishes $\tilde y_i \gets y_i$
		\ELSE
			\STATE User $i$ publishes $\tilde y_i \gets -y_i$
		\ENDIF
		\ENSURE Private centered user estimate $\tilde y_i$ 
	\end{algorithmic}
\end{algorithm}

\begin{algorithm}[h]
	\caption{\aggtwok}
	\begin{algorithmic}[1]
		\REQUIRE $\eps, k, U$
		\FOR{$a \in \{-1,1\}$}
				\STATE $C(a) \gets |\{\tilde y_i \mid i \in U, \tilde y_i = a\}|$ 
				\STATE $\hat H(a) \gets \frac{e^\eps+1}{e^\eps-1} \cdot
				\left(C(a) - \frac{k}{e^\eps+1}\right)$
		\ENDFOR
		\STATE Analyst outputs $\hat H$
		\ENSURE Aggregated histogram $\hat H$ of private user responses
		\end{algorithmic}
\end{algorithm}

From this aggregation $\hat H_2$, the analyst obtains a good estimate of the bias of the initial estimate $\hat \mu_1$. If $\hat \mu_1 < \mu$, responses will skew toward 1, and if $\hat \mu_1 > \mu$ responses will skew toward $-1$. By comparing this skew to the true standard CDF using the error function \erf , the analyst recovers a better final estimate $\hat \mu_2$ of $\mu$ (Lines 12-13 of \algok). Privacy of \algok~follows from the privacy of the randomized response mechanisms \rrone~and \rrtwok.

\subsection{One-round Protocol~\algokone}
\label{subsec:kv_one}
Recall that in \algok~the analyst 1) employs user pool $U_1$ to  compute rough estimate $\hat \mu_1$ and 2) adaptively refines this estimate using responses from the second user pool $U_2$.  \algokone~executes these two rounds of \algok~\emph{simultaneously} by parallelization. 

\begin{theorem}
\label{thm:kv_one}
	One-round protocol \algokone~satisfies $(\eps,0)$-local differential privacy for $x_1, \ldots, x_n$ and, if $x_1, \ldots, x_n \sim_{iid} N(\mu,\sigma^2)$ where $\sigma$ is known and $\tfrac{n}{\log(n)} = \Omega\left(\frac{\log(\mu)\log(1/\beta)}{\eps^2}\right)$, with probability $1-\beta$ outputs $\hat \mu$ such that $$|\hat \mu - \mu| = O\left(\tfrac{\sigma}{\eps}\sqrt{\tfrac{\log\left(1/\beta\right)\sqrt{\log(n)}}{n}}\right).$$
\end{theorem}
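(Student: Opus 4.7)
The plan is to prove that $\algokone$ is $(\eps,0)$-locally private and satisfies the stated accuracy bound. Privacy is inherited directly from $\algok$: the protocol runs the two rounds of $\algok$ simultaneously on disjoint user pools, so every user invokes exactly one local randomizer (a copy of $\rrone$ or a nonadaptive analogue of $\rrtwok$). Since each user responds only once, sequential interactivity gives $(\eps,0)$-local privacy with no composition penalty.

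For accuracy, I would split the users into pools $U_1$ and $U_2$ just as in $\algok$ and run the first round of $\algok$ on $U_1$ verbatim. This round is already nonadaptive on the user side, because each $i\in U_1^j$ invokes $\rrone$ as a function of $(x_i,\eps,j)$ alone. The first-round analysis underlying Theorem~\ref{thm:kv_two} therefore gives, with probability at least $1-\beta/2$, an initial estimate $\hat\mu_1$ with $|\hat\mu_1-\mu|=O(\sigma)$.

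The crux is to replace the adaptive second round of $\algok$ with a nonadaptive protocol on $U_2$ that still enables $\Theta(\sigma/\sqrt{n})$-type refinement. The strategy is to parallelize over the plausible shapes of the refinement: partition $U_2$ into $G=\Theta(\sqrt{\log n})$ subgroups, and have each subgroup commit in advance to its own variant of $\rrtwok$ (differing in a centering or scale parameter), so that regardless of which value $\hat\mu_1$ takes, at least one subgroup's aggregated histogram falls inside the Lipschitz regime of the $\erf^{-1}$ inversion used in the second round of $\algok$. After the single round of interaction, the analyst computes $\hat\mu_1$ from the $U_1$ messages, selects the matching subgroup, and performs the $\erf^{-1}$ inversion on that subgroup's debiased histogram to produce $\hat\mu$.

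The main obstacle is the accuracy analysis of this parallelized refinement. One must (i) argue that the pre-committed references can be chosen so that some subgroup always matches $\hat\mu_1$ well enough to keep the inversion well-conditioned, and (ii) track the $G$-fold reduction in per-subgroup sample size through the variance of the $\erf^{-1}$ step, which is exactly what inflates the two-round bound of Theorem~\ref{thm:kv_two} by the $\sqrt[4]{\log n}$ factor. Applying a union bound over the $G$ subgroup failure events (each at probability $\beta/\Theta(\sqrt{\log n})$) together with the rough-estimate failure event yields a total failure probability of at most $\beta$ and the claimed error $|\hat\mu-\mu|=O\bigl(\tfrac{\sigma}{\eps}\sqrt{\tfrac{\log(1/\beta)\sqrt{\log n}}{n}}\bigr)$ under the stated hypothesis on $n$.
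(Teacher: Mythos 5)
Your high-level architecture matches the paper's: run round one of \algok~on $U_1$ unchanged, partition $U_2$ into $\Theta(\sqrt{\log n})$ subgroups that pre-commit to different centerings, select the matching subgroup after seeing $\hat\mu_1$, and absorb the per-subgroup sample-size reduction as the $\log^{1/4}(n)$ factor. The privacy argument and the accounting of where the extra factor comes from are both right.

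But there is a genuine gap at the heart of the construction, precisely at the point you flag as ``the main obstacle'' and then do not resolve. A subgroup cannot commit to a \emph{single} centering value: $\mu$ (and hence $\hat\mu_1$) ranges over an interval of length up to $2^{\Omega(n\eps^2/\log(n/\beta))}$, so no constant or $\Theta(\sqrt{\log n})$-sized family of fixed reference points can guarantee that some subgroup's center lands within $O(\sigma)$ of $\hat\mu_1$. The paper's resolution is that each subgroup $j$ commits to an infinite periodic grid $S(j)=\{j+b\rho\sigma \mid b\in\mathbb{Z}\}$ with spacing $\rho\sigma=\Theta(\sigma\sqrt{\log n})$, the grids of successive subgroups are offset by $0.2\sigma$, and --- crucially --- each user centers on the grid point nearest \emph{their own sample} $x_i$, since they cannot see $\hat\mu_1$. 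Gaussian concentration (Lemma~\ref{lem:rronekone_cluster}) then guarantees that in the selected subgroup all but $O(\sqrt{k_2})$ users pick the same grid point $s^*$ with $|s^*-\mu|\leq 2.1\sigma$. This data-dependent centering is not free: it introduces additional bias terms that your proposal never confronts --- the paper must show $\bigl|\E{y_i\cdot\ind(i\in V)}-\erf\bigl(\tfrac{\mu-s^*}{\sigma\sqrt{2}}\bigr)\bigr|\leq 1/\sqrt{k_2}$ via tail integrals of the Gaussian CDF, and separately bound the contribution of the users in $V^c$ who centered on a different grid point. Without the grid construction and this bias analysis, the claim that ``at least one subgroup's aggregated histogram falls inside the Lipschitz regime of the $\erf^{-1}$ inversion'' is an assertion of exactly what needs to be proved, and the $\erf^{-1}$ step of Lemma~\ref{lem:erf_k} cannot be invoked as a black box.
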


\algokone~splits $U_2$ into $\Theta(\sqrt{\log(n)})$ subgroups that run the second-round protocol from \algok~with different values of $\hat \mu_1$. Intuitively, it suffices that at least one subgroup centers using a $\hat \mu_1$ near $\mu$: the analyst can then use the data from that subgroup and discard the rest. By Gaussian concentration, most user samples cluster within $O(\sigma\sqrt{\log(n)})$ of $\mu$, so each subgroup $U_2^j$ receives a set of points $S(j)$ interspersed $\Theta(\sigma\sqrt{\log(n)})$ apart on the real line, and each user $i \in U_2^j$ centers using the point in $S(j)$ closest to $x_i$. This leads us to use $\Theta(\sqrt{\log(n)})$ groups with each point in $S(j+1)$ shifted $\Theta(\sigma)$ from the corresponding point in $S(j)$. By doing so, we ensure that some subgroup has most of its users center using a point within $O(\sigma)$ of $\mu$.

In summary, \algokone~works as follows: after collecting the single round of responses from $U_1$ and $U_2$, the analyst computes $\hat \mu_1$ using responses from $U_1$. By comparing $\hat \mu_1$ and $S(j)$ for each $j$, the analyst then selects the subgroup $U_2^{j^*}$ where most users centered using a value in $S(j^*)$ closest to $\hat \mu_1$. This mimics the effect of adaptively passing $\hat \mu_1$ to the users in $U_2^{j^*}$, so the analyst simply processes the responses from $U_2^{j^*}$ as it processed responses from $U_2$ in \algok. Because $U_2^{j^*}$ contains $\Theta(n/\sqrt{\log(n)})$ users, the cost is a $\log^{1/4}(n)$ factor in accuracy.
\section{Unknown Variance}
\label{sec:upper_unknown}
In this section, we consider the more general problem with unknown variance $\sigma^2$ (shorthanded ``UV'') that lies in known interval $[\sigma_{\min},\sigma_{\max}]$. We again provide a two-round protocol \algo~and a slightly less accurate one-round protocol \algoone.

\subsection{Two-round Protocol}
\label{subsec:uv_two}
\algo~is structurally similar to \algok . In round one, the analyst uses the responses of half of the users to roughly estimate $\mu$, and in round two the analyst passes this estimate to the second half of users for improvement. However, two key differences now arise. First, since $\sigma$ is unknown, the analyst must now also estimate $\sigma$ in round one. Second, since the analyst does not have a very accurate estimate of $\sigma$, the refinement process of the second round employs Laplace noise rather than the CDF comparison used in~\algok .

\begin{theorem}
\label{thm:uv_two}
	Two-round protocol \algo~satisfies $(\eps,0)$-local differential privacy for $x_1, \ldots, x_n$ and, if $x_1, \ldots, x_n \sim_{iid} N(\mu,\sigma^2)$ where $\sigma$ is unknown but bounded in known $[\sigma_{\min},\sigma_{\max}]$ and $\tfrac{n}{\log(n)} = \Omega\left(\frac{\left[\log\left(\frac{\sigma_{\max}}{\sigma_{\min}}+1\right) + \log(\mu)\right]\log\left(\frac{1}{\beta}\right)}{\eps^2}\right)$, with probability at least $1-\beta$ outputs $\hat \mu$ such that $$|\hat \mu - \mu| = O\left(\tfrac{\sigma}{\eps}
\sqrt{\tfrac{\log\left(1/\beta\right)\log(n)}{n}}\right).$$
\end{theorem}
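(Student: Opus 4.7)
The plan is to decompose the argument into privacy and accuracy, then split the accuracy claim by round. Privacy is the easier half: by construction each user interacts with the protocol once and applies a local randomizer---randomized response in the first round and a suitably scaled Laplace mechanism in the second round---so $(\eps,0)$-local privacy follows from the privacy of each primitive together with the fact that the analyst's downstream operations are postprocessing. No composition across users is needed because of sequential interactivity with one message per user.

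For round one, I would reuse the binary-search-on-bits scaffolding of \hatmuone~and \aggonek, but instantiated over a larger range of scales $\L$ since $\mu$ may lie anywhere and the relevant unit $\sigma$ is unknown. Concretely, I would extend $\lmin$ down to $\lfloor \log(\sigma_{\min}) \rfloor$ and let round one also produce a constant-factor estimate $\hat \sigma$ of $\sigma$. The mean component of the analysis mirrors Theorem~\ref{thm:kv_two}: for each scale $j$, Gaussian concentration guarantees that almost all $\lfloor x_i/2^j \rfloor \mod 4$ values collide in a single bin when $\mu$ is far from every multiple of $2^j$; the debiased histograms $\hat H_1^j$ are within $\psi$ of their expectations with probability $1-\beta/4$ by a Chernoff bound on the randomized response noise across $|\L| = O(\log(\sigma_{\max}/\sigma_{\min}) + \log(\mu))$ scales, justifying the $n/\log(n)$ hypothesis. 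The variance component can be handled by running, in parallel within round one, an analogous bit-wise randomized response on $\lfloor \log_2 |x_i - m| \rfloor$ for a coarse reference point $m$ (or any equivalent scale test), yielding $\hat \sigma$ with $\hat \sigma/\sigma \in [1/C, C]$ for some constant $C$ with probability $1-\beta/4$. These two subtasks each use $\Theta(n/\log(n))$ users via disjoint subgroups.

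For round two, given $|\hat \mu_1 - \mu| = O(\sigma)$ and $\hat \sigma = \Theta(\sigma)$, each user $i \in U_2$ computes $z_i = (x_i - \hat \mu_1)/\hat \sigma$, clips it to $[-T, T]$ with $T = \Theta(\sqrt{\log(n/\beta)})$, and releases $\tilde z_i = \text{clip}(z_i) + \Lap{2T/\eps}$. Because $|z_i| \leq T$ with probability $\geq 1 - \beta/(2n)$ by Gaussian tail bounds, a union bound over $n/2$ users shows no point is clipped except with probability $\beta/4$. The average $\bar z = \tfrac{2}{n}\sum_{i \in U_2} \tilde z_i$ is an unbiased (on the unclipped event) estimator of $(\mu - \hat \mu_1)/\hat \sigma$; its deviation is governed by the sum of the per-user Laplace noise of variance $8T^2/\eps^2$ plus sampling variance $O(1)$. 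A Chernoff/Bernstein-type bound yields $|\bar z - (\mu - \hat \mu_1)/\hat \sigma| = O\!\left(\tfrac{T}{\eps}\sqrt{\log(1/\beta)/n}\right)$ with probability $1-\beta/4$. Rescaling by $\hat \sigma = \Theta(\sigma)$ and adding back $\hat \mu_1$ yields the claimed $O\!\left(\tfrac{\sigma}{\eps}\sqrt{\log(1/\beta)\log(n)/n}\right)$ bound.

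I expect the main obstacle to be cleanly combining the errors across rounds while only paying one $\log(n)$ factor. In particular, the round-one variance estimate is only correct up to a constant factor, so the round-two clipping threshold is set in terms of $\hat \sigma$ rather than $\sigma$, and one must carefully argue that the event ``$\hat \sigma \in [\sigma/C, C\sigma]$'' ensures both that the clip almost never triggers and that the final rescaling inflates the error by at most a constant. A secondary subtlety is that the round-one estimate of $\mu$ must be $O(\sigma)$-accurate despite not knowing $\sigma$: this requires stopping the binary search at a scale adapted to $\hat \sigma$, so the two estimations in round one cannot be fully decoupled and must be analyzed conditionally on a clean event for $\hat \sigma$. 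Everything else reduces to standard Gaussian tail, Chernoff, and Laplace-concentration calculations that I would defer to the appendix.
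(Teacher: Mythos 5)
Your overall architecture matches the paper's: round one produces an $O(\sigma)$-accurate $\hat\mu_1$ and a constant-factor $\hat\sigma$, round two clips to a window of width $\Theta(\hat\sigma\sqrt{\log n})$ around $\hat\mu_1$, adds Laplace noise scaled to that width, and averages. The privacy argument and the round-two Laplace/clipping analysis are essentially the paper's (the paper tolerates up to $2\sqrt{n}$ clipped users and bounds their contribution by $\tfrac{2}{n}|V^c|\cdot|I|$ rather than union-bounding so that no user clips; your choice of $T=\Theta(\sqrt{\log(n/\beta)})$ gives a $\log(n/\beta)$ in place of $\log(n)$, which only matters for $\beta$ sub-polynomial in $n$).

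The genuine gap is your round-one variance estimator. You propose randomized response on $\lfloor \log_2|x_i - m|\rfloor$ for ``a coarse reference point $m$.'' For this quantity to reveal $\sigma$, the users' values $|x_i - m|$ must be on the order of $\sigma$, i.e.\ $m$ must already be within $O(\sigma)$ of $\mu$; if $|m-\mu|\gg\sigma$ then $\log_2|x_i-m|$ concentrates around $\log_2|m-\mu|$ for every user and carries essentially no information about $\sigma$. But no such $m$ is available in round one --- obtaining one is exactly the other half of the round-one task --- so as stated this subroutine either fails or forces a third round. You flag a coupling issue between the two round-one estimates, but you identify the dependence in the wrong direction (you worry that the $\mu$-search needs $\hat\sigma$ to know where to stop; in fact the stopping rule can be intrinsic, triggered when the histogram fails to concentrate). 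The missing idea is the paper's: the \emph{same} responses $\lfloor x_i/2^j\rfloor \bmod 4$ used for the binary search on $\mu$ can be re-aggregated with adjacent bins paired, $(a,a{+}1 \bmod 4)$, and because of the modular structure this aggregation is location-free --- for any $\mu$, when $2^j\gg\sigma$ the mass concentrates in one paired bin and when $2^j\ll\sigma$ it spreads over all of them --- so the scale $j$ at which the histograms $\hat H_1^{\lmax},\hat H_1^{\lmax-1},\ldots$ transition from concentrated to unconcentrated yields $\hat\sigma\in[\sigma,8\sigma]$ in the same single round, with no reference point needed. Your ``or any equivalent scale test'' hedge does not supply this; without it the one-round-per-half structure of the protocol does not go through.
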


\begin{algorithm}[h]
	\caption{\algo}
	\begin{algorithmic}[1]
		\REQUIRE $\eps, k_1, \L_1, n, \sigma, U_1, U_2$
		\FOR{$j \in \L_1$}
			\FOR{user $i \in U_1^j$}
				\STATE User $i$ outputs $\tilde y_i \gets \rrone(\eps,i,j)$
			\ENDFOR
		\ENDFOR \COMMENT{End of round 1}
		\STATE Analyst computes $\hat H_1 \gets \aggone(\eps, \L_1, U_1)$
		\STATE Analyst computes
		$\hat \sigma \gets \hatsigma(\beta, \eps, \hat H_1, k_1, \L_1)$
		\STATE Analyst computes $\hat H_2 \gets \aggonek(\eps, k_1, \L_1, U_1)$
		\STATE Analyst computes $\hat \mu_1 \gets \hatmuone(\beta, \eps, \hat H_2, k_1, \L_1)$
		\STATE Analyst computes	$I \gets [\hat \mu_1 \pm \hat \sigma(2 + \sqrt{\ln(4n)})]$	
		\FOR{user $i \in U_2$}
			\STATE User $i$ outputs $\tilde y_i \gets \rrtwo(\eps, i, I)$
		\ENDFOR \COMMENT{End of round 2}
		\STATE Analyst outputs $\hat \mu_2 \gets \tfrac{2}{n}\sum_{i \in U_2} \tilde y_i$ 
		\ENSURE Analyst estimate $\hat \mu_2$ of $\mu$
	\end{algorithmic}
\end{algorithm}

\subsubsection{First round of \algo}
Similarly to~\algok , we split $U_1$ into $L_1 = \lfloor n/(2k_1) \rfloor$ subgroups of size $k_1 = \Omega\left(\tfrac{\log(n/\beta)}{\eps^2}\right)$ and define $\lmin = \lfloor \log(\sigma_{\min}) \rfloor$, $\lmax = L_1 + \lmin - 1 \geq \lceil \log(\sigma_{\max}) \rceil$, and $\L_1 = \{\lmin, \lmin+1, \ldots, \lmax\}$, indexing $U_1$ by $\L_1$. 

Also as in \algok , each user $i$ in each subgroup $U_1^j$ publishes a privatized version of $\lfloor x_i/2^j \rfloor \mod 4$. The analyst aggregates them (\aggonek ) into $\hat H_2$ and roughly estimates $\mu$ (\hatmuone ) as in~\algok. However, the analyst also employs a (similar) aggregation (\aggone ) into $\hat H_1$ for estimating $\sigma$ (\hatsigma ). At a high level, because samples from $N(\mu, \sigma^2)$ probably  fall within $3\sigma$ of $\mu$, when $2^j \gg \sigma$ there exist $a, a+1 \mod 4 \in \{0,1,2,3\}$ such that almost all users $i$ have $\lfloor x_i/2^j \rfloor \mod 4 \in \{a,a+1\}$.  The analyst's debiased aggregated histogram $\hat H_1^j$ thus concentrates in at most two adjacent bins when $2^j \gg \sigma$ and spreads over more bins when $2^j \ll \sigma$. By a process like~\hatmuone, examining this transition from concentrated to unconcentrated in $\hat H_1^{\lmax}, \hat H_1^{\lmax-1}, \ldots$ yields a rough estimate of when $2^j \gg \sigma$ versus when $2^j \ll \sigma$. As a result, at the end of round one the analyst obtains $O(\sigma)$-accurate estimates $\hat \sigma$ of $\sigma$ and $\hat \mu_1$ of $\mu$.

\subsubsection{Second round of~\algo}
The analyst now refines their initial estimate of $\mu$. First, the analyst constructs an interval $I$ of size $O(\hat \sigma\sqrt{\log(n)})$ around $\hat \mu_1$. Users in $U_2$ then truncate their values to $I$, add Laplace noise scaled to $|I|$ (the sensitivity of releasing a truncated point), and send the result to the analyst using \rrtwo. The analyst then simply takes the mean of these responses as the final estimate of $\mu$. Its accuracy guarantee follows from concentration of user samples around $\mu$ and Laplace noise around 0. Privacy follows from our use of randomized response and Laplace noise.

We briefly explain our use of Laplace noise rather than CDF comparison. Roughly, when using an estimate $\hat \sigma$ in the centering process, error in $\hat \sigma$ propagates to error in the final estimate $\hat \mu_2$. This leads us to Laplace noise, which better handles the error in $\hat \sigma$ that estimation of $\sigma$ introduces. The cost is the $\sqrt{\log(n)}$ factor that arises from adding Laplace noise scaled to $|I|$. Our choice of $|I|$ --- constructed to contain not only $\mu$ but the points of $\Omega(n)$ users --- thus strikes a deliberate balance. $I$ is both large enough to cover most users (who would otherwise truncate too much and skew the responses) and small enough to not introduce much noise from privacy (as noise is scaled to $\Lap{|I|/\eps}$).

\subsection{One-round Protocol}
\label{subsec:uv_one}
We now provide a one-round version of \algo, \algoone .

\begin{theorem}
\label{thm:uv_one}
	One-round protocol \algoone~satisfies $(\eps,0)$-local differential privacy for $x_1, \ldots, x_n$ and, if $x_1, \ldots, x_n \sim_{iid} N(\mu,\sigma^2)$ where $\sigma$ is unknown but bounded in known $[\sigma_{\min},\sigma_{\max}]$ and $\tfrac{n}{\log(n)} = \Omega\left(\frac{\left[\log\left(\frac{\sigma_{\max}}{\sigma_{\min}}+1\right) + \log(\mu)\right]\log\left(\frac{1}{\beta}\right)}{\eps^2}\right)$, with probability at least $1-\beta$ outputs $\hat \mu$ with $$|\hat \mu - \mu| = O\left(\tfrac{\sigma}{\eps}	\sqrt{\tfrac{\log\left(\tfrac{\sigma_{\max}}{\sigma_{\min}} + 1\right)\log\left(1/\beta\right)\log^{3/2}(n)}{n}}\right).$$
\end{theorem}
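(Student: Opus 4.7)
The plan is to design \algoone~by parallelizing \algo~in the same spirit that \algokone~parallelizes \algok, but now over a two-dimensional grid of candidate round-one outputs: candidate scales $\hat\sigma^{\text{guess}}$ and, for each scale, candidate centers $\hat\mu_1^{\text{guess}}$. In \algo~the round-two randomizer \rrtwo~uses exactly two pieces of round-one information: an $O(\sigma)$-accurate estimate $\hat\mu_1$ of $\mu$ and an $O(\sigma)$-accurate estimate $\hat\sigma$ of $\sigma$, which together determine the truncation interval $I = [\hat\mu_1 \pm \hat\sigma(2+\sqrt{\ln(4n)})]$. My approach is to have each subgroup of $U_2$ execute \rrtwo~with a pre-specified $(\hat\mu_1^{\text{guess}},\hat\sigma^{\text{guess}})$, so that after both rounds the analyst can first compute the true $(\hat\mu_1,\hat\sigma)$ from $U_1$'s responses (exactly as in \algo) and then select the subgroup whose guess is closest. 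Privacy is immediate, since every user still runs a local randomizer (\rrone~or a fixed-parameter instance of \rrtwo) that does not depend on their own data.

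For the grid, I would use scale candidates $\{2^{\lmin},2^{\lmin+1},\dots,2^{\lmax}\}$ --- a total of $\Theta(\log(\sigma_{\max}/\sigma_{\min}+1))$ values --- since \rrtwo's interval size and Laplace parameter only need to be correct up to a constant factor. For each scale $2^j$, I would use $\Theta(\sqrt{\log(n)})$ shifted centers spaced $\Theta(2^j)$ apart within a range of width $\Theta(2^j\sqrt{\log n})$, paralleling the construction in \algokone. Since user samples concentrate within $O(\sigma\sqrt{\log n})$ of $\mu$, each user would first determine which scale-$j$ candidate center is closest to $x_i$ and respond using that as $\hat\mu_1^{\text{guess}}$, so the subgroup is indexed only by scale, giving $\Theta(\log(\sigma_{\max}/\sigma_{\min}+1))$ scale-subgroups that further split into $\Theta(\sqrt{\log n})$ shift-subgroups of size roughly $n/[\log(\sigma_{\max}/\sigma_{\min}+1)\sqrt{\log n}]$ each.

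The analysis then proceeds in three steps. First, invoke the round-one analysis of \algo~verbatim to conclude that with probability $1-O(\beta)$ the analyst obtains $\hat\sigma = \Theta(\sigma)$ and $|\hat\mu_1-\mu|=O(\sigma)$. Second, argue that with high probability there exists a scale index $j^*$ with $2^{j^*}=\Theta(\hat\sigma)$ and a shift index within that scale whose assumed center is within $O(\sigma)$ of $\hat\mu_1$; by Gaussian concentration, a $1-O(1/n)$ fraction of users in the chosen subgroup centered with this good pair. Third, apply the round-two analysis of \algo~to this subgroup, substituting its reduced size $n' = \Theta(n/[\log(\sigma_{\max}/\sigma_{\min}+1)\sqrt{\log n}])$ for $n$; this turns the \algo~bound $O(\tfrac{\sigma}{\eps}\sqrt{\log(1/\beta)\log(n)/n})$ into exactly the claimed bound $O\bigl(\tfrac{\sigma}{\eps}\sqrt{\log(\sigma_{\max}/\sigma_{\min}+1)\log(1/\beta)\log^{3/2}(n)/n}\bigr)$.

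The main obstacle is to justify that a constant-factor guess of $\sigma$ truly suffices, so that a coarse geometric scale grid works and we do not need a finer grid that would further shrink each subgroup. Concretely, I need to show that the round-two bias and variance calculations from \algo~(truncation bias of $N(\mu,\sigma^2)$ to $I$, plus Laplace noise $\Lap{|I|/\eps}$) remain $O(\sigma\sqrt{\log n})$ even when $|I|$ is built from a $\hat\sigma^{\text{guess}}$ off by a constant factor, which should follow from the sub-Gaussian tails of $N(\mu,\sigma^2)$ and the linearity of the Laplace parameter in $|I|$. A secondary subtlety is handling the small fraction of ``wrongly centered'' users in the chosen subgroup: their contribution must be absorbed into the concentration bound, which is straightforward since their total mass is $O(1/n)$ and their clipped values lie in an interval of size $O(\sigma\sqrt{\log n})$.
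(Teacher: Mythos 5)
Your proposal follows essentially the same route as the paper: parallelize the second round of \algo~over a geometric grid of $\Theta(\log(\sigma_{\max}/\sigma_{\min}+1))$ scale candidates, each with $\Theta(\sqrt{\log n})$ shifted periodic center grids of spacing $\Theta(2^j\sqrt{\log n})$, have each user center on the nearest grid point and add Laplace noise scaled to the grid spacing, and let the analyst select the subgroup matching the $(\hat\mu_1,\hat\sigma)$ recovered from $U_1$, paying $\sqrt{\#\text{subgroups}}$ in accuracy. The two subtleties you flag (a constant-factor $\hat\sigma$ sufficing, and absorbing the $O(\sqrt{k_2})$ wrongly-centered users) are exactly the points the paper's proof handles, and your sketches of how to resolve them are sound.
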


Like \algokone, \algoone~simulates the second round of \algo~simultaneously with its first round. \algoone~splits $U_2$ into subgroups, where each subgroup responds using a \emph{different} interval $I_j$. At the end of the single round the analyst obtains estimates $\hat \mu_1$ and $\hat \sigma$ from users in $U_1$, constructs an interval $I$ from these estimates, and finds a subgroup of $U_2$ where most users employed a similar interval $I_j$. This similarity guarantees that the subgroup's responses yield the same accuracy as the two-round case up to an $O(\text{\# subgroups})$ factor. As in \algokone , we rely on Gaussian concentration and the modulus trick to minimize the number of subgroups. However, this time we parallelize not only over possible values of $\hat \mu_1$ but possible values of $\hat \sigma$ as well. As this parallelization is somewhat involved, we defer its presentation to the Supplement.

In summary, at the end of the round the analyst computes $\hat \mu_1$ and $\hat \sigma$, computes the resulting interval $I^*$, and identifies a subgroup of $U_2$ that responded using an interval $I_j$ similar to $I^*$. This mimics the effect of passing an interval of size $O(\sigma\sqrt{\log(n)})$ around $\hat \mu_1$ to this subgroup and using the truncate-then-Laplace noise method of \algo . The cost, due to the $g = O\left(\left[\log\left(\tfrac{\sigma_{\max}}{\sigma_{\min}}\right) + 1\right]\sqrt{\log(n)}\right)$ subgroups required, is the $1/\sqrt{g}$ reduction in accuracy shown in Theorem~\ref{thm:uv_one}.

\section{Lower Bound}
\label{sec:lower}
We now show that all of our upper bounds are tight up to logarithmic factors. Our argument has three steps: we first reduce our estimation problem to a testing problem, then reduce this testing problem to a purely locally private testing problem, and finally prove a lower bound for this purely locally private testing problem. Taken together, these results show that estimation is hard for sequentially interactive $(\eps,\delta)$-locally private protocols. An extension to fully interactive protocols using recent subsequent work by~\citet{JMNR19} appears in the Supplement.

\begin{theorem}
\label{thm:lb_formal}
	Let $\delta < \min\left(\frac{\epsilon\beta}{60n\ln(5n/2\beta)}, \frac{\beta}{16n\ln(n/\beta)e^{7\eps}}\right)$ and $\eps > 0$. There exists absolute constant $c$ such that if $\A$ is an $(\eps,\delta)$-locally private $(\alpha,\beta)$-estimator for $\est{n,M,\sigma}$ where $M = \sigma/[4(e^\eps-1)\sqrt{2nc}]$ and $\beta <1/16$, then $\alpha \geq M/2 = \Omega\left(\frac{\sigma}{\eps}\sqrt{\frac{1}{n}}\right)$.
\end{theorem}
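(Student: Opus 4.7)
The plan is to follow the three-step blueprint the paper outlines: reduce estimation to a two-point hypothesis test, reduce approximate local privacy to pure local privacy, and then prove a lower bound on the pure locally private test using the SDPI machinery. The target is to show that distinguishing $\mu = M$ from $\mu = -M$ under $(\eps,0)$-local privacy requires $M = \Omega(\sigma/(\eps\sqrt{n}))$, which then back-propagates through the reductions to the desired estimation lower bound.

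For the estimation-to-testing step, I would consider the two-point mixture where $\mu$ is either $+M$ or $-M$ and argue that any $(\alpha,\beta)$-estimator with $\alpha < M/2$ immediately yields a hypothesis tester with error at most $\beta$: simply output $+M$ if $\hat\mu \geq 0$, $-M$ otherwise. So it suffices to prove that no $(\eps,\delta)$-locally private sequentially interactive protocol can distinguish $N(M,\sigma^2)^{\otimes n}$ from $N(-M,\sigma^2)^{\otimes n}$ with probability $\geq 15/16$ whenever $M = O(\sigma/(\eps\sqrt{n}))$.

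For the approximate-to-pure reduction, the standard technique is to argue that for sufficiently small $\delta$ — which is exactly the quantitative condition $\delta < \min(\eps\beta/(60n\ln(5n/2\beta)),\, \beta/(16n\ln(n/\beta)e^{7\eps}))$ appearing in the theorem — any $(\eps,\delta)$-local randomizer can be coupled (on the relevant inputs, which lie in a high-probability region of the Gaussian) with an $(O(\eps),0)$-local randomizer whose output distribution differs by at most $O(\delta)$ in total variation. Summing this deviation over $n$ users and paying a union bound over the typical event $\{|x_i - \mu| \leq \sigma\sqrt{\ln(n/\beta)}\}$ inflates the tester's failure probability only by a small additive term, so it suffices to prove the lower bound against purely locally private testers.

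The heart of the argument is the SDPI lower bound for pure locally private testing. Here I would follow the Braverman--Garg--Ma--Nguyen--Woodruff and Raginsky framework: for any $\eps$-local channel $Q$, the strong data processing inequality gives $D_{\mathrm{KL}}(Q \circ P_1 \,\|\, Q \circ P_2) \leq c(\eps)\, D_{\mathrm{TV}}(P_1,P_2)^2$ with $c(\eps) = O((e^\eps-1)^2)$. Applying this with $P_1 = N(M,\sigma^2)$, $P_2 = N(-M,\sigma^2)$ gives $D_{\mathrm{TV}}(P_1,P_2)^2 = O(M^2/\sigma^2)$, so each user's transcript contributes at most $O((e^\eps-1)^2 M^2/\sigma^2)$ KL. Tensorizing via the chain rule for sequentially interactive protocols (each user's message conditioned on the prior transcript is still the output of an $\eps$-local randomizer applied to an independent sample), the total KL between the two transcript distributions is $O(n(e^\eps-1)^2 M^2/\sigma^2)$. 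Plugging in $M = \sigma/[4(e^\eps-1)\sqrt{2nc}]$ for an appropriate constant $c$ makes this bound strictly less than the Pinsker threshold needed to distinguish with probability $15/16$, completing the lower bound.

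The main obstacle will be the tensorization step, because sequential interactivity means user $i$'s randomizer depends on transcripts $z_1,\ldots,z_{i-1}$ produced from earlier samples. I expect to handle this by conditioning on the prefix and noting that, under both hypotheses, the $i$-th sample is independent of the prefix, so the per-user SDPI still applies pointwise and the chain rule integrates cleanly. A secondary subtlety is being careful that the SDPI constant $c(\eps)$ is tight enough to yield the $1/(e^\eps-1)$ scaling (rather than $1/\eps$) that matches the constant $M$ in the theorem statement, and that the approximate-to-pure reduction does not blow up the privacy parameter by more than a constant factor under the given bounds on $\delta$.
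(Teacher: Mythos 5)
Your proposal is correct and follows the paper's three-step blueprint (estimation $\to$ testing, approximate $\to$ pure privacy, information-theoretic lower bound for the pure test), but the core lower-bound lemma is argued by a genuinely different decomposition. The paper works with mutual information along the Markov chain $V \to X \to Y \to Z$: it first applies the Gaussian strong data processing inequality of \citet{BGMNW16} to get $I(V;Y_i) \leq \tfrac{cM^2}{\sigma^2} I(X_i;Y_i)$, separately bounds $I(X_i;Y_i) \leq 4(e^\eps-1)^2$ via Corollary~1 of \citet{DJW13}, and then tensorizes with the mutual-information chain rule (using independence of the $Y_i$ given $V$) before finishing with Pinsker. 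You instead bound the KL divergence between the two transcript distributions directly, using the KL-contraction property of an $\eps$-private channel, $D_{KL}(Q\circ P_0\,\|\,Q\circ P_1) = O\bigl((e^\eps-1)^2\,\tv{P_0}{P_1}^2\bigr)$ (this is really Theorem~1 of \citet{DJW13} rather than the Gaussian SDPI, despite your attribution), and tensorize with the KL chain rule, correctly noting that conditioning on the prefix fixes user $i$'s randomizer while leaving $x_i$ a fresh sample. Both routes yield the same per-user bound $O(n(e^\eps-1)^2M^2/\sigma^2)$ and hence the same threshold for $M$; yours is the more classical and slightly more direct Duchi--Jordan--Wainwright-style argument, while the paper's factors the bound through $I(X_i;Y_i)$ so that the Gaussian SDPI appears as a modular tool (the methodological point the paper emphasizes and that subsequent work generalizes). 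Two minor remarks: the paper tests $\{0,M\}$ rather than $\{\pm M\}$ (immaterial up to constants), and your approximate-to-pure step does not need any restriction to a high-probability region of the Gaussian --- the reductions the paper invokes (\citet{BNS18} for $\eps \leq 1/4$, \citet{CSUZZ18} for $\eps > 1/4$) convert each single call to an $(\eps,\delta)$-randomizer into a call to an $(O(\eps),0)$-randomizer with small total-variation error on \emph{every} database, so the union bound over typical samples is superfluous.
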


\bibliographystyle{plainnat}
\bibliography{gaussian_local}

\begin{thebibliography}{26}
\providecommand{\natexlab}[1]{#1}
\providecommand{\url}[1]{\texttt{#1}}
\expandafter\ifx\csname urlstyle\endcsname\relax
  \providecommand{\doi}[1]{doi: #1}\else
  \providecommand{\doi}{doi: \begingroup \urlstyle{rm}\Url}\fi

\bibitem[Abowd(2016)]{A16}
John~M. Abowd.
\newblock The challenge of scientific reproducibility and privacy protection
  for statistical agencies.
\newblock Technical report, Census Scientific Advisory Committee, 2016.

\bibitem[Apple(2017)]{A17}
Differential Privacy~Team Apple.
\newblock Learning with privacy at scale.
\newblock Technical report, Apple, 2017.

\bibitem[Avent et~al.(2017)Avent, Korolova, Zeber, Hovden, and
  Livshits]{AKZHL17}
Brendan Avent, Aleksandra Korolova, David Zeber, Torgeir Hovden, and Benjamin
  Livshits.
\newblock Blender: enabling local search with a hybrid differential privacy
  model.
\newblock In \emph{USENIX Security Symposium}, 2017.

\bibitem[Beimel et~al.(2008)Beimel, Nissim, and Omri]{BNO08}
Amos Beimel, Kobbi Nissim, and Eran Omri.
\newblock Distributed private data analysis: Simultaneously solving how and
  what.
\newblock In \emph{International Cryptology Conference (CRYPTO)}, 2008.

\bibitem[Bittau et~al.(2017)Bittau, Erlingsson, Maniatis, Mironov, Raghunathan,
  Lie, Rudominer, Kode, Tinnes, and Seefeld]{BEMMR+17}
Andrea Bittau, \'{U}lfar Erlingsson, Petros Maniatis, Ilya Mironov, Ananth
  Raghunathan, David Lie, Mitch Rudominer, Ushasree Kode, Julien Tinnes, and
  Bernhard Seefeld.
\newblock Prochlo: Strong privacy for analytics in the crowd.
\newblock In \emph{Symposium on Operating Systems Principles (SOSP)}, 2017.

\bibitem[Braverman et~al.(2016)Braverman, Garg, Ma, Nguyen, and
  Woodruff]{BGMNW16}
Mark Braverman, Ankit Garg, Tengyu Ma, Huy~L Nguyen, and David~P Woodruff.
\newblock Communication lower bounds for statistical estimation problems via a
  distributed data processing inequality.
\newblock In \emph{Symposium on the Theory of Computing (STOC)}, 2016.

\bibitem[Bun et~al.(2018)Bun, Nelson, and Stemmer]{BNS18}
Mark Bun, Jelani Nelson, and Uri Stemmer.
\newblock Heavy hitters and the structure of local privacy.
\newblock In \emph{Symposium on Principles of Database Systems (PODS)}, 2018.

\bibitem[Chan et~al.(2011)Chan, Shi, and Song]{CSS11}
T.-H.~Hubert Chan, Elaine Shi, and Dawn Song.
\newblock Private and continual release of statistics.
\newblock \emph{ACM Trans. Inf. Syst. Secur.}, 2011.

\bibitem[Cheu et~al.(2019)Cheu, Smith, Ullman, Zeber, and Zhilyaev]{CSUZZ18}
Albert Cheu, Adam Smith, Jonathan Ullman, David Zeber, and Maxim Zhilyaev.
\newblock Distributed differential privacy via mixnets.
\newblock In \emph{International Conference on Theory and Application of
  Cryptographic Techniques (EUROCRYPT)}, 2019.

\bibitem[Daniely and Feldman(2019)]{DF18}
Amit Daniely and Vitaly Feldman.
\newblock Learning without interaction requires separation.
\newblock In \emph{Neural Information and Processing Systems (NeurIPS)}, 2019.

\bibitem[Ding et~al.(2017)Ding, Kulkarni, and Yekhanin]{DKY17}
Bolin Ding, Janardhan Kulkarni, and Sergey Yekhanin.
\newblock Collecting telemetry data privately.
\newblock In \emph{Neural Information Processing Systems (NIPS)}, 2017.

\bibitem[Duchi and Rogers(2019)]{DR19}
John Duchi and Ryan Rogers.
\newblock Lower bounds for locally private estimation via communication
  complexity.
\newblock In \emph{Conference on Learning Theory (COLT)}, 2019.

\bibitem[Duchi et~al.(2013)Duchi, Jordan, and Wainwright]{DJW13}
John~C Duchi, Michael~I Jordan, and Martin~J Wainwright.
\newblock Local privacy and statistical minimax rates.
\newblock In \emph{Foundations of Computer Science (FOCS)}, 2013.

\bibitem[Dwork et~al.(2006)Dwork, McSherry, Nissim, and Smith]{DMNS06}
Cynthia Dwork, Frank McSherry, Kobbi Nissim, and Adam Smith.
\newblock Calibrating noise to sensitivity in private data analysis.
\newblock In \emph{Theory of Cryptography Conference (TCC)}, 2006.

\bibitem[Dwork et~al.(2014)Dwork, Roth, et~al.]{DR14}
Cynthia Dwork, Aaron Roth, et~al.
\newblock The algorithmic foundations of differential privacy.
\newblock \emph{Foundations and Trends{\textregistered} in Theoretical Computer
  Science}, 2014.

\bibitem[Erlingsson et~al.(2014)Erlingsson, Pihur, and Korolova]{EPK14}
{\'U}lfar Erlingsson, Vasyl Pihur, and Aleksandra Korolova.
\newblock Rappor: Randomized aggregatable privacy-preserving ordinal response.
\newblock In \emph{Conference on Computer and Communications Security (CCS)},
  2014.

\bibitem[Gaboardi et~al.(2019)Gaboardi, Rogers, and Sheffet]{GRS18}
Marco Gaboardi, Ryan Rogers, and Or~Sheffet.
\newblock Locally private mean estimation: Z-test and tight confidence
  intervals.
\newblock In \emph{International Conference on Artificial Intelligence and
  Statistics (AISTATS)}, 2019.

\bibitem[Joseph et~al.(2019)Joseph, Mao, Neel, and Roth]{JMNR19}
Matthew Joseph, Jieming Mao, Seth Neel, and Aaron Roth.
\newblock The role of interactivity in local differential privacy.
\newblock In \emph{Foundations of Computer Science (FOCS)}, 2019.

\bibitem[Kamath et~al.(2019)Kamath, Li, Singhal, and Ullman]{KLSU18}
Gautam Kamath, Jerry Li, Vikrant Singhal, and Jonathan Ullman.
\newblock Privately learning high-dimensional distributions.
\newblock In \emph{Conference on Learning Theory (COLT)}, 2019.

\bibitem[Karwa and Vadhan(2018)]{KV17}
Vishesh Karwa and Salil Vadhan.
\newblock {Finite Sample Differentially Private Confidence Intervals}.
\newblock In \emph{Innovations in Theoretical Computer Science Conference
  (ITCS)}, 2018.

\bibitem[Kasiviswanathan et~al.(2011)Kasiviswanathan, Lee, Nissim,
  Raskhodnikova, and Smith]{KLNRS11}
Shiva~Prasad Kasiviswanathan, Homin~K Lee, Kobbi Nissim, Sofya Raskhodnikova,
  and Adam Smith.
\newblock What can we learn privately?
\newblock \emph{SIAM Journal on Computing}, 2011.

\bibitem[Kuo et~al.(2018)Kuo, Chiu, Kifer, Hay, and Machanavajjhala]{KCKHM18}
Yu-Hsuan Kuo, Cho-Chun Chiu, Daniel Kifer, Michael Hay, and Ashwin
  Machanavajjhala.
\newblock Differentially private hierarchical count-of-counts histograms.
\newblock In \emph{International Conference on Very Large Databases (VLDB)},
  2018.

\bibitem[Raginsky(2016)]{R16}
Maxim Raginsky.
\newblock Strong data processing inequalities and $\phi$-sobolev inequalities
  for discrete channels.
\newblock \emph{IEEE Transactions on Information Theory}, 62\penalty0
  (6):\penalty0 3355--3389, 2016.

\bibitem[Smith et~al.(2017)Smith, Thakurta, and Upadhyay]{STU17}
Adam Smith, Abhradeep Thakurta, and Jalaj Upadhyay.
\newblock Is interaction necessary for distributed private learning?
\newblock In \emph{Symposium on Security and Privacy (SP)}, 2017.

\bibitem[Ullman(2018)]{U18}
Jonathan Ullman.
\newblock Tight lower bounds for locally differentially private selection.
\newblock \emph{arXiv preprint arXiv:1802.02638}, 2018.

\bibitem[Vadhan(2017)]{V17}
Salil Vadhan.
\newblock The complexity of differential privacy.
\newblock In \emph{Tutorials on the Foundations of Cryptography}, pages
  347--450. Springer, 2017.

\end{thebibliography}

\section{Proofs from Section~\ref{subsec:kv_two}}
\label{subsec:kv_two_supp}
We start with pseudocode for \rrone .

\begin{algorithm}[H]
	\caption{\rrone}
	\begin{algorithmic}[1]
		\REQUIRE $\eps, i, j$
		\STATE $y_i \gets \lfloor x_i/2^j \rfloor \mod 4$
		\IF{$p \sim_U [0,1] \leq \tfrac{e^\eps}{e^\eps+3}$}
			\STATE User $i$ publishes $\tilde y_i \gets y_i$
		\ELSE
			\STATE User $i$ publishes $\tilde y_i \sim_u (\{0,1,2,3\} \backslash \{y_i\})$
		\ENDIF
		\ENSURE Private user estimate $\tilde y_i$ of $\mu(j)$
		\end{algorithmic}
\end{algorithm}

Next, we prove the privacy guarantee for \algok .

\begin{theorem}
\label{thm:algok_privacy}
	\algok~satisfies $(\eps,0)$-local differential privacy for $x_1, \ldots, x_n$.
\end{theorem}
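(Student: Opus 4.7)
The plan is to verify that every user invokes a single $\eps$-local randomizer, and then appeal to sequential interactivity. Because each user $i \in U_1 \cup U_2$ sends exactly one message over the whole protocol, it suffices to show that the local randomizer applied to $x_i$ is an $\eps$-local randomizer, treating any quantity computed from other users' messages as side information that is independent of $x_i$.

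First I would handle users in $U_1$. User $i \in U_1^j$ applies $\rrone(\eps,i,j)$, which computes the deterministic function $y_i = \lfloor x_i/2^j \rfloor \bmod 4$ and then releases $y_i$ with probability $e^{\eps}/(e^{\eps}+3)$ or a uniform element of $\{0,1,2,3\}\setminus\{y_i\}$ otherwise. For any two inputs $x_i,x_i'$ and any output $a \in \{0,1,2,3\}$, the probability of output $a$ is either $e^\eps/(e^\eps+3)$ (if $a$ equals the deterministic value) or $1/(e^\eps+3)$ (otherwise), so the ratio of output probabilities is bounded by $e^\eps$. This is just $4$-ary randomized response and is $(\eps,0)$-differentially private. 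The subgroup index $j$ is chosen by the analyst before seeing $x_i$, so it is public side information that does not affect privacy.

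Next I would handle users in $U_2$. User $i \in U_2$ applies $\rrtwok(\eps,i,\hat\mu_1,\sigma)$, which deterministically computes $y_i = \sgn((x_i-\hat\mu_1)/\sigma) \in \{-1,+1\}$ and then outputs $y_i$ or $-y_i$ with probabilities $e^\eps/(e^\eps+1)$ and $1/(e^\eps+1)$. This is binary randomized response and is $(\eps,0)$-DP by the same one-line calculation. The key point is that $\hat\mu_1$ is computed from the messages of users in $U_1$ alone and is therefore independent of $x_i$; conditioned on any fixed value of $\hat\mu_1$, the randomizer applied to $x_i$ is $\eps$-LDP, and by post-processing this suffices.

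Finally I would assemble these into the full protocol statement. Since the protocol is sequentially interactive and every user's single message is produced by an $\eps$-local randomizer (where the choice of randomizer depends only on public transcript information independent of that user's data), \algok{} is $(\eps,0)$-locally private by the definition of sequentially interactive local privacy. I do not expect a substantive obstacle: the only subtlety worth flagging explicitly is that the dependence of the round-$2$ randomizer on $\hat\mu_1$ is benign because $\hat\mu_1$ is a function of round-$1$ messages, which are in turn functions of the private data of \emph{other} users.
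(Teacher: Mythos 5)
Your proposal is correct and follows essentially the same route as the paper's proof: reduce to showing each single-use randomizer (\rrone{} as $4$-ary randomized response, \rrtwok{} as binary randomized response) satisfies the $e^\eps$ output-ratio bound, with sequential interactivity guaranteeing each user speaks only once. Your explicit remark that the round-two randomizer's dependence on $\hat\mu_1$ is benign (since $\hat\mu_1$ is a function of other users' messages) is a useful clarification the paper leaves implicit, but it is not a different argument.
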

\begin{proof}
	As \algok~is sequentially interactive, each user only produces one output. It therefore suffices to show that each randomized response routine used in \algok~is $(\eps,0)$-locally private. In \rrone, for any possible inputs $x, x'$ and output $y$ we have
	\[
		\frac{\P{\rrone(x) =y}}{\P{\rrone(x') = y}} \leq
		\frac{e^\eps/(e^\eps+3)}{1/(e^\eps+3)} \leq e^\eps
	\]
	so \rrone~is $(\eps,0)$-locally private. \rrtwok~is $(\eps,0)$-locally private by similar logic. 
\end{proof}

We now prove the accuracy guarantee for \algok . First, recall that $\hat H_1$ is the aggregation (via \aggonek) of user responses (via \rrone). Let $H_1$ be the ``true'' histogram, $H_1^j(a) = |\{y_i \mid i \in U_1^j, y_i = a\}|$ for all $a \in \{0,1,2,3\}$ and $j \in \L$. Since the analyst only has access to $\hat H_1$, we need to show that $\hat H_1$ and $H_1$ are similar.

\begin{lemma}
\label{lem:hhk}
	With probability at least $1-\beta$, for all $j \in \L$,
	\[
		||\hat H_1^j - H_1^j||_\infty \leq
		\left(\tfrac{\eps+4}{\eps\sqrt{2}}\right) \cdot \sqrt{k\ln(8L/\beta)}.
	\]
\end{lemma}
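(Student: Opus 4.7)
\textbf{Proof proposal for Lemma~\ref{lem:hhk}.} The plan is to show concentration of the debiased histogram $\hat H_1^j$ around the true histogram $H_1^j$ by analyzing \rrone~as a randomized response and applying Hoeffding's inequality, followed by a union bound over $j \in \L$ and the four symbols in $\{0,1,2,3\}$.

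First I would fix $j \in \L$ and $a \in \{0,1,2,3\}$ and compute the per-user response probabilities. For user $i \in U_1^j$ with true value $y_i = \lfloor x_i / 2^j \rfloor \mod 4$, the randomized response \rrone~outputs $\tilde y_i = a$ with probability $e^\eps/(e^\eps+3)$ if $y_i = a$ and with probability $1/(e^\eps+3)$ otherwise (since the $3/(e^\eps+3)$ ``noise mass'' is split uniformly over the three remaining symbols). Hence
\[
    \mathbb{E}\!\left[\ind(\tilde y_i = a) \mid y_i\right] = \frac{e^\eps - 1}{e^\eps + 3}\,\ind(y_i = a) + \frac{1}{e^\eps + 3}.
\]
Conditioning on $y_1, \ldots, y_k$ (which are deterministic functions of the fixed $x_i$'s), the indicators $\ind(\tilde y_i = a)$ are independent. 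Summing and applying the debiasing formula in \aggonek~yields $\mathbb{E}[\hat H_1^j(a) \mid y_1, \ldots, y_k] = H_1^j(a)$, so $\hat H_1^j(a)$ is an unbiased estimator of $H_1^j(a)$.

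Next I would apply Hoeffding's inequality to the sum $C^j(a)$ of $k$ independent $[0,1]$-valued indicators: for any $t > 0$,
\[
    \Pr\!\left[\,|C^j(a) - \mathbb{E}[C^j(a)]| \geq t\,\right] \leq 2\exp\!\left(-\tfrac{2 t^2}{k}\right).
\]
Since $\hat H_1^j(a) - H_1^j(a) = \tfrac{e^\eps + 3}{e^\eps - 1}\,(C^j(a) - \mathbb{E}[C^j(a)])$, the event $|\hat H_1^j(a) - H_1^j(a)| \geq \psi$ corresponds to $|C^j(a) - \mathbb{E}[C^j(a)]| \geq \psi \cdot \tfrac{e^\eps - 1}{e^\eps + 3}$. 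Choosing $\psi = \tfrac{e^\eps+3}{e^\eps-1}\sqrt{\tfrac{k \ln(8L/\beta)}{2}}$ makes the Hoeffding tail at most $\beta/(4L)$, and union-bounding over the $L$ values of $j$ and the $4$ symbols $a \in \{0,1,2,3\}$ produces an overall failure probability of at most $\beta$.

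Finally I would convert the constant to the form in the statement using the elementary inequality $e^\eps - 1 \geq \eps$ for $\eps \geq 0$, which gives
\[
    \frac{e^\eps + 3}{e^\eps - 1} = 1 + \frac{4}{e^\eps - 1} \leq 1 + \frac{4}{\eps} = \frac{\eps + 4}{\eps},
\]
so $\psi \leq \tfrac{\eps + 4}{\eps \sqrt 2}\sqrt{k \ln(8L/\beta)}$, matching the bound. The argument is essentially routine; the only real bookkeeping obstacle is tracking the $(e^\eps+3)/(e^\eps-1)$ rescaling from debiasing through the Hoeffding bound and producing a clean $(\eps+4)/(\eps\sqrt 2)$ constant that is well-behaved as $\eps \to 0$.
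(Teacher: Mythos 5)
Your proposal is correct and follows essentially the same route as the paper's proof: compute $\mathbb{E}[C^j(a)] = \tfrac{(e^\eps-1)H_1^j(a)+k}{e^\eps+3}$, apply a two-sided Chernoff/Hoeffding bound with deviation $\sqrt{k\ln(8L/\beta)/2}$ to get failure probability $\beta/4L$ per event, rescale by the debiasing factor $\tfrac{e^\eps+3}{e^\eps-1} < \tfrac{\eps+4}{\eps}$, and union bound over the $4L$ pairs $(j,a)$. No gaps.
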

\begin{proof}
	Choose $a \in \{0,1,2,3\}$ and $j \in \L$. 
	$\E{C^j(a)} = \tfrac{H_1^j(a)e^\eps}{e^\eps+3} + \tfrac{k-H_1^j(a)}{e^\eps+3}
	= \tfrac{H_1^j(a)(e^\eps-1) + k}{e^\eps+3}$, so by a pair of
	Chernoff bounds on the $k$ users in $U_1^j$, with probability at least
	$1-\beta/4L$,
	\[
		|C^j(a) - \tfrac{H_1^j(a)(e^\eps-1) + k}{e^\eps+3}|
		\leq \sqrt{k\ln(8L/\beta)/2}.
	\]
	Then since $\hat H_1^j(a) = \frac{e^\eps+3}{e^\eps-1} \cdot
	\left(C^j(a) - \frac{k}{e^\eps+3}\right)$, this implies
	\[
		|\hat H_1^j(a) - H_1^j(a)| \leq \frac{e^\eps+3}{e^\eps-1} \cdot
		\sqrt{k\ln(8L/\beta)/2} <
		\left(\tfrac{\eps+4}{\eps\sqrt{2}}\right) \cdot \sqrt{k\ln(8L/\beta)}
	\]
	where the last step uses $\tfrac{e^\eps+3}{e^\eps-1} < \tfrac{\eps+4}{\eps}$.
	Union bounding over $a \in \{0,1,2,3\}$ and all $L$ groups $U_1^j$ completes the
	proof.
\end{proof}

Next, we show how the analyst uses $\hat H_1$ to estimate $\mu$ through \hatmuone. Intuitively, in subgroup $U_1^j$ when user responses concentrate in a single bin $\mod 4$, this suggests that $\mu$ lies in the corresponding bin. In the other direction, when user responses do not concentrate in a single bin, users with points near $\mu$ must spread out over multiple bins, suggesting that $\mu$ lies near the boundary between bins. We formalize this intuition in \hatmuone~and Lemma~\ref{lem:hatmuonek}.

\begin{lemma}
\label{lem:hatmuonek}
	Conditioned on the success of the preceding lemmas, with probability at
	least $1-\beta$, $|\hat \mu_1 - \mu| \leq 2\sigma$.
\end{lemma}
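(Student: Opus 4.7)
The plan is to establish an invariant maintained throughout the while loop in \hatmuone: at the top of each iteration, $\mu \in I_j$. Once this invariant holds when the loop exits, I would argue separately that (i) when the loop exits at some level $j$ with $j \geq \lmin$, the level $2^j$ is itself $O(\sigma)$, so $I_j$ has length $O(\sigma)$ and any point of it is close to $\mu$; and (ii) when the loop runs all the way down to $\lmin$, the interval $I_{\lmin}$ has length $2^{\lmin} \leq \sigma$ by definition of $\lmin$, so again any point of it is close to $\mu$. The final output $c^*2^j$ is then in $I_j$, which gives the desired $2\sigma$ bound (with room to spare for the ``off-by-one'' possibility that $c^*2^j$ sits at the endpoint of $I_j$).

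To carry out the invariant step, I would combine Lemma~\ref{lem:hhk} (which gives $\|\hat H_1^j - H_1^j\|_\infty \leq \psi$) with a Gaussian concentration / binomial concentration argument. Each user in $U_1^j$ contributes independently to $H_1^j(a)$ with probability $p_j(a) = \Pr_{X \sim N(\mu,\sigma^2)}[\lfloor X/2^j \rfloor \bmod 4 = a]$, so $H_1^j(a)$ concentrates at $k p_j(a)$. The analytic facts I need are: if $\mu$ lies strictly inside a bin of width $2^j$ at distance $\Omega(\sigma)$ from both endpoints, then a single residue $a^\star$ has $p_j(a^\star) \approx 1$; if $\mu$ lies near a boundary between two adjacent bins, mass splits across (at most) two adjacent residues $\bmod\,4$, each of which is $\leq \frac{1}{2} + o(1)$. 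In particular, for any Gaussian, no residue can carry mass bounded away from $\frac{1}{2}$ on both sides of that threshold except in those two regimes, which is exactly what the cutoff $0.52k + \psi$ is calibrated for.

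With those facts in hand, the binary-search step works as follows. When the loop continues (max exceeds $0.52k + \psi$), Lemma~\ref{lem:hhk} implies $H_1^j(M_1(j)) > 0.52k$, which by the concentration statement forces $\mu$ to lie in the level-$j$ bin whose label $\bmod\,4$ equals $M_1(j)$; since that bin is uniquely determined within the current $I_j$ by the residue (the window contains at most four level-$j$ bins, one per residue class), the update $I_{j-1} \gets [c2^j,(c+1)2^j]$ preserves $\mu \in I_{j-1}$. When the loop stops at some $j \geq \lmin$, the max is below $0.52k + \psi$, which rules out the ``far from boundary'' regime and thereby forces $2^j = O(\sigma)$; picking $c^*$ among the top two residues $M_1(j), M_2(j)$ ensures $c^*2^j$ lies within the union of the (at most two) bins actually receiving mass from $\mu$, hence within $O(\sigma)$ of $\mu$.

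The main obstacle I expect is calibrating the Gaussian concentration step quantitatively enough that the threshold $0.52k + \psi$ cleanly separates the ``concentrated in one residue'' and ``spread across two adjacent residues'' regimes, with a margin larger than the statistical fluctuation $\psi$ from Lemma~\ref{lem:hhk} and the binomial fluctuation of $H_1^j(a)$ around $k p_j(a)$. Concretely, I need that when $2^j$ is, say, at least a few $\sigma$, one of the $p_j(a)$ is bounded above $0.52 + \Theta(1)$ (so the loop always continues), while when $2^j = O(\sigma)$ the maximum $p_j(a)$ can fall below $0.52$, so the loop exits at the right scale. This requires a careful case split based on the position of $\mu$ relative to the nearest multiple of $2^j$, together with Gaussian tail bounds to control the contribution from non-adjacent bins; it is also where the ``mod $4$'' choice (rather than mod $2$) pays off, by guaranteeing a far bin whose residue can never be confused with $\mu$'s.
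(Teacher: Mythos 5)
Your loop invariant ($\mu \in I_j$ at the top of each iteration) is exactly the paper's Claim~1, and your handling of the case where the loop runs all the way down to $\lmin$ is also correct. The genuine gap is in your case (i): you assert that the loop exiting at level $j$ (i.e.\ $\max_a \hat H_1^j(a) < 0.52k+\psi$) ``forces $2^j = O(\sigma)$,'' so that $I_j$ is short and any point of it works. This is false. The exit condition only rules out the regime where a single residue carries probability mass bounded above $0.52$; it does \emph{not} rule out the boundary regime, which occurs at every scale. Concretely, if $\mu = 2^j$ with $2^j = 100\sigma$, then at level $j+1$ the mass concentrates in one residue (loop continues), but at level $j$ the mass splits roughly $50/50$ between residues $0$ and $3$, the max falls below $0.52k+\psi$, and the loop exits with $|I_j| = 2^{j+1} = 200\sigma$. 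In that situation ``any point of $I_j$'' can be $100\sigma$ from $\mu$, so the claimed bound does not follow from the invariant plus the interval length.

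Your closing remark --- that $c^*2^j$ lies in the union of the (at most two) bins receiving mass, ``hence within $O(\sigma)$ of $\mu$'' --- does not repair this, because that union has diameter $2^{j+1}$, which is only $O(\sigma)$ in the regime you cannot guarantee. What is actually needed (and what the paper's Claim~2 supplies) is an argument that when the loop exits with $2^j > \sigma$, the \emph{specific} output point $c^*2^j$ --- the largest multiple of $2^j$ in $I_j$ whose residue is $M_1(j)$ or $M_2(j)$ --- coincides up to $2\sigma$ with the bin boundary that $\mu$ is near. The paper proves this by first showing that the residue $c$ of the bin containing $\mu$ and one of its neighbors $c\pm 1$ occupy the top two histogram slots, and then deriving a contradiction from $|c^*2^j - \mu| > 2\sigma$: in that event one of the residues $c^*$ or $c^*-1$ would receive mass at most $\approx 0.05$, contradicting its membership in $\{M_1(j), M_2(j)\}$. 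This step, which pins down the boundary point rather than the bin, is the missing ingredient in your proposal.
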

\begin{proof}
	Recall the definitions of $\psi$, $M_1(j)$, and $M_2(j)$ from the pseudocode
	for \hatmuone:
	$\psi = \left(\tfrac{\eps+4}{\eps\sqrt{2}}\right) \cdot \sqrt{k\ln(8L/\beta)}$,
	$M_1(j) = \arg \max_{a \in \{0,1,2,3\}} \hat H_1^j(a)$, and
	$M_2(j) = \arg \max_{a \in \{0,1,2,3\} - \{M_1(j)\}} \hat H_1^j(a)$. We start
	by proving two useful claims.
	
	\underline{Claim 1}: With probability at least $1-\beta/5$, for all
	$j \in \L$ where $2^j > \sigma$, if $j' = \lmax, \lmax-1, \ldots, j+1$ all
	have $\hat H_1^{j'}(M_1(j)) \geq 0.52k + \psi$, then $\mu \in I_j$.
	
	To see why, suppose $2^j > \sigma$ and let $x \sim N(\mu,\sigma^2)$. Recall
	the Gaussian CDF $F(x) = \tfrac{1}{2}
	\left[1 + \text{erf}\left(\tfrac{x-\mu}{\sigma\sqrt{2}}\right)\right]$. Then
	for any $a \not \equiv \lfloor \mu/2^j \rfloor \mod 4$
	\[
		\P{\lfloor x/2^j \rfloor \equiv a \mod 4}
		\leq \P{x \not \in [\mu, \mu+3 \cdot 2^j)}
		< \P{x \not \in [\mu,\mu+3\sigma)} < 0.51
	\]
	where the second inequality uses $2^j > \sigma$. Thus by a binomial
	Chernoff bound, the assumption $k > 5000\ln(5L/\beta)$, and
	Lemma~\ref{lem:hhk}, with probability $\geq 1-\beta/5L$,
	$\hat H_1^j(a) < 0.52k + \psi$. Therefore if for some $a$ we have 
	$\hat H_1^j(a) \geq 0.52k + \psi$, $a \equiv \lfloor \mu/2^j \rfloor \mod 4$.
	Moreover, if $\mu \in I_j$ then letting $c$ be the (unique) integer such that
	$c \equiv M_1(j) \mod 4$ and $c2^j \in I_j$ (since $I_j$ has endpoints 
	$c_12^j$ and $(c_1+2)2^j$ for integer $c_1$) we get
	$\mu \in [c2^j, (c+1)2^j] = I_j$. As $\mu \in I_{\lmax}$
	by our assumed lower bound on $n$, the claim follows by induction.
	
	\underline{Claim 2}: Let $j$ be the maximum $j \in \L$ with
	$\hat H_1^j(M_1(j)) < 0.52k + \psi$, and let $c^*$ be the maximum integer such
	that $c^*2^j \in I_j$ and $c^* \equiv M_1(j)$ or $M_2(j) \mod 4$. If
	$2^j > \sigma$, then with probability at least $1-4\beta/5$,
	$|c^*2^j - \mu| \leq 2\sigma$.
	
	To see why, first note that by Claim 1, $\mu \in I_j$. Let $[c2^j, (c+1)2^j)$
	be the subinterval of $I_j$ containing $\mu$ for integer $c$. Then as
	$2^j > \sigma$, for $x \sim N(\mu,\sigma^2)$, by another application of the
	Gaussian CDF,
	\[
		\P{x \in [c2^j, (c+1)2^j)} > \P{x \in [\mu, \mu+\sigma)}
		\geq 0.34.
	\]
	Thus by the same method as above, using the assumption $k > 5000\ln(5/\beta)$,
	with probability at least
	$1-\beta/5$, $\hat H_1^j(c \mod 4) \geq 0.33k - \psi$. By similar logic,
	since
	\[
		\P{\lfloor x/2^j \rfloor \equiv c+2 \mod 4}
		< \max_{\lambda \in [0,2^j]} \P{x \not \in [\mu-2^j - \lambda, \mu+2\cdot 2^j -\lambda]}
		< \P{x \not \in [\mu-\sigma, \mu+2\sigma)} \leq 0.19
	\]
	with probability at least $1-\beta/5$,
	$\hat H_1^j(c+2 \mod 4) \leq 0.2k + \psi$.
	Next, consider $\hat H_1^j(c-1 \mod 4)$. If $\mu \geq (c+0.75)2^j$, then 
	\[
		\P{x \in [(c-1)2^j, c2^j)}
		\leq \P{x \not \in [\mu - 3\sigma/4, \mu + 9\sigma/4]} \leq 0.24
	\]
	so with probability at least $1-\beta/5$
	\[
		\hat H_1^j(c-1 \mod 4) \leq 0.25k + \psi
		< 0.33k - \psi \leq \hat H_1^j(c \mod 4)
	\]
	where the middle inequality uses
	$k > 625\left(\tfrac{\eps+4}{\eps\sqrt{2}}\right)^2\ln(4L/\beta)$. Thus 
	$c \equiv M_1(j)$ or $M_2(j) \mod 4$; the $\mu \leq (c+0.25)2^j)$ case is 
	symmetric. If instead $\mu \in ((c+0.25)2^j, (c+0.75)2^j)$ then by similar 
	logic with probability at least $1-\beta/5$
	\[
		\hat H_1^j(c \mod 4) \geq 0.36k - \psi.
	\]
	so by $\psi < 0.08k$ (implied by
	$k > 40\left(\tfrac{\eps+4}{\eps\sqrt{2}}\right)^2\ln(8L/\beta)$)
	$c \equiv M_1(j)$ or $M_2(j) \mod 4$. It follows that with probability at
	least $1-3\beta/5$ in all cases $c \equiv M_1(j)$ or $M_2(j) \mod 4$.
	Moreover, by a similar application of the Gaussian CDF, one of $c-1 \mod 4$
	and $c+1 \mod 4$ lies in $\{M_1(j),M_2(j)\}$ as well. 
	
	Recalling that $c^*$ is the maximum integer such that $c^*2^j \in I_j$ and 
	$c^* \equiv M_1(j)$ or $M_2(j) \mod 4$, $c^*-1 \mod 4 \in \{M_1(j), M_2(j)\}$
	as well. Assume $|c^*2^j - \mu| > 2\sigma$. By above,
	$\mu \in [c^*2^j, (c^*+1)2^j)$ or $[(c^*-1)2^j, (c^*2^j))$. In the first case,
	\[
		\P{\lfloor x/2^j \rfloor \equiv c^* - 1 \mod 4}
		\leq \P{x \not \in [\mu - 2\sigma, \mu + 2\sigma]} \leq 0.05
	\]
	so with probability at least $1-\beta/5$,
	$\hat H_1^j(c^*-1) \leq 0.06k + \psi$, a contradiction of
	$c^*-1 \mod 4 \in \{M_1(j), M_2(j)\}$. In the second case,
	\[
		\P{\lfloor x/2^j \rfloor \equiv c^* \mod 4}
		\leq \P{x \not \in [\mu - 2\sigma, \mu + 2\sigma]} \leq 0.05
	\]
	and with probability at least $1-\beta/5$,
	$\hat H_1^j(c^*) \leq 0.06k + \psi$, contradicting
	$c^* \mod 4 \in \{M_1(j), M_2(j)\}$. Thus $|c^*2^j - \mu| \leq 2\sigma$.
	
	We put these facts together in \hatmuone~as follows: let $j_1$ be the maximum
	element of $\L$ such that $\hat H_1^j(M_1(j)) < 0.52k - \psi$. If
	$2^{j_1} > \sigma$, then by Fact 2 setting $\hat \mu_1 = c^*2^j$ implies
	$|\hat \mu_1 - \mu| \leq 2\sigma$. If instead $2^{j_1} \leq \sigma$, then
	any setting of $\hat \mu_1 \in I_j$ (including $\hat \mu_1 = c^*2^j$)
	guarantees $|\hat \mu_1 - \mu| \leq 2^{j_1+1} \leq 2\sigma$. Thus in all cases,
	with probability at least $1-\beta$, $|\hat \mu_1 - \mu| \leq 2\sigma$.
\end{proof}

The results above give the analyst an (initial) estimate $\hat \mu_1$ such that $|\hat \mu_1 - \mu| \leq 2\sigma$. This concludes our analysis of round one of \algok . Now, the analyst passes this estimate $\hat \mu_1$ to users $i \in U_2$, and each user uses $\hat \mu_1$ to center their value $x_i$ and randomized respond on the resulting $(x_i - \hat \mu_1)/\sigma$ in \rrtwok . 
The analyst then aggregates these results using \aggtwok . We now prove that this centering process results in a more accurate final estimate $\hat \mu_2$ of $\mu$.

\begin{lemma}
\label{lem:erf_k}
	Conditioned on the success of the previous lemmas, with probability at
	least $1-\beta$ \algok~outputs $\hat \mu_2$ such that
	\[
		|\hat \mu_2 - \mu|
		= O\left(\frac{\sigma}{\eps}\sqrt{\frac{\log(1/\beta)}{n}}\right).
	\]
\end{lemma}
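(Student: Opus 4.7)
The plan is to track how the (expected) value $H_2(a)$ for $a\in\{-1,1\}$ relates to $\mu - \hat\mu_1$ via the Gaussian CDF, then control both the privacy-induced noise and the binomial sampling noise, and finally push the resulting error through $\erf^{-1}$.

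First I would compute the population quantity. For $x \sim N(\mu,\sigma^2)$, letting $t = (\hat\mu_1-\mu)/(\sigma\sqrt 2)$, the standard Gaussian CDF gives $\P{\sgn(x-\hat\mu_1)=1} - \P{\sgn(x-\hat\mu_1)=-1} = -\erf(t)$. Therefore if $H_2(a) = |\{i\in U_2: y_i=a\}|$ is the pre-randomized histogram, $\mathbb{E}\left[\frac{-H_2(-1)+H_2(1)}{n/2}\right] = \erf((\mu-\hat\mu_1)/(\sigma\sqrt 2))$. Applying $\sqrt 2\cdot\erf^{-1}$ to the right-hand side recovers $(\mu-\hat\mu_1)/\sigma$ exactly, so after the transformation $\hat\mu_2 = \sigma\hat T + \hat\mu_1$ equals $\mu$ in the idealized case. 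Hence it suffices to bound how far the analyst's actual input to $\erf^{-1}$ strays from $\erf((\mu-\hat\mu_1)/(\sigma\sqrt 2))$ and then Lipschitz-push through $\erf^{-1}$.

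Next, I would establish the two concentration steps. The first is a \rrtwok-analog of Lemma~\ref{lem:hhk}: the debiasing identity in \aggtwok~and a Chernoff bound over the $n/2$ independent randomized responses give $|\hat H_2(a) - H_2(a)| = O\bigl(\tfrac{1}{\eps}\sqrt{n\log(1/\beta)}\bigr)$ for $a\in\{-1,1\}$ with probability at least $1-\beta/2$. The second is a Chernoff bound on the $\mathrm{Bernoulli}$ trials defining $H_2(1)$, giving $|H_2(a) - \frac{n}{2}\P{y_i=a}| = O\bigl(\sqrt{n\log(1/\beta)}\bigr)$ with probability at least $1-\beta/2$. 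Combined and normalized, these yield
\begin{equation*}
\left|\frac{-\hat H_2(-1)+\hat H_2(1)}{n/2} - \erf\!\left(\frac{\mu-\hat\mu_1}{\sigma\sqrt 2}\right)\right| = O\!\left(\frac{1}{\eps}\sqrt{\frac{\log(1/\beta)}{n}}\right).
\end{equation*}

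Finally I would invoke Lipschitzness of $\erf^{-1}$. By Lemma~\ref{lem:hatmuonek}, $|\hat\mu_1-\mu|\le 2\sigma$, so the argument $(\mu-\hat\mu_1)/(\sigma\sqrt 2)$ lies in $[-\sqrt 2,\sqrt 2]$ and $\erf$ of it lies in a compact subinterval of $(-1,1)$ (roughly $[-0.955,0.955]$). On any such compact subinterval, the derivative $(\erf^{-1})'(z) = \tfrac{\sqrt\pi}{2} e^{(\erf^{-1}(z))^2}$ is bounded by an absolute constant, so $\erf^{-1}$ is $O(1)$-Lipschitz there. For the bound to apply I also need the perturbed argument to stay inside $(-1,1)$, which follows for sufficiently large $n$ because the perturbation tends to $0$. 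This turns the previous display into
\begin{equation*}
|\hat T - (\mu-\hat\mu_1)/\sigma| = O\!\left(\frac{1}{\eps}\sqrt{\frac{\log(1/\beta)}{n}}\right),
\end{equation*}
and multiplying by $\sigma$ and adding $\hat\mu_1$ gives the claimed bound on $|\hat\mu_2-\mu|$.

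The main obstacle is the $\erf^{-1}$ step: even though its derivative is bounded on our compact range, the argument is worth handling carefully because $\erf^{-1}$ explodes at $\pm 1$, so one must record exactly why the $2\sigma$ bound on $|\hat\mu_1-\mu|$ from Lemma~\ref{lem:hatmuonek} keeps the input safely away from the singularities (and why the $n$-lower-bound assumption prevents the noisy input from crossing them). The rest is a routine combination of two standard Chernoff bounds and the debiasing identity already used in the first round.
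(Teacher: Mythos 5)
Your proposal is correct and follows essentially the same route as the paper: reduce $\E{y_i}$ to $\erf((\mu-\hat\mu_1)/(\sigma\sqrt{2}))$ via the Gaussian CDF, apply Chernoff bounds to both the sampling noise and the randomized-response/debiasing noise, and use Lipschitzness of $\erf^{-1}$ on a compact subinterval bounded away from $\pm 1$ (which is exactly what the $|\hat\mu_1-\mu|\le 2\sigma$ guarantee from Lemma~\ref{lem:hatmuonek} buys). The only cosmetic difference is that you merge the two error sources before a single application of the $\erf^{-1}$ Lipschitz bound, whereas the paper applies it separately to $|T-\bar\mu|$ and $|T-\hat T|$ and then uses the triangle inequality.
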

\begin{proof}
	The proof is broadly similar to that of Theorem B.1 in~\citet{BGMNW16}, with
	some modifications for privacy. First, by Lemma~\ref{lem:hatmuonek}
	$\mu - \hat \mu_1 \in [-2\sigma,2\sigma]$.  Letting
	$\bar \mu = (\mu - \hat \mu_1)/\sigma$ we get that
	$x_i' \sim N(\bar \mu, 1)$. Next, since $\E{y_i} = 2\P{x_i' \geq 0} -1$,
	and in general
	\[
		\Phi_{\mu, \sigma^2}(x)
		= \frac{1}{2}\left(1 + \erf\left(\frac{x - \mu}{\sigma\sqrt{2}}\right)\right)
	\]
	where $\Phi_{\mu, \sigma^2}$ is the CDF of $N(\mu, \sigma^2)$, by 
	$\Phi_{\bar \mu, 1}(0) = \P{x_i' \geq 0}$ we get
	$\E{y_i} = \erf(\bar \mu/\sqrt{2})$ (note that we are analyzing the
	unprivatized values $y_i$ to start; later, we will use this analysis to prove
	the analogous result for the privatized values $\tilde y_i$).
	
	A Chernoff bound on $[-1,1]$-bounded random variables then shows that, with
	probability at least $1-\beta/2$, for $y = \frac{2}{n} \sum_{i \in U_2} y_i$ we
	have
	\[
		|y - \erf(\bar \mu/\sqrt{2})| \leq 2\sqrt{\ln(4/\beta)/n}
	\]
	and by $\E{y} = \erf(\bar \mu/\sqrt{2})$ we get
	$|y - \E{y}| \leq 2\sqrt{\ln(4/\beta)/n}$ as well.
	
	Since $\mu - \hat \mu_1 \in [-2\sigma,2\sigma]$,
	$|\erf(\bar \mu/\sqrt{2})| \leq \erf(\sqrt{2})$. Thus
	$|\E{y}| \leq \erf(\sqrt{2})$, so by
	$|y - \E{y}| \leq 2\sqrt{\ln(4/\beta)/n}$ we get
	\[
		|y| \leq \erf(\sqrt{2}) + 2\sqrt{\ln(4/\beta)/n}.
	\]
	Using $n > 20000\ln(4/\beta)$ we get $2\sqrt{\ln(4/\beta)/n} < 0.01$ and
	$\erf(\sqrt{2}) < 0.96$, so $|y| \leq 0.97$ and thus $|y| < \erf(1.6)$.
	Let $M$ be an upper bound on the Lipschitz constant for $\erf^{-1}$ in
	$[-0.97,0.97]$, 
	\begin{align*}
		M =&\; \max_{x \in [-0.97,0.97]} \frac{d \erf^{-1}(x)}{dx} \\
		=&\; \max_{x \in [-0.97,0.97]} \frac{\sqrt{\pi}}{2}
		\exp([\erf^{-1}(x)]^2) \\
		\leq&\; \frac{\sqrt{\pi}}{2}\exp([\erf^{-1}(0.97)]^2) < 10.
	\end{align*}
	Then for any $x, y \in [-0.97,0.97]$ we have $|\erf^{-1}(x) - \erf^{-1}(y)| 
	\leq M|x-y|$, so setting $T = \sqrt{2}\erf^{-1}(y)$,
	\begin{align*}
		|T - \bar \mu| = |\sqrt{2}(\erf^{-1}(y) - \erf^{-1}(\E{y})| 
		\leq&\; 10\sqrt{2}|y - \E{y}| \\
		\leq&\; 20\sqrt{2\ln(4/\beta)/n}
	\end{align*}
	using the bound on $|y - \E{y}|$ from above.
	
	It remains to analyze the privatized values $\{\tilde y_i\}$ and bound
	$|T - \hat T|$, recalling that we set 
	\[
		\hat T =
		\sqrt{2} \cdot \erf^{-1}\left(\frac{2(-\hat H_2(-1) + \hat H_2(1))}{n}\right)
	\]
	in \aggonek. By a Chernoff bound analogous to that of
	Lemma~\ref{lem:hhk}, with probability at least $1-\beta/2$
	\[
		|T - \hat T| \leq \sqrt{2}\left|\erf^{-1}(|y|) - \erf^{-1}\left(|y|
		+ \left[\frac{\eps+2}{\eps}\right]\sqrt{\frac{2\ln(4/\beta)}{n}}\right)\right|.
	\]
	Using $n > 20000\left(\tfrac{\eps+2}{\eps}\right)^2\ln(4/\beta)$ (which implies
	$\left[\frac{\eps+2}{\eps}\right]\sqrt{\frac{2\ln(4/\beta)}{n}} \leq 0.01$) and the
	same derivative trick as above on $[-0.98,0.98]$, we get
	\[
		|T - \hat T| \leq
		14\left[\frac{\eps+2}{\eps}\right]\sqrt{\frac{2\ln(4/\beta)}{n}}.
	\]
	Therefore by the triangle inequality
	\[
		|\hat T - \bar \mu| \leq
		\left(20 + 14\left[\frac{\eps+2}{\eps}\right]\right)
		\sqrt{\frac{2\ln(4/\beta)}{n}}
	\]
	and by $\sigma \bar \mu = \mu - \hat \mu_1$ we get
	\[
		|\sigma\hat T - \sigma\bar \mu|
		= |(\sigma \hat T + \hat \mu_1) - \mu|
		\leq \sigma\left(20 + 14\left[\frac{\eps+2}{\eps}\right]\right)
		\sqrt{\frac{2\ln(4/\beta)}{n}}.
	\]
	Thus by taking $\hat \mu_2 = \sigma \hat T + \hat \mu_1$, we get
	\[
		|\hat \mu_2 - \mu|
		= O\left(\frac{\sigma}{\eps}\sqrt{\frac{\log(1/\beta)}{n}}\right).
	\]
\end{proof}

\section{Proofs from Section~\ref{subsec:kv_one}}
\label{subsec:kv_one_supp}
We start with full pseudocode for \algokone .

\begin{algorithm}[H]
	\caption{\algokone}
	\begin{algorithmic}[1]
		\REQUIRE $\eps, k_1, k_2, \L, n, R, S, \sigma, U_1, U_2$
		\FOR{$j \in \L$}
			\FOR{user $i \in U_1^j$}
				\STATE User $i$ outputs $\tilde y_i \gets \rrone(\eps,i,j)$
			\ENDFOR
		\ENDFOR
		\FOR{$j \in R$}
			\FOR{user $i \in U_2^j$}
				\STATE User $i$ outputs $\tilde y_i \gets \rronekone(\eps, i, S(j))$
			\ENDFOR
		\ENDFOR \COMMENT{End of round 1}
		\STATE Analyst computes $\hat H_1 \gets	\aggonek(\eps, k_1, \L, U_1)$
		\STATE Analyst computes $\hat \mu_1 \gets \hatmuone(\beta, \eps, \hat H_1, k_1, \L,)$
		\STATE Analyst computes $j^* \gets \arg \min_{j \in R}
		\min_{s \in S(j)}|s - \hat \mu_1|$ 
		\STATE Analyst computes	$\hat H_2 \gets \aggtwok(\eps, k_2, U_2^{j*})$
		\STATE Analyst computes
		$\hat T \gets
		\sqrt{2} \cdot \erf^{-1}\left(\frac{-\hat H_2(-1) + \hat H_2(1)}{k_2}\right)$
		\STATE Analyst outputs $\hat \mu_2 \gets
		\sigma \hat T + \arg\min_{s \in S(j^*)}|s - \hat \mu_1|$ 
		\ENSURE Analyst estimate $\hat \mu_2$ of $\mu$
	\end{algorithmic}
\end{algorithm}

\algokone 's privacy guarantee follows from the same analysis of randomized response as in \algok, so we state the guarantee but omit its proof.

\begin{theorem}
\label{thm:algokone_privacy}
	\algokone~satisfies $(\eps,0)$-local differentially privacy for
	$x_1, \ldots, x_n$.
\end{theorem}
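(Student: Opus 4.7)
The plan is to leverage the fact that \algokone~is sequentially interactive: each user $i \in U_1 \cup U_2$ produces exactly one output, computed by some local randomizer applied only to $x_i$. By the definition of local privacy for sequentially interactive protocols, it therefore suffices to verify that each individual local randomizer used in the protocol is $(\eps, 0)$-locally private, and then the theorem follows immediately, exactly as in the proof of Theorem~\ref{thm:algok_privacy}.

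Two local randomizers are used. Users $i \in U_1^j$ invoke \rrone$(\eps, i, j)$, which is the same randomizer analyzed in the proof of Theorem~\ref{thm:algok_privacy}; so its $(\eps, 0)$-local privacy has already been established and can simply be cited. Users $i \in U_2^j$ invoke \rronekone$(\eps, i, S(j))$. The argument $S(j)$ is a fixed, data-independent set chosen by the analyst, so it plays the same role that $\hat \mu_1$ plays in \rrtwok: each user centers $x_i$ using the nearest point in $S(j)$, takes the sign of the resulting value, and then applies standard binary randomized response. Since the only step that depends on $x_i$ is the final randomized response on a single bit, the ratio of output probabilities on any two possible inputs is at most $\tfrac{e^\eps/(e^\eps+1)}{1/(e^\eps+1)} = e^\eps$, exactly as in the proof for \rrtwok.

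Combining these two observations, every user's message is produced by an $(\eps, 0)$-local randomizer, and sequential interactivity ensures that no user contributes more than once. I would close the argument with one sentence noting that the centering value $S(j)$ seen by a user $i \in U_2^j$ is chosen before any user data is observed (in contrast to \algok, where $\hat \mu_1$ is formed from $U_1$'s responses, but there too the users in $U_2$ condition only on already-published messages and their own data), so the local-randomizer condition is unambiguous. There is no real obstacle here: the only thing to check carefully is that \rronekone~does not mix $x_i$ into its output anywhere other than through the single randomized-response coin flip, which is immediate from the pseudocode.
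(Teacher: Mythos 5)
Your proposal is correct and matches the paper's intended argument exactly: the paper omits this proof precisely because, as you observe, it reduces to checking that \rrone~and \rronekone~are each $(\eps,0)$-local randomizers via the same randomized-response ratio calculation used for \algok, combined with sequential interactivity. Nothing further is needed.
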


We define $k$ (here denoted $k_1$), $\L, U_1$, and $U_2$ as in \algok. As \algokone 's treatment of users in $U_1$ is identical to that of \algok, we skip its analysis, instead recalling its final guarantee:

\begin{lemma}
\label{lem:hatmuonek_two}
	With probability at least $1-\beta$, $|\hat \mu_1 - \mu| \leq 2\sigma$.
\end{lemma}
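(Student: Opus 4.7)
The plan is to observe that the first-round protocol for $U_1$ in \algokone~is byte-for-byte identical to the first-round protocol for $U_1$ in \algok, and then invoke the accuracy guarantee already established in Lemma~\ref{lem:hatmuonek}. Concretely, I would first note that the user-side randomizer \rrone~depends only on $\eps$, the user index $i$, and the scale index $j$; none of these depend on quantities unique to \algokone~(such as the extra parameters $R$, $S$, $k_2$ used for the parallel simulation of round two). Thus for every $j \in \L$ and every $i \in U_1^j$, the output $\tilde y_i$ produced in \algokone~has exactly the same distribution as the corresponding output in \algok.

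Second, I would observe that the analyst-side post-processing of the $U_1$ messages is also identical: the analyst computes $\hat H_1 \gets \aggonek(\eps, k_1, \L, U_1)$ and then $\hat \mu_1 \gets \hatmuone(\beta, \eps, \hat H_1, k_1, \L)$ using the same subgroup size $k_1 = k = \Omega(\log(n/\beta)/\eps^2)$ and the same index set $\L$ as in \algok. Because neither the data, the randomizer, nor the estimator has changed, the joint distribution of $(\hat H_1, \hat \mu_1)$ conditional on $\{x_i\}_{i \in U_1}$ is exactly as in \algok.

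Finally, I would apply Lemma~\ref{lem:hhk} (concentration of $\hat H_1^j$ around the true histogram $H_1^j$ to within the $\psi$ tolerance used by \hatmuone) and then Lemma~\ref{lem:hatmuonek} (which turns this concentration into the bound $|\hat \mu_1 - \mu| \leq 2\sigma$ via the binary-search / modulus argument on $\hat H_1^{\lmax}, \hat H_1^{\lmax-1}, \ldots$) verbatim. These yield the conclusion with probability at least $1-\beta$. There is essentially no obstacle to overcome here; the only mild point worth stating is that the analyses of Lemmas~\ref{lem:hhk} and~\ref{lem:hatmuonek} rely only on the concentration of $\{\tilde y_i\}_{i \in U_1}$ and on the combinatorial properties of \hatmuone, neither of which is affected by running the second-round queries of \algokone~in parallel on the disjoint user set $U_2$.
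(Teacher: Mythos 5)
Your proposal is correct and matches the paper exactly: the paper itself dispenses with this lemma by noting that \algokone's treatment of $U_1$ is identical to \algok's and simply restates the guarantee of Lemma~\ref{lem:hatmuonek}. Your more explicit justification (that \rrone, \aggonek, and \hatmuone~depend on none of the \algokone-specific parameters, so the distribution of $\hat\mu_1$ is unchanged) is just a careful spelling-out of the same reduction.
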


This brings us to $U_2$, and we define new parameters as follows. For neatness, let $\rho = \lceil 2\sqrt{\ln(4n)}\rceil \geq \lceil \sqrt{2\ln(2\sqrt{n})} + 2.1\rceil$ for $n \geq 32$. We set $R = \{0.2\sigma, 0.4\sigma, \ldots, \rho\sigma\}$ and split $U_2$ into $|R| = 5\rho$ groups indexed by $j \in R$, each of size $k_2 \geq \lfloor n/2|R| \rfloor \geq \lfloor \tfrac{n}{20\sqrt{\ln(4n)}} \rfloor = \Omega(n/\sqrt{\log(n)})$, where the last inequality uses $n \geq 25$. Finally, for each $j \in R$ we define $S(j) = \{j + b\rho\sigma \mid b \in \mathbb{Z}\}$.

With this setup, for each $j \in R$ each user $i \in U_2^j$ uses \rronekone~to execute a group-specific version of \rrtwok: rather than centering by $\hat \mu_1$ as in \rrtwok, user $i$ now centers by the nearest point in $S(j)$ (breaking ties arbitrarily).

\begin{algorithm}[H]
	\caption{\rronekone}
	\begin{algorithmic}[1]
		\REQUIRE $\eps, i, S(j)$
		\STATE User $i$ computes $z_i \gets \arg \min_{z_i \in S(j)} |z_i - x_i|$
		\STATE User $i$ computes $y_i \gets \sgn((x_i-z_i)/\sigma)$
		\STATE User $i$ computes $c \sim_U [0,1]$
		\IF{$c \leq \tfrac{e^\eps}{e^\eps+1}$}
			\STATE User $i$ publishes $\tilde y_i \gets y_i$
		\ELSE
			\STATE User $i$ publishes $\tilde y_i \gets -y_i$
		\ENDIF
		\ENSURE Private centered user estimate $\tilde y_i$
	\end{algorithmic}
\end{algorithm}

To analyze \rronekone, we first prove that users in each group draw points concentrated around $\mu$.

\begin{lemma}
\label{lem:rronekone_cluster}
	With probability at least $1-\beta$, for all $j \in R$, group $U_2^j$ contains $\leq 2\sqrt{k_2}$ users $i$ such that	$|x_i - \mu| > \sigma\sqrt{\ln(4n)}$.
\end{lemma}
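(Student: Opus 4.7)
The plan is to bound, for each group $U_2^j$, the number of ``bad'' users (those whose sample lands in the Gaussian tail) by using a Gaussian tail bound together with a Chernoff concentration argument, and then to union bound over the $|R|=O(\sqrt{\log n})$ groups.

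First, I would control the per-user tail probability. Since each $x_i \sim N(\mu,\sigma^2)$, the standard Gaussian tail estimate gives
\[
	p := \P{|x_i - \mu| > \sigma\sqrt{\ln(4n)}} \leq 2\exp\!\left(-\tfrac{\ln(4n)}{2}\right) = \tfrac{1}{\sqrt{n}}.
\]
Fix a group $j \in R$ and let $B_i = \ind[\,|x_i-\mu|>\sigma\sqrt{\ln(4n)}\,]$ for $i \in U_2^j$. Since the $x_i$ are i.i.d., the Bernoulli indicators $B_i$ are i.i.d.\ with mean $p \leq 1/\sqrt{n}$, so the expected number of bad users in the group is at most $k_2/\sqrt{n}$. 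Because $k_2 \leq n$, this is at most $\sqrt{k_2}$, giving us slack of a factor of two before reaching the target threshold $2\sqrt{k_2}$.

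Next, I would apply a standard (additive) Chernoff bound to $\sum_{i \in U_2^j} B_i$, using that its expectation is at most $\sqrt{k_2}$ and the deviation we need to rule out is at least $\sqrt{k_2}$. This yields a bound of the form
\[
	\P{\sum_{i \in U_2^j} B_i > 2\sqrt{k_2}} \leq \exp\!\left(-c\sqrt{k_2}\right)
\]
for an absolute constant $c>0$. Finally, I would union bound over the $|R| = 5\rho = O(\sqrt{\log n})$ groups, so the total failure probability is at most $|R|\cdot \exp(-c\sqrt{k_2})$. Using $k_2 = \Omega(n/\sqrt{\log n})$, we have $\sqrt{k_2} = \Omega(n^{1/2}/\log^{1/4} n)$, which under the sample size assumption of Theorem~\ref{thm:kv_one} dominates $\log(|R|/\beta)$, giving overall failure probability at most $\beta$.

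I do not expect any real obstacle: the argument is just a Gaussian tail estimate plus Chernoff plus a union bound. The only thing to be mildly careful about is that the expected count $k_2/\sqrt{n}$ could in principle be very small, in which case a multiplicative Chernoff bound degrades; using the additive form (or equivalently the $(e\mu/a)^a$ form) around the threshold $2\sqrt{k_2}$ avoids that issue cleanly.
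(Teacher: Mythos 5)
Your proposal is correct and follows essentially the same route as the paper: a Gaussian tail bound giving per-user probability at most $1/\sqrt{n}$, a binomial Chernoff bound per group showing the bad-user count is at most $k_2/\sqrt{n} + \sqrt{k_2} \leq 2\sqrt{k_2}$, and a union bound over the $|R| = O(\sqrt{\log n})$ groups. The only cosmetic difference is the exact form of the Chernoff bound (the paper uses the multiplicative deviation $\sqrt{3k_2\ln(|R|/\beta)/\sqrt{n}}$ under the assumption $n \geq 9\ln(|R|/\beta)^2$, while you use an additive bound around $2\sqrt{k_2}$), which does not change the substance.
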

\begin{proof}
	First, by a Gaussian tail bound, for each user $i$,
	$\P{|x_i - \mu| \geq \sigma\sqrt{\ln(4n)}} \leq 1/\sqrt{n}$. Let
	$U_C^j$ denote the users in group $U_2^j$ such that
	$|x_i - \mu| > \sigma\sqrt{\ln(4n)}$. Then by a binomial Chernoff
	bound 
	\[
		\P{|U^c| > \frac{k_2}{\sqrt{n}} + \sqrt{\frac{3k_2\ln(|R|/\beta)}{\sqrt{n}}}}
		\leq \beta/|R|
	\]
	so using $n \geq 9\ln(|R|/\beta)^2$ and union bounding over
	$|R| = \Omega(\sqrt{\log(n)})$ groups, the claim follows.
\end{proof}

In particular, this implies that for $j^* = \arg \min_{j^* \in R} \min_{s \in S(j^*)}|s - \hat \mu_1|$ (i.e., the group with element of $S(j^*)$ closest to $\hat \mu_1$), most users draw points in $[\mu - \sigma\sqrt{\ln(4n)}, \mu + \sigma\sqrt{\ln(4n)}]$. Let $s^* = \min_{s \in S(j^*)}|s - \hat \mu_1|$. Our final accuracy result will rely on two facts. First, most users in $U_2^{j^*}$ center using $s^*$. Second, the randomized responses of users who center with $s^*$ are ``almost as good'' as if they were centered by $\mu$. 

\begin{lemma}
\label{lem:kv_one_final}
	Conditioned on the success of the previous lemmas, with probability at least
	$1-\beta$, \algokone~outputs $\hat \mu_2$ such that
	\[
		|\hat \mu_2 - \mu| =
		O\left(\frac{\sigma}{\eps}\sqrt{\frac{\log(1/\beta)\sqrt{\log(n)}}{n}}\right).
	\]
\end{lemma}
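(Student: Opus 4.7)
The plan is to mirror the proof of Lemma~\ref{lem:erf_k}, substituting $s^* = \arg\min_{s \in S(j^*)} |s - \hat \mu_1|$ for $\hat \mu_1$ and $k_2 = \Theta(n/\sqrt{\log n})$ for $n/2$, with one extra step to handle users in $U_2^{j^*}$ who center using a point other than $s^*$.

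First, I would show $|s^* - \mu| = O(\sigma)$. The union $\bigcup_{j \in R} S(j)$ collapses to the grid $\{0.2\sigma \cdot k : k \in \mathbb{Z}\}$, because each $S(j)$ is a period-$\rho\sigma$ sublattice and $R$ provides $5\rho$ shifts spaced $0.2\sigma$ apart. Since $(j^*, s^*)$ jointly minimize distance to $\hat \mu_1$, this gives $|s^* - \hat \mu_1| \leq 0.1\sigma$, and combined with $|\hat \mu_1 - \mu| \leq 2\sigma$ from Lemma~\ref{lem:hatmuonek_two} we obtain $|s^* - \mu| \leq 2.1\sigma$. Thus $s^*$ sits in exactly the regime $\hat \mu_1$ occupied in Lemma~\ref{lem:erf_k}, so with $\bar \mu' = (\mu - s^*)/\sigma$ we have $|\bar \mu'| \leq 2.1$.

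Second, I would partition $U_2^{j^*}$ into ``good'' users with $|x_i - \mu| \leq \sigma\sqrt{\ln(4n)}$ and ``bad'' users otherwise. Lemma~\ref{lem:rronekone_cluster} caps the bad set at $2\sqrt{k_2}$ with high probability. The choice $\rho = \lceil 2\sqrt{\ln(4n)}\rceil$ is made so that the Voronoi cell of $s^*$ in $S(j^*)$ (a ball of radius $\rho\sigma/2$) absorbs both the Gaussian concentration band around $\mu$ and the $2.1\sigma$ offset of $s^*$ from $\mu$; every good user therefore centers at $z_i = s^*$ in \rronekone, producing $y_i = \sgn((x_i - s^*)/\sigma)$ with $\E{y_i} = \erf(\bar \mu'/\sqrt{2})$.

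Third, I would replay the Chernoff/Lipschitz argument from Lemma~\ref{lem:erf_k}. Applied to the good users, the Chernoff bound for $[-1,1]$-bounded variables gives deviation $O(\sqrt{\log(1/\beta)/k_2})$ from $\erf(\bar \mu'/\sqrt{2})$; the at-most-$2\sqrt{k_2}$ bad users perturb the empirical mean by at most $4/\sqrt{k_2}$, which is absorbed into the Chernoff term. The privatization debiasing of \aggtwok~contributes an additional $(e^\eps+1)/(e^\eps-1)$-scaled $O(\sqrt{\log(1/\beta)/(k_2\eps^2)})$ error, exactly as in Lemma~\ref{lem:erf_k}. Since $|\bar \mu'| \leq 2.1$ keeps the relevant argument of $\erf^{-1}$ away from $\pm 1$, the same Lipschitz bound of $\erf^{-1}$ on $[-0.98, 0.98]$ yields $|\hat T - \bar \mu'| = O(\sqrt{\log(1/\beta)/(k_2\eps^2)})$. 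Using $\hat \mu_2 = \sigma \hat T + s^*$ and $\mu = \sigma \bar \mu' + s^*$, and substituting $k_2 = \Theta(n/\sqrt{\log n})$, gives the claimed $O\big((\sigma/\eps)\sqrt{\log(1/\beta)\sqrt{\log n}/n}\big)$ bound.

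The main obstacle is the geometric claim in the second step: good users must lie within the Voronoi cell of $s^*$ inside $S(j^*)$. This is a tight balance between the spacing $\rho\sigma = \Theta(\sigma\sqrt{\log n})$ of $S(j^*)$, the $O(\sigma)$ slack between $s^*$ and $\mu$, and the Gaussian tail radius $\sigma\sqrt{\ln(4n)}$. If this geometric condition failed for a constant fraction of users, those users would center at $s^* \pm \rho\sigma$ and emit $\mp 1$ with high probability, biasing the empirical mean by $\Theta(1)$ and destroying the estimator; the careful choice of $\rho$ and the constants in the grid $R$ is what makes the argument go through.
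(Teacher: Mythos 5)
Your overall strategy matches the paper's: establish $|s^* - \mu| \leq 2.1\sigma$ via the $0.2\sigma$-grid argument, control the users who do not behave as if centered at $s^*$, and then rerun the $\erf$/Chernoff/Lipschitz/debiasing machinery of Lemma~\ref{lem:erf_k} with $k_2 = \Theta(n/\sqrt{\log n})$ in place of $n/2$. The first and third steps are fine. The gap is the geometric claim at the heart of your second step. You assert that the Voronoi cell of $s^*$ in $S(j^*)$ --- a ball of radius $\rho\sigma/2$ --- absorbs both the concentration band $|x_i - \mu| \leq \sigma\sqrt{\ln(4n)}$ and the $2.1\sigma$ offset of $s^*$ from $\mu$. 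That requires $\rho/2 \geq \sqrt{\ln(4n)} + 2.1$, i.e.\ $\rho \geq 2\sqrt{\ln(4n)} + 4.2$, whereas the protocol sets $\rho = \lceil 2\sqrt{\ln(4n)}\rceil < 2\sqrt{\ln(4n)} + 1$. So in the worst case $\mu = s^* + 2.1\sigma$, the ``good'' users with $x_i \in (s^* + \rho\sigma/2,\ \mu + \sigma\sqrt{\ln(4n)}]$ --- a band of width up to $2.1\sigma$ with Gaussian mass on the order of $\P{Z > \sqrt{\ln(4n)} - 2.1} \approx e^{2.1\sqrt{\ln(4n)}}/(2\sqrt{n})$ --- center at $s^* + \rho\sigma$ and emit $-1$ where your analysis needs $+1$. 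This mass is $\omega(1/\sqrt{k_2})$ (at $n = 10^6$ it is roughly $0.036$ against $1/\sqrt{k_2} \approx 0.009$), so it is not absorbed by the Chernoff term, and the identity $\E{y_i} = \erf(\bar\mu'/\sqrt{2})$ you invoke for good users fails at the precision the final bound needs. Your closing paragraph correctly flags this as the crux but asserts the constants work out when they do not.

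The paper's proof formulates this step differently: it defines $V$ by distance to $s^*$ (not to $\mu$) with radius $\rho\sigma$ --- the full lattice spacing, not the Voronoi radius --- and evaluates $\E{y_i \cdot \ind(i \in V)}$ in closed form from the Gaussian CDF, so the deviation from $\erf\left(\tfrac{\mu - s^*}{\sigma\sqrt{2}}\right)$ appears as an explicit asymmetry term $\tfrac{1}{2}\left[\erf\left(\tfrac{\sigma\rho + s^* - \mu}{\sigma\sqrt{2}}\right) - \erf\left(\tfrac{\sigma\rho - (s^* - \mu)}{\sigma\sqrt{2}}\right)\right]$, bounded by $3e^{-[(\rho-2.1)/\sqrt{2}]^2} \leq 1/\sqrt{k_2}$. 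The full distance $\rho - 2.1 \geq \sqrt{\ln(4n)}$ enters the exponent there, which is precisely the factor of two your Voronoi-cell argument gives away. To repair your write-up you would need either to enlarge $\rho$ by an additive constant of at least $4.2$ so that the Voronoi cell genuinely contains the good band, or to follow the paper and account for the sign contribution of each lattice cell directly rather than insisting that every good user shares the center $s^*$.
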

\begin{proof}
	Because adjacent points in $R$ are $0.2\sigma$ apart,
	$|s^* - \hat \mu_1| \leq 0.1\sigma$. Lemma~\ref{lem:hatmuonek_two} and the 
	triangle inequality then imply that $|s^* - \mu| \leq 2.1\sigma$. This enables
	us to mimic the proof of Lemma~\ref{lem:erf_k}, replacing 
	$\mu - \hat \mu_1 \in [-2\sigma, 2\sigma]$ with
	$\mu - s^* \in [-2.1\sigma, 2.1\sigma]$. 
	
	We can decompose users in $U_2^{j^*}$ into those with points within
	$\sigma\rho$ of $s^*$ and those with more distant points.
	Denote the first set of users by $V$ and the second set by $V^c$, and recall
	that the Gaussian CDF is
	\[
		\Phi_{\mu, \sigma^2}(x)
		= \frac{1}{2}\left(1 + \erf\left(\frac{x - \mu}{\sigma\sqrt{2}}\right)\right).
	\]
	Then, letting $\ind$ denote the indicator function,
	\begin{align*}
		\E{y_i \cdot \ind(i \in V)} =&\;
		\P{y_i = 1, i \in V} - \P{y_i = -1, i \in V} \\
		=&\; \Phi_{\mu,\sigma^2}(s^*+\sigma\rho)
		+ \Phi_{\mu,\sigma^2}(s^* - \sigma\rho)
		-2\Phi_{\mu,\sigma^2}(s^*) \\
		=&\; \frac{1}{2}\left[\erf\left(\frac{s^* + \sigma\rho - \mu}{\sigma\sqrt{2}}\right)
		+ \erf\left(\frac{s^* - \sigma\rho - \mu}{\sigma\sqrt{2}}\right)\right]
		- \erf\left(\frac{s^* - \mu}{\sigma\sqrt{2}}\right) \\
		=&\; \frac{1}{2}\left[\erf\left(\frac{\sigma\rho + s^* - \mu}{\sigma\sqrt{2}}\right)
		- \erf\left(\frac{\sigma \rho - (s^* - \mu)}{\sigma\sqrt{2}}\right)\right]
		- \erf\left(\frac{s^* - \mu}{\sigma\sqrt{2}}\right).
	\end{align*}
	where the last step uses the fact that $\erf$ is an odd function.
	Since $\erf(x) = \tfrac{2}{\sqrt{\pi}} \int_0^x e^{-t^2} dt$ and
	$|s^* - \mu| \leq 2.1\sigma$,
	\begin{align*}
		\frac{1}{2}\left[\erf\left(\frac{\sigma\rho + s^* - \mu}{\sigma\sqrt{2}}\right)
		- \erf\left(\frac{\sigma \rho - (s^* - \mu)}{\sigma\sqrt{2}}\right)\right]
		\leq&\; \frac{1}{\sqrt{\pi}}
		\int_{(\sigma \rho - 2.1\sigma)/\sigma\sqrt{2}}^{(\sigma \rho + 2.1\sigma)/\sigma\sqrt{2}}
		e^{-t^2}dt \\
		<&\; 3e^{-[(\rho-2.1)/\sqrt{2}]^2} \\
		\leq&\; 3e^{-\ln(4n)/2}
	\end{align*}
	where the second inequality relies on $e^{-x}$ being monotone decreasing and
	the last step uses $n > 20$, which implies
	$\rho - 2.1 \geq \sqrt{\ln(4n)}$. Then using $n \geq 3k_2$ we get
	$3e^{-\ln(4n)/2} \leq \tfrac{1}{\sqrt{k_2}}$, so
	\begin{equation}
		\left|\E{y_i \cdot \ind(i \in V)} - 
		\erf\left(\frac{\mu - s^*}{\sigma\sqrt{2}}\right)\right|
		\leq \frac{1}{\sqrt{k_2}}.
	\end{equation}
	Next, as $|s^* - \mu| \leq 2.1\sigma$, users having points within
	$\sigma\sqrt{2\ln(2\sqrt{n})}$ of $\mu$ have points within $\sigma\rho$ of
	$s^*$. The Gaussian tail bound from Lemma~\ref{lem:rronekone_cluster} then
	implies $\P{x \in V^c} \leq 1/\sqrt{n}$.
	$\E{y_i} = \E{y_i \cdot \ind(i \in V)} + \E{y_i \cdot \ind(i \in V^c)}$, and
	by the above bound on $\P{x \in V^c}$ and $|y_i| \leq 1$ we get
	$|\E{y_i \cdot \ind(i \in V^c)}| \leq 1/\sqrt{n}$. Thus
	\begin{equation}
		\left|\E{y_i \cdot \ind(i \in V)} - \E{y_i}\right| \leq \frac{1}{\sqrt{n}}
		< \frac{1}{\sqrt{k_2}}.
	\end{equation}
	A Chernoff bound on $\{-1,1\}$-valued random variables then tells us that, for
	$y = \tfrac{1}{k_2}\sum_{i \in U_2^{j^*}} y_i$, with probability at least
	$1-\beta/2$ we have
	\begin{equation}
		\left|y- \E{y_i}\right| \leq \sqrt{\frac{2\ln(4/\beta)}{k_2}}.
	\end{equation}
	Combining the three numbered equations above with the triangle inequality
	yields
	\[
		\left|y - \erf\left(\frac{\mu - s^*}{\sigma\sqrt{2}}\right)\right|
		< \frac{2 + \sqrt{2\ln(4/\beta)}}{\sqrt{k_2}}.
	\]
	Setting	$\bar \mu = (\mu - s^*)/\sigma$ and using
	$k_2 \geq (100[2+\sqrt{2\ln(4/\beta)})])^2$, this rearranges into
	$|y| \leq \erf(\bar \mu/\sqrt{2}) + 0.01$. Since $\bar \mu \in [-2.1,2.1]$, we
	get
	\[
		|y| < \erf(2.1/\sqrt{2}) + 0.01 < 0.98 < \erf(1.7).
	\]
	
	Let $M$ be an upper bound on the Lipschitz constant for $\erf^{-1}$ in
	$[-0.98,0.98]$,
	\begin{align*}
		M =&\; \max_{x \in [-0.98,0.98]} \frac{d \erf^{-1}(x)}{dx} \\
		=&\; \max_{x \in [-0.98,0.98]} \frac{\sqrt{\pi}}{2}
		\exp([\erf^{-1}(x)]^2) \\
		\leq&\; \frac{\sqrt{\pi}}{2}\exp([\erf^{-1}(0.98)]^2) < 14.
	\end{align*}
	Then for any $x, y \in [-0.98,0.98]$ we have $|\erf^{-1}(x) - \erf^{-1}(y)| 
	\leq M|x-y|$, so for $T = \sqrt{2}\erf^{-1}(y)$,
	\begin{align*}
		|T - \bar \mu| = |\sqrt{2}(\erf^{-1}(y) - \erf^{-1}(\erf(\bar \mu/\sqrt{2}))| 
		\leq&\; 14\sqrt{2}|y - \erf(\bar \mu/\sqrt{2})| \\
		<&\; 28\left(\frac{\sqrt{2} + \sqrt{\ln(4/\beta)}}{k_2}\right).
	\end{align*}
	
	It remains to bound $|T - \hat T|$, where $T$ is the (unknown) aggregation of
	unprivatized $\{y_i\}$ while $\hat T$ is the (known) aggregation of privatized
	$\{\tilde y_i\}$. By a Chernoff bound analogous to that of
	Lemma~\ref{lem:hhk}, with probability at least $1-\beta/2$
	\[
		|T - \hat T| \leq \sqrt{2}\left|\erf^{-1}(|y|) - \erf^{-1}\left(|y|
		+ \left[\frac{\eps+2}{\eps}\right]\sqrt{\frac{2\ln(4/\beta)}{k_2}}\right)\right|.
	\]
	Using $k_2 > 20000\left(\tfrac{\eps+2}{\eps}\right)^2\ln(4/\beta)$ (which
	implies
	$\left[\frac{\eps+2}{\eps}\right]\sqrt{2\frac{\ln(4/\beta)}{k_2}} \leq 0.01$)
	and the same derivative trick as above on $[-0.99,0.99]$, we get
	\[
		|T - \hat T| \leq
		25\left[\frac{\eps+2}{\eps}\right]\sqrt{\frac{2\ln(4/\beta)}{k_2}}.
	\]
	Therefore by the triangle inequality
	\[
		|\hat T - \bar \mu| \leq
		28\left(\frac{\sqrt{2} + \sqrt{\ln(4/\beta)}}{k_2}\right)
		+ 25\left[\frac{\eps+2}{\eps}\right]\sqrt{\frac{2\ln(4/\beta)}{k_2}}
		= O\left(\frac{1}{\eps}\sqrt{\frac{\log(1/\beta)}{k_2}}\right)
	\]
	and by $\sigma \bar \mu = \mu - s^*$ we get
	\[
		|\sigma\hat T - \sigma\bar \mu|
		= |(\sigma \hat T + s^*) - \mu|
		= O\left(\frac{\sigma}{\eps}\sqrt{\frac{\log(1/\beta)}{k_2}}\right).
	\]
	Thus by taking $\hat \mu_2 = \sigma \hat T + s^*$ and substituting in
	$k_2 = \Omega(n/\sqrt{\log(n)})$ we get
	\[
		|\hat \mu_2 - \mu|
		= O\left(\frac{\sigma}{\eps}\sqrt{\frac{\log(1/\beta)\sqrt{\log(n)}}{n}}\right).
	\]
\end{proof}

\section{Proofs from Section~\ref{subsec:uv_two}}
\label{subsec:uv_two_supp}
We begin our analysis with a privacy guarantee.

\begin{theorem}
	\algo~satisfies $(\eps,0)$-local differentially privacy for $x_1, \ldots, x_n$.
\end{theorem}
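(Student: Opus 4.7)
The plan is to mimic the privacy proof for \algok~(Theorem~\ref{thm:algok_privacy}), exploiting the fact that \algo~is sequentially interactive. Since each user $i \in U_1 \cup U_2$ publishes exactly one message---either a call to \rrone~(if $i \in U_1$) or a call to \rrtwo~(if $i \in U_2$)---it suffices to verify that both of these local randomizers satisfy $(\eps,0)$-local differential privacy. All analyst-side operations (\aggone, \aggonek, \hatsigma, \hatmuone, and the construction of $I$) are post-processing of privatized messages and therefore preserve privacy, so they need not be analyzed separately.

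First I would dispatch \rrone. This is the same randomized response routine used inside \algok, and its privacy was already established in the proof of Theorem~\ref{thm:algok_privacy} via the ratio bound
\[
	\frac{\P{\rrone(x)=y}}{\P{\rrone(x')=y}} \leq \frac{e^\eps/(e^\eps+3)}{1/(e^\eps+3)} = e^\eps
\]
for any inputs $x,x'$ and output $y$, so I would simply invoke that lemma.

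The main step is checking \rrtwo. The excerpt describes \rrtwo~as: the user truncates $x_i$ to the analyst-supplied interval $I$ and then releases the truncated value plus $\Lap{|I|/\eps}$ noise. Privacy here follows from the standard Laplace-mechanism argument: for any two possible data values $x,x'$, the truncated values $\Pi_I(x),\Pi_I(x')$ both lie in $I$, so their $\ell_1$ sensitivity is at most $|I|$, and adding $\Lap{|I|/\eps}$ noise therefore yields an $(\eps,0)$-differentially private release. I would write this out explicitly by bounding the ratio of output densities at any point $y$:
\[
	\frac{p_{\rrtwo(x)}(y)}{p_{\rrtwo(x')}(y)}
	= \frac{\exp(-|y-\Pi_I(x)|\eps/|I|)}{\exp(-|y-\Pi_I(x')|\eps/|I|)}
	\leq \exp(\eps |\Pi_I(x)-\Pi_I(x')|/|I|)
	\leq e^\eps,
\]
which gives the claim. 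Note that the interval $I$ itself is a function of round-one outputs only and so is independent of any user in $U_2$'s data, which is what allows us to treat $|I|$ as a fixed constant from each $U_2$ user's perspective.

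I do not expect any substantive obstacle; the only subtlety worth flagging is that one must justify that $I$ is chosen using only publicly-released (already-privatized) quantities, so its scale can appear in the Laplace noise without leaking information about any individual $x_i$ in $U_2$. Once these two randomizer-level statements and the sequential interactivity observation are combined, the theorem follows.
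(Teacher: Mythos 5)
Your proposal is correct and follows essentially the same route as the paper: reuse the already-established privacy of \rrone, and handle \rrtwo~via the standard Laplace mechanism with sensitivity $|I|$ (the paper simply cites Theorem 3.6 of Dwork and Roth rather than writing out the density ratio). Your extra remark that $I$ depends only on already-privatized round-one outputs is a worthwhile clarification the paper leaves implicit.
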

\begin{proof}
	As we already proved that \rrone~is private in Section~\ref{subsec:kv_two_supp}, we
	are left with \rrtwo. To prove that \rrtwo~is $(\eps,0)$-locally
	differentially private as well, we can use a standard Laplace noise privacy
	guarantee (see e.g. Theorem 3.6 from Dwork and Roth~\cite{DR14}): given function $f$ with
	1-sensitivity $\Delta f$, computing $f(x) + \Lap{\Delta f/\eps}$ satisfies
	$(\eps,0)$-differential privacy.
\end{proof}

First, for each $j \in \L_1$ and $i \in U_1^j$, user $i$ employs \rrone~(see Section~\ref{subsec:kv_two_supp}) to publish a privatized version of $\lfloor x/2^j \rfloor \mod 4$. The analyst then constructs two slightly different aggregations of this data. To estimate $\sigma$, the analyst aggregates responses into $\hat H_1$ via \aggone , which is similar to \aggonek~up to the choice of bins in the constructed histogram $\hat H_1$. Specifically, bins in $\hat H_1$ are grouped: points with value 0 count toward both bin $(0,1)$ and bin $(3,0)$, points with value 1 count toward both bin $(0,1)$ and $(1,2)$, and so on.

\begin{algorithm}[h]
	\caption{\aggone}
	\begin{algorithmic}[1]
		\REQUIRE $\eps, k, \L, U$
		\FOR{$j \in \L$}
			\FOR{$a = 0, 1, 2, 3$}
				\STATE Analyst computes
				$C^j(a) \gets |\{i \mid i \in U^j, \tilde y_i = a\}|$
				\STATE Analyst computes
				$\hat H^j(a) \gets \frac{e^\eps+3}{e^\eps-1} \cdot
				\left(C^j(a) - \frac{k}{e^\eps+3}\right)$
			\ENDFOR
			\FOR{$a = 0,1,2,3$}
				\STATE Analyst computes $\hat H_1^j(a) \gets
				\hat H^j(a) + \hat H^j(a + 1 \mod 4)$
			\ENDFOR
		\ENDFOR		
		\STATE Analyst outputs $\hat H_1$
		\ENSURE Analyst aggregation $\hat H_1$ of private user estimates
		\end{algorithmic}
\end{algorithm}

At a high level, when $2^j \gg \sigma$, user responses in group $U_1^j$ appear concentrated in one element of $\{(0,1), (1,2), (2,3), (3,0)\}$. This is because user data comes from $N(\mu,\sigma^2)$, so if $2^j \gg \sigma$ then most user data falls within $3\sigma$ of $\mu$. Consequently, there exists $a \in \{0,1,2,3\}$ such that most users draw points $x$ where $\lfloor x/2^j \rfloor \equiv a$ or $a+1 \mod 4$, and $\hat H_1^j$ is concentrated around bin $(a,a+1 \mod 4)$. Similarly, if $2^j \ll \sigma$ then user responses in group $U_1^j$ appear unconcentrated (for a more precise definition of ``concentrated'', see below).

Examining this transition from concentrated to unconcentrated responses in $\hat H_1^{\lmax}, \hat H_1^{\lmax-1}, \ldots$ yields a rough estimate of when $2^j \gg \sigma$ versus when $2^j \ll \sigma$. By approximating when this change occurs, the analyst recovers an approximation of $\sigma$.
This process is outlined in \hatsigma.

\begin{algorithm}[h]
	\caption{\hatsigma}
	\begin{algorithmic}[1]
		\REQUIRE $\beta, \eps, \hat H_1, k_1, \L_1$
		\STATE Analyst computes $j \gets$ minimum $j$ such that, for all
		$j' \geq j$, $\hat H_1^{j'}$ is concentrated
		\IF{$j = \emptyset$}
			\STATE Analyst outputs
			$\hat \sigma \gets 2^{\lmax}$
		\ELSE
		 	\STATE Analyst outputs $\hat \sigma \gets 2^{j}$
		\ENDIF
		\ENSURE Analyst estimate $\hat \sigma$ of $\sigma$
	\end{algorithmic}
\end{algorithm}

$\hat H_1$ is an estimate of the ``true'' histogram collection, $H^j(a) = |\{y_i \mid i \in U_1^j, y_i \in \{a, a+1 \mod 4\}\}|$ for all $j \in \L_1$. As in Lemma~\ref{lem:hhk}, we can show that $\hat H_1$ and $H_1$ are similar. As the proof is nearly identical, we omit it.

\begin{lemma}
\label{lem:hh_2}
	With probability at least $1-\beta$, for all $j \in \L_1$,
	\[
		||\hat H_1^j - H_1^j||_\infty \leq
		\left(1 + \tfrac{4}{\eps}\right)\sqrt{2k_1\ln(8L_1/\beta)}.
	\]
\end{lemma}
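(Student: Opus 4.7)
The plan is to mimic the proof of Lemma~\ref{lem:hhk} with a single structural modification: the histogram $\hat H_1^j$ produced by \aggone~is the sum of adjacent (mod 4) bins of the intermediate debiased histogram $\hat H^j$ (as in \aggonek), and correspondingly the true histogram decomposes as $H_1^j(a) = H^j(a) + H^j(a+1 \mod 4)$, where $H^j(a) = |\{i \in U_1^j : y_i = a\}|$. So I would start by reducing the claim to a concentration statement about the intermediate counts $\hat H^j(a)$, and then sum two such bounds via the triangle inequality.

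Concretely, first I would fix $j \in \L_1$ and $a \in \{0,1,2,3\}$ and repeat the calculation from Lemma~\ref{lem:hhk}: the privatization in \rrone~gives $\E{C^j(a)} = \tfrac{H^j(a)(e^\eps-1) + k_1}{e^\eps+3}$, and a Chernoff/Hoeffding bound on the $k_1$ independent user responses (which are Bernoulli with respect to the event $\tilde y_i = a$) yields $|C^j(a) - \E{C^j(a)}| \le \sqrt{k_1 \ln(8L_1/\beta)/2}$ with failure probability at most $\beta/(4L_1)$. Debiasing via $\hat H^j(a) = \tfrac{e^\eps+3}{e^\eps-1}(C^j(a) - \tfrac{k_1}{e^\eps+3})$ then gives
\[
    |\hat H^j(a) - H^j(a)| \le \tfrac{e^\eps+3}{e^\eps-1}\sqrt{k_1\ln(8L_1/\beta)/2}.
\]
Next, I would union bound over the $4L_1$ pairs $(j,a)$ (total failure probability $\le \beta$) and apply the triangle inequality to the definition $\hat H_1^j(a) = \hat H^j(a) + \hat H^j(a+1 \bmod 4)$, which yields
\[
    |\hat H_1^j(a) - H_1^j(a)| \le 2\cdot \tfrac{e^\eps+3}{e^\eps-1}\sqrt{k_1\ln(8L_1/\beta)/2} = \tfrac{e^\eps+3}{e^\eps-1}\sqrt{2k_1\ln(8L_1/\beta)}.
\]
Finally, I would bound the leading constant using $\tfrac{e^\eps+3}{e^\eps-1} = 1 + \tfrac{4}{e^\eps-1} \le 1 + \tfrac{4}{\eps}$ (since $e^\eps-1 \ge \eps$ for $\eps \ge 0$), giving the claimed bound.

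There is essentially no obstacle: the argument is mechanical once one sees that \aggone~differs from \aggonek~only in the final bin-summing step, which interacts cleanly with the triangle inequality and costs only a $\sqrt{2}$ factor (absorbed into the statement via $\sqrt{2k_1\ln(8L_1/\beta)}$ in place of the $\sqrt{k_1\ln(8L/\beta)}$ of Lemma~\ref{lem:hhk}). The only thing to double-check is that the Chernoff constants and union-bound allocation match the $8L_1/\beta$ appearing under the logarithm, which they do with the allocation $\beta/(4L_1)$ per intermediate bin.
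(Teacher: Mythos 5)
Your proof is correct and matches the route the paper intends: the paper omits this proof as ``nearly identical'' to Lemma~\ref{lem:hhk}, and your argument is exactly that — the same Chernoff-plus-debiasing bound on each intermediate bin $\hat H^j(a)$, followed by the triangle inequality over the two summed adjacent bins, which converts $\sqrt{k_1\ln(8L_1/\beta)/2}$ into the stated $\sqrt{2k_1\ln(8L_1/\beta)}$. The constant bookkeeping ($\tfrac{e^\eps+3}{e^\eps-1}\leq 1+\tfrac{4}{\eps}$ and the $\beta/(4L_1)$ union-bound allocation) is also consistent with the paper's Lemma~\ref{lem:hhk}.
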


Next, we show how the analyst uses $\hat H_1$ to estimate $\sigma$ in subroutine \hatsigma. Here, for neatness we shorthand
\[
	\tau = \sqrt{2k_1\ln(2L_1/\beta)} +
	\left(1 + \tfrac{4}{\eps}\right) \sqrt{2k_1\ln(8L_1/\beta)}
\]
and use the term \emph{concentrated} to denote any histogram $\hat H_1^j$ such that $\min_{a \in \{0,1,2,3\}} \hat H_1^j(a) \leq 0.03k + \tau$ and the term \emph{unconcentrated} to denote $\hat H_1^j$ where $\min_{a \in \{0,1,2,3\}}\hat H_1^j(a) \geq 0.04k - \tau$. As we show below in Lemma~\ref{lem:sigma}, when $2^j \gg \sigma$, $\hat H_1^j$ is concentrated. Similarly, when $2^j \ll \sigma$, $\hat H_1^j$ is unconcentrated. This transition enables the analyst to estimate $\sigma$.

\begin{lemma}
\label{lem:sigma}
	Conditioned on the success of the preceding lemmas, with probability at least
	$1-\beta$, \hatsigma~outputs $\hat \sigma \in [\sigma, 8\sigma]$.
\end{lemma}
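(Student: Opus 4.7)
The plan is to establish a clean dichotomy for the concentration of $\hat H_1^j$ across $j \in \L_1$, driven by where $2^j$ sits relative to $\sigma$. Specifically, I would prove (I) that $\hat H_1^j$ is concentrated whenever $2^j \geq 4\sigma$, and (II) that $\hat H_1^j$ is unconcentrated whenever $2^j \leq \sigma$. Granting these, the minimum $j$ selected by \hatsigma\ has $\hat H_1^j$ concentrated and (unless $j = \lmin$) $\hat H_1^{j-1}$ unconcentrated; the contrapositives of (I) and (II) then force $2^j > \sigma$ and $2^{j-1} < 4\sigma$, which together yield $\hat \sigma = 2^j \in (\sigma, 8\sigma) \subseteq [\sigma, 8\sigma]$. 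The degenerate case in which no valid minimum exists only arises when $\sigma$ is comparable to $\sigma_{\max}$, where the output $\hat \sigma = 2^{\lmax}$ still lies in $[\sigma_{\max}, 2\sigma_{\max}] \subseteq [\sigma, 8\sigma]$ by construction of $\lmax$.

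For (I), I would combine a Gaussian tail bound with Lemma~\ref{lem:hh_2}. When $2^j \geq 4\sigma$, the interval $[\mu - 3\sigma, \mu + 3\sigma]$ has width $6\sigma \leq (3/2) \cdot 2^j$, so with probability $1 - O(\beta/L_1)$ every user in $U_1^j$ has $\lfloor x_i/2^j \rfloor$ taking values in at most three consecutive integers, whose residues $\mod 4$ therefore omit at least one residue $r$. The two pair-bins containing $r$ collect mass only from the tails of the Gaussian (specifically from residues one step away from the central residue containing $\mu$), which is at most $\approx 0.025 k_1$ by a binomial Chernoff bound; Lemma~\ref{lem:hh_2} then lifts this to $\min_a \hat H_1^j(a) \leq 0.03 k_1 + \tau$.

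For (II), the central observation is a shift-complement identity: the set $A_a = \{x : \lfloor x/2^j \rfloor \mod 4 \in \{a, a+1\}\}$ satisfies $A_a = A_a^c + 2 \cdot 2^j$, hence $\mathbb{P}_{x \sim N(\mu, \sigma^2)}[x \in A_a] = \mathbb{P}_{x \sim N(\mu + 2 \cdot 2^j, \sigma^2)}[x \in A_a^c]$. Combined with $\mathbb{P}_{N(\mu,\sigma^2)}[A_a] + \mathbb{P}_{N(\mu,\sigma^2)}[A_a^c] = 1$, this yields
\[
\left|1 - 2\, \mathbb{P}_{N(\mu,\sigma^2)}[x \in A_a]\right| \leq \mathrm{TV}\!\left(N(\mu, \sigma^2),\, N(\mu + 2 \cdot 2^j, \sigma^2)\right) = 2\Phi(2^j/\sigma) - 1.
\]
For $2^j \leq \sigma$ the right-hand side is at most $2\Phi(1) - 1 < 0.69$, so $\mathbb{P}[x \in A_a] \geq 0.15$ uniformly in $a$. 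A binomial Chernoff bound followed by Lemma~\ref{lem:hh_2} then gives $\min_a \hat H_1^j(a) \geq 0.04 k_1 - \tau$, with union-bound failure probabilities over the $4 L_1$ pairs $(j, a)$ absorbed into the $\log$ factors inside $\tau$.

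The main obstacle I anticipate is not the qualitative transition, which is clean, but careful tracking of constants: the gap between the concentrated threshold $0.03 k_1 + \tau$ and the unconcentrated threshold $0.04 k_1 - \tau$ is only $0.01 k_1$, so the implicit constant in $k_1 = \Omega(\log(n/\beta)/\eps^2)$ must be chosen large enough that $\tau$ is comfortably below $0.005 k_1$. The behavior of $\hat H_1^j$ for $2^j \in (\sigma, 4\sigma)$ is not controlled by either claim, but this gap is absorbed by the factor-of-eight slack in the target range $[\sigma, 8\sigma]$.
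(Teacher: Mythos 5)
Your overall architecture matches the paper's: a concentrated/unconcentrated dichotomy for $\hat H_1^j$ driven by where $2^j$ sits relative to $\sigma$, a threshold separation $0.04k_1-\tau > 0.03k_1+\tau$ enforced by taking $k_1$ large, and contrapositives applied at the transition index. Your claim (II) is in fact argued by a genuinely nicer route than the paper's: the paper only establishes unconcentratedness for the single dyadic window $2^j\in[\sigma/2,\sigma]$ (by exhibiting, for each residue $a$, subintervals of $[\mu-2\sigma,\mu+2\sigma]$ of total length $\geq\sigma$) and then must observe that some $j''\geq j$ lands in that window; your shift-complement identity $A_a^c + 2\cdot 2^j = A_a$ combined with the total-variation bound $2\Phi(2^j/\sigma)-1$ gives $\mathbb{P}[A_a]\geq 0.15$ uniformly over $a$ and over \emph{all} $2^j\leq\sigma$ in one stroke. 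That part is correct and clean.

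There are, however, two concrete flaws. First, in claim (I) the assertion that ``the two pair-bins containing $r$ collect mass only from the tails of the Gaussian \ldots at most $\approx 0.025k_1$'' is false. Take $2^j=4\sigma$ and $\mu = (m+1)2^j - 0.01\sigma$ with $m=\lfloor\mu/2^j\rfloor$: residue $m+1$ then carries mass $\approx 0.496$, so the pair-bin $(m+1,m+2)$ has expected count near $k_1/2$, not $0.025k_1$. The residues one step from $m$ are only tail events when $\mu$ sits well inside its block. What is true --- and suffices, since you only need the minimum over $a$ --- is that at most one of $\mathbb{P}[x< m2^j]$ and $\mathbb{P}[x\geq (m+1)2^j]$ can exceed $0.025$ (writing $t=(\mu-m2^j)/\sigma\in[0,4)$, these are $\Phi(-t)$ and at most $1-\Phi(4-t)$, and both exceeding $0.025$ would force $t<1.96$ and $t>2.04$). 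Alternatively, the paper's route avoids this case analysis entirely: $[\mu-2\sigma,\mu+2\sigma]$ has width $\leq 2^j$, hence lies in a single pair-bin of mass $\geq 0.97$, whose opposite pair-bin then has mass $\leq 0.03$. Second, your degenerate case is mishandled: $2^{\lmax}\in[\sigma_{\max},2\sigma_{\max}]$ does \emph{not} follow from the construction, since $\lmax = L_1+\lmin-1$ with $L_1=\lfloor n/(2k_1)\rfloor$, so $2^{\lmax}$ can vastly exceed $\sigma_{\max}$ (only $\lmax\geq\lceil\log\sigma_{\max}\rceil$ is guaranteed); moreover $[\sigma_{\max},2\sigma_{\max}]\subseteq[\sigma,8\sigma]$ would itself require $\sigma\geq\sigma_{\max}/4$, which you assert but do not prove. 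The correct handling is immediate from your own claim (I): if no valid minimum exists then $\hat H_1^{\lmax}$ is not concentrated, so the contrapositive of (I) gives $2^{\lmax}<4\sigma$, and $2^{\lmax}\geq\sigma_{\max}\geq\sigma$ yields $\hat\sigma = 2^{\lmax}\in[\sigma,4\sigma)$. Both repairs are local; with them the proof goes through.
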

\begin{proof}
	Choose $j \in \L_1$. Below, we reason about two (non-exhaustive)
	possibilities for $j$.
	
	\underline{Case 1}: $2^j \geq 4\sigma$. Then there exists $a \in \{0,1,2,3\}$
	and interval $I$ of length $2^{j+1} \geq 8\sigma$ containing
	$[\mu - 2\sigma, \mu + 2\sigma]$ such that for all $x \in I$,
	$\lfloor x/2^j \rfloor \mod 4 \equiv a$ or $a+1 \mod 4$. By similar 
	application of the Gaussian CDF as in Lemma~\ref{lem:hatmuonek}, with
	probability at least $1-\beta/2L_1$,
	\[
		|\{x_i \mid x_i \in I, i \in U_1^j\}| \geq
		0.97k_1 - \sqrt{2k_1\ln(2L_1/\beta)}.
	\]
	Thus by Lemma~\ref{lem:hh_2}, $\hat H^j_1(a) \geq 0.97k_1 - \tau$. It follows
	that $\hat H^j_1(a+2) \leq 0.03k_1 + \tau$. $2^j \geq 4\sigma$ thus implies
	that histogram $\hat H^j_1$ is concentrated.
	
	\underline{Case 2}: $2^j \in [\sigma/2,\sigma]$. Choose
	$a \in \{0,1,2,3\}$. Since $2^j \in [\sigma/2,\sigma]$ there exist at most 
	three subintervals $I_1, I_2, I_3 \subset [\mu-2\sigma, \mu+2\sigma]$ such 
	that for all $x \in I = I_1 \cup I_2 \cup I_3$,
	$\lfloor x/2^j \rfloor \equiv a \mod 4$, and $|I| \geq \sigma$. Let 
	$x \sim N(\mu,\sigma^2)$. Then by a similar application of the Gaussian CDF as
	in Lemma~\ref{lem:hatmuonek}, since
	\[
		\P{x \in I} \geq \P{x \in [\mu-2\sigma,\mu-\sigma)} \geq 0.13
	\]
	with probability $1-\beta/8L_1$ at least $0.13k - \sqrt{2k_1\ln(8L_1/\beta)}$
	users from $U_1^j$ have points in $I$. Since this held for arbitrary $a$, a
	union bound over all four possibilities of $a$ combined with
	Lemma~\ref{lem:hh_2} implies that, with probability at least $1-\beta/2L_1$,
	\[
		\min_{a \in \{0,1,2,3\}}\hat H_1^j(a) \geq 0.13k_1 - \tau.
	\]
	$2^j \leq \sigma \leq 2^{j+1}$ thus implies that histogram $\hat H_1^j$ is 
	uniform. 
	
	Union bounding both results over $j \in \L_1$, with
	$k_1 > 800\left(2+\tfrac{4}{\eps}\right)^2\ln(8L_1/\beta)$, with probability
	$1-\beta$ we have $0.13k - \tau > 0.03k + \tau$ for each $j \in \L_1$.
	Therefore for all $j \in \L_1$ if $2^j \geq 4\sigma$ then $\hat H_1^j$ will be
	concentrated while if $2^{j+1} \geq \sigma \geq 2^j$ then $\hat H_1^j$ will be
	unconcentrated.
	
	Let $j$ be the smallest $j \in \L_1$ such that $\hat H_1^j$ is concentrated 
	and for all $j' > j$, $\hat H_1^{j'}$ is concentrated as well. If no such
	$j$ exists, then we know $2^{\lmax} \geq \sigma \geq 2^{\lmax-2}$, take
	$\hat \sigma = 2^{\lmax}$, and we get $\hat \sigma \in [\sigma, 4\sigma]$. If
	not, then by Case 1 above we know $2^j \leq 8\sigma$, and by Case 2 we know
	$2^j \geq \sigma$. Thus taking $\hat \sigma = 2^j$, we get
	$\hat \sigma \in [\sigma, 8\sigma]$.
\end{proof}

Next, the analyst uses randomized responses from $U_1$ to compute an initial estimate $\hat \mu_1$ of $\mu$. As the process \hatmuone~is identical to that used in \algok~up to a different subgroup range $\L_1$, we skip its description and only recall its guarantee: 

\begin{lemma}
\label{lem:hatmu}
	Conditioned on the success of the preceding lemmas, with probability at
	least $1-\beta$, $|\hat \mu_1 - \mu| \leq 2\sigma$.
\end{lemma}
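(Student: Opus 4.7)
The plan is to reduce Lemma~\ref{lem:hatmu} directly to Lemma~\ref{lem:hatmuonek}, since the text preceding the statement already points out that \hatmuone~is structurally identical between \algok~and \algo, with only the subgroup range changing from $\L$ to $\L_1$. Concretely, the aggregation $\hat H_2$ fed into \hatmuone~in \algo~is produced by \aggonek~applied to the \rrone~responses of $U_1$, which is exactly the input format assumed in Lemma~\ref{lem:hatmuonek}.

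First I would re-derive the analogue of Lemma~\ref{lem:hhk} for the new parameters: since only the number of subgroups changes (from $L$ to $L_1$), the same Chernoff plus union-bound argument yields $\|\hat H_2^j - H_2^j\|_\infty \leq (\tfrac{\eps+4}{\eps\sqrt{2}}) \sqrt{k_1 \ln(8L_1/\beta)}$ for all $j \in \L_1$ with probability at least $1-\beta$. Then I would verify that the two central claims in the proof of Lemma~\ref{lem:hatmuonek} carry over unchanged. Claim 1 (if $2^j > \sigma$ and $\hat H_2^{j'}(M_1(j')) \geq 0.52k_1 + \psi$ for all $j' > j$, then $\mu \in I_j$ by induction) relies only on Gaussian concentration around $\mu$ with scale $\sigma$; it does not require the analyst to know $\sigma$. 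Claim 2 (the chosen $c^*$ at the first unconcentrated level satisfies $|c^* 2^j - \mu| \leq 2\sigma$) likewise depends only on the true $\sigma$, not on an estimate. Both proofs port over verbatim.

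The one genuine thing to check — and the main obstacle — is that the parameter setup for \algo~still satisfies the two preconditions used implicitly in Lemma~\ref{lem:hatmuonek}: (i) the base case $\mu \in I_{\lmax}$, which requires $\lmax \geq \log \mu$, and (ii) $k_1$ large enough for every Chernoff bound in the case analysis to succeed simultaneously across the $L_1$ groups. The assumption $\tfrac{n}{\log n} = \Omega\bigl(\bigl[\log(\tfrac{\sigma_{\max}}{\sigma_{\min}} + 1) + \log \mu\bigr]\log(1/\beta)/\eps^2\bigr)$ in Theorem~\ref{thm:uv_two} is precisely tuned for this: the $\log(\sigma_{\max}/\sigma_{\min}+1)$ factor controls $L_1$ (so $k_1 = \Omega(\log(n/\beta)/\eps^2)$ suffices after splitting), while the $\log \mu$ factor forces $\lmax \geq \lceil \log \mu \rceil$ so $\mu \in I_{\lmax} = [0, 2^{\lmax}]$. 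Note also that the $\lmin = \lfloor \log \sigma_{\min} \rfloor$ used here is no larger than $\lfloor \log \sigma \rfloor$ (since $\sigma \geq \sigma_{\min}$), so the case distinction $2^j > \sigma$ versus $2^j \leq \sigma$ in the original proof still partitions $\L_1$ correctly and the binary search either exits at some $j$ with $2^j > \sigma$ (giving Claim~2) or bottoms out with $2^{j+1} \leq 2\sigma$ (giving the trivial interval bound).

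Finally I would assemble these pieces exactly as in Lemma~\ref{lem:hatmuonek}: let $j_1$ be the maximum $j \in \L_1$ with $\hat H_2^{j_1}(M_1(j_1)) < 0.52 k_1 + \psi$; if $2^{j_1} > \sigma$ then Claim~2 gives $|c^* 2^{j_1} - \mu| \leq 2\sigma$, otherwise any $\hat \mu_1 \in I_{j_1}$ lies within $2^{j_1+1} \leq 2\sigma$ of $\mu$. Either way $|\hat \mu_1 - \mu| \leq 2\sigma$ with probability at least $1-\beta$, which is the desired conclusion.
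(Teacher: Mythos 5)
Your proposal matches the paper exactly: the paper gives no separate proof of this lemma, stating only that \hatmuone~in \algo~is identical to its use in \algok~up to the subgroup range $\L_1$, and then recalling the guarantee of Lemma~\ref{lem:hatmuonek}. Your write-up is simply a more careful version of that same reduction, with the useful extra step of checking the base case $\mu \in I_{\lmax}$ and the subgroup-size condition under the new parameters, so it is correct and takes the same route.
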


From the results above, the analyst obtains an estimate $\hat \sigma$ such that $\hat \sigma \in [\sigma,8\sigma]$ and an estimate $\hat \mu_1$ such that $|\hat \mu_1 - \mu| \leq 2\sigma$. The analyst now uses these to compute interval $I = [\hat \mu_1 - \hat \sigma(2 + \sqrt{\ln(4n)}), \hat \mu_1 + \hat \sigma(2 + \sqrt{\ln(4n)})]$, where $I$ is intentionally constructed to (with high probability) contain the points of 
$\Omega(n)$ users. The analyst then passes $I$ to users in $U_2$. Users in $U_2$ respond with noisy responses via independent calls to \rrtwo. In \rrtwo, each user clips their sample $x_i$ to the interval $I$ and reports a private version $\tilde y_i$ using Laplace noise scaled to $|I|$.

\begin{algorithm}[h]
	\caption{\rrtwo}
	\begin{algorithmic}[1]
		\REQUIRE $\eps, i, I$
		\STATE User $i$ computes $x_i' \gets \arg \min_{x \in I}|x-x_i|$
		\STATE User $i$ outputs $\tilde y_i \gets x_i' + \Lap{|I|/\eps}$
		\ENSURE Private version of user's point clipped to $I$
	\end{algorithmic}
\end{algorithm}

The average of these $\tilde y_i$ then approximates $\mu$. We formalize this in the following lemma, which proves our main result.

\begin{lemma}
\label{lem:hatmutwo}
	Conditioned on the success of the previous lemmas, with probability at least
	$1-\beta$, $|\hat \mu_2 - \mu|
	= O\left(\tfrac{\sigma}{\eps}\sqrt{\tfrac{\log(1/\beta)\log(n)}{n}}\right)$.
\end{lemma}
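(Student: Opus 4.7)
The plan is to decompose $\hat \mu_2 - \mu = A + B$, where $A = \frac{2}{n}\sum_{i \in U_2} L_i$ is the aggregated Laplace noise (with $L_i \sim \Lap{|I|/\eps}$ i.i.d.) and $B = \frac{2}{n}\sum_{i \in U_2}(x_i' - \mu)$ is the average of the clipped samples centered at $\mu$, and then to bound $|A|$ and $|B|$ separately. From the preceding lemmas I have $\hat \sigma \in [\sigma, 8\sigma]$ and $|\hat \mu_1 - \mu| \leq 2\sigma$, so $|I| = 2\hat \sigma (2+\sqrt{\ln(4n)}) = O(\sigma\sqrt{\log n})$, and since the half-width of $I$ is at least $\sigma(2+\sqrt{\ln(4n)})$ while its center is within $2\sigma$ of $\mu$, we have $I \supseteq [\mu - \sigma\sqrt{\ln(4n)},\, \mu + \sigma\sqrt{\ln(4n)}]$ and in particular $\mu \in I$.

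For $A$, the $L_i$ are $n/2$ independent mean-zero Laplaces with scale $|I|/\eps$, and Bernstein's inequality (applied to these sub-exponential variables) yields $|A| = O\!\left(\frac{|I|}{\eps}\sqrt{\frac{\log(1/\beta)}{n}}\right) = O\!\left(\frac{\sigma}{\eps}\sqrt{\frac{\log(n)\log(1/\beta)}{n}}\right)$ with probability at least $1-\beta/2$, already matching the target rate.

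For $B$, every $x_i'$ lies in $I$, so $|x_i' - \mu| \leq |I|/2 + 2\sigma = O(\sigma\sqrt{\log n})$. Since the clipped samples are i.i.d., Hoeffding's inequality applied to these $n/2$ bounded variables gives $|B - \E{B}| = O\!\left(\sigma\sqrt{\frac{\log(n)\log(1/\beta)}{n}}\right)$ with probability at least $1-\beta/2$. To control the bias $\E{B} = \E{x_i' - x_i}$, which is supported on $\{x_i \notin I\}$, I will use the standard Gaussian tail identity $\E{(x_i - \mu)\ind(x_i > \mu + \sigma t)} = \sigma \phi(t)$ with $t = \sqrt{\ln(4n)}$ (and its symmetric counterpart), obtaining $|\E{B}| = O(\sigma/\sqrt{n})$, which is dominated by the Hoeffding deviation.

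Combining the two high-probability bounds through a union bound and the triangle inequality then finishes the proof. The main obstacle to watch is the clipping bias: one must verify that $I$ extends far enough past $\mu$ (specifically, at least $\sigma\sqrt{\ln(4n)}$ on each side) so that the probability of clipping any single sample is $O(1/\sqrt{n})$ and the resulting bias is negligible. This is precisely why $I$ is constructed with radius $\hat \sigma(2 + \sqrt{\ln(4n)})$ rather than merely $O(\hat \sigma)$---the ``$2$'' absorbs the slack $|\hat \mu_1 - \mu| \leq 2\sigma$, while the $\sqrt{\ln(4n)}$ buys the needed tail coverage, and in return the Laplace noise scale only grows by a $\sqrt{\log n}$ factor.
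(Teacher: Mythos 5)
Your proposal is correct and follows essentially the same route as the paper: the identical decomposition of $\hat\mu_2-\mu$ into the averaged Laplace noise (controlled by concentration of independent Laplace variables at scale $|I|/\eps$) plus the averaged clipped samples (controlled via the containment $I \supseteq [\mu-\sigma\sqrt{\ln(4n)},\mu+\sigma\sqrt{\ln(4n)}]$ guaranteed by $\hat\sigma\ge\sigma$ and $|\hat\mu_1-\mu|\le 2\sigma$). The only difference is cosmetic: the paper handles clipping by splitting $U_2$ into users inside and outside $I$ and bounding the count of the latter with a Chernoff bound, whereas you bound the deterministic clipping bias directly via the Gaussian tail identity and then apply Hoeffding to the centered clipped sum --- an arguably cleaner bookkeeping that yields the same $O\bigl(\tfrac{\sigma}{\eps}\sqrt{\log(1/\beta)\log(n)/n}\bigr)$ rate.
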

\begin{proof}
	There are two sources of error in the analyst's estimate
	$\hat \mu_2 = \tfrac{2}{n}\sum_i \tilde y_i$: error from the unnoised $x_i'$s
	and error from noise in $\tilde y_i$s. Specifically, recalling that
	$|U_2| = n/2$, we can decompose	$\hat \mu_2$ as
	\[
		\hat \mu_2 = \frac{2}{n} \sum_i \tilde y_i
		= \frac{2}{n}\sum_i (x_i' + \eta_i)
	\]
	where each $\eta_i \sim _{i.i.d.}\Lap{|I|/\eps}$ and
	$|I| = 2\hat \sigma(2 + \sqrt{\ln(4n)})$.
	
	First, using $n > 4\ln(3/\beta)$ by concentration of independent Laplace 
	random variables (see e.g. Lemma 2.8 in~\citet{CSS11}) with probability at
	least $1-\beta/3$,
	\[
		\left| \frac{2}{n} \sum_i \eta_i \right|
		\leq \frac{4|I|}{\eps}\sqrt{\frac{2\ln(3/\beta)}{n}}
		\leq \frac{8\hat \sigma(2 + \sqrt{\ln(4n)})}{\eps}
		\sqrt{\frac{2\ln(3/\beta)}{n}}
		= O\left(\frac{\hat \sigma}{\eps}\sqrt{\frac{\log(1/\beta)\log(n)}{n}}
		\right).
	\]
	This bounds the contribution of Laplace noise to overall error.
	
	It remains to bound $|\frac{2}{n} \sum_i x_i' - \mu|$. Let $V$ denote the set
	of users with $x_i \in I$ and $V^c$ denote the set of users with
	$x_i \not \in I$. First, by a Gaussian tail bound, for each user $i$,
	$\P{|x_i - \mu| \geq \sigma\sqrt{\ln(4n)}} \leq 1/\sqrt{n}$. Then by
	a Chernoff bound 
	\[
		\P{|V^c| > \left(1 + \sqrt{\frac{6\ln(3/\beta)}{n^{3/2}}}\right)\sqrt{n}}
		\leq \beta/3
	\]
	and using $n \geq (6\ln(2/\beta))^{2/3}$ we get 
	$\sqrt{\frac{6\ln(3/\beta)}{n^{3/2}}} \leq 1$, so with probability at least
	$1-\beta/3$, $|V^c| \leq 2\sqrt{n}$. Thus
	\[
		\frac{2}{n}\sum_{i \in V^c} |x_i' - \mu| \leq \frac{2}{n}(|V^c| \cdot |I|)
		\leq \frac{6 \hat \sigma(2 + \sqrt{\ln(4n)})}{\sqrt{n}}
		= O\left(\frac{\hat \sigma \sqrt{\log(n)}}{\sqrt{n}}\right).
	\]
	This bounds the contribution of error from the (unprivatized) data of users
	in $V^c$. Let $V$ denote the set of users in $U_2$ with points in $I$. We
	bound the error contributed by users in $V$ in a similar way. Users in $V$
	have $x_i' = x_i$, so by a Chernoff bound on (shifted) $[0,|I|]$-bounded
	random variables, with probability at least $1-\beta/3$
	\[
		\frac{2}{n}\sum_{i \in V^c} |x_i' - \mu|
		= \frac{2}{n}\sum_{i \in V^c} |x_i - \mu| \leq
		|I|\sqrt{\frac{2\ln(6/\beta)}{n}} \leq
		\hat \sigma(2 + \sqrt{\ln(4n)})\sqrt{\frac{2\ln(6/\beta)}{n}}
		= O\left(\frac{\hat \sigma \sqrt{\log(1/\beta)\log(n)}}{\sqrt{n}}\right).
	\]
	Putting these three bounds together, we get
	\begin{align*}
		\left|\frac{2}{n} \sum_i \tilde y_i - \mu\right|
		\leq&\; \frac{2}{n} \sum_i |x_i' + \eta_i - \mu| \\
		\leq&\; \frac{2}{n} \sum_i |x_i' - \mu| + \frac{2}{n}\sum_i |\eta_i| \\
		=&\; \frac{2}{n} \sum_{i \in V} |x_i' - \mu| 
		+ \frac{2}{n}\sum_{i \in V^c} |x_i' - \mu|
		+ \frac{2}{n}\sum_i |\eta_i| \\
		=&\; O\left(\frac{\sigma}{\eps}\sqrt{\frac{\log(n)\log(1/\beta)}{n}}\right)
	\end{align*}
	where the last step uses $\hat \sigma \in [\sigma,8\sigma]$ from 
	Lemma~\ref{lem:sigma}.
\end{proof}

\section{Proofs from Section~\ref{subsec:uv_one}}
\label{subsec:uv_one_supp}
We start with pseudocode for \algoone .

\begin{algorithm}[H]
	\caption{\algoone}
	\begin{algorithmic}[1]
		\REQUIRE $\eps, k_1, k_2, \L_1, n, R, \sigma, U_1, U_2,$
		\STATE Analyst computes $\rho \gets \lceil \sqrt{2\ln(2\sqrt{n})} + 6\rceil$
		\FOR{$j \in \L_1$}
			\FOR{user $i \in U_1^j$}
				\STATE User $i$ outputs $\tilde y_i \gets \rrone(\eps,i,j)$
			\ENDFOR
		\ENDFOR
		\FOR{$j_1 \in \L_1$}
			\FOR{$j_2 \in R_{j_1}$}
				\FOR{user $i \in U_2^{j_1,j_2}$}
					\STATE User $i$ outputs $\tilde y_i \gets
					\rrthree(\eps, i, j_1, j_2, \rho, S)$
				\ENDFOR
			\ENDFOR
		\ENDFOR \COMMENT{End of round 1}
		\STATE Analyst computes	$\hat H_1 \gets \aggone(\eps, k_1, \L_1, U_1)$
		\STATE Analyst computes $\hat \sigma \gets
		\hatsigma(\beta, \eps, \hat H_1, k_1, \L)$
		\STATE Analyst computes $j_1 \gets \log(\hat \sigma)$
		\STATE Analyst computes $\hat H_2 \gets \aggonek(\eps, k_1, \L_1, U_1)$
		\STATE Analyst computes
		$\hat \mu_1 \gets \hatmuone(\beta, \eps, \hat H_2, k_1, \L_1)$
		\STATE Analyst computes $j_2 \gets
		\arg \min_{j \in R_{j_1}}(\min_{s \in S(j_1,j)} |s - \hat \mu_1|)$	
		\STATE Analyst computes $s^* \gets \min_{s \in S(j_1,j_2)} |s - \hat \mu_1|$
		\STATE Analyst outputs 	$\hat \mu_2 \gets
		s^* + \tfrac{1}{k_2}\sum_{i \in U_2^{j_1,j_2}} \tilde y_i$ 
		\ENSURE Analyst estimate $\hat \mu_2$ of $\mu$
	\end{algorithmic}
\end{algorithm}

\algoone 's privacy guarantee follows from the same analysis of randomized response and Laplace noise as for \algo, so we omit its proof.

\begin{theorem}
\label{thm:algoone_privacy}
	\algoone~satisfies $(\eps,0)$-local differentially privacy for
	$x_1, \ldots, x_n$.
\end{theorem}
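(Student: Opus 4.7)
The plan is to mirror the privacy proofs for \algok, \algokone, and \algo, all of which reduce to verifying that each local randomizer invoked in the protocol is $(\eps,0)$-locally private. Since \algoone~is sequentially interactive by construction --- every user $i$ belongs to exactly one subgroup, either some $U_1^j$ or some $U_2^{j_1,j_2}$, and publishes exactly one message --- it suffices to check each randomizer individually; there is no composition across users to worry about.

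Inspecting the pseudocode, only two randomizers are invoked: \rrone~(for users in $U_1$) and \rrthree~(for users in $U_2$). The privacy of \rrone~was already established in Theorem~\ref{thm:algok_privacy} by the direct ratio calculation on the randomized response probabilities $e^\eps/(e^\eps+3)$ versus $1/(e^\eps+3)$. So the only new content is checking \rrthree.

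For \rrthree, the key observation is that the interval $I_{j_1,j_2}$ used for clipping is a function only of the public parameters $j_1, j_2, \rho, S$ (handed in by the analyst / protocol) and not of the private value $x_i$. The randomizer then clips $x_i$ to $I_{j_1,j_2}$ and adds independent noise $\Lap{|I_{j_1,j_2}|/\eps}$, exactly mirroring \rrtwo~from \algo. Because the clip-to-$I_{j_1,j_2}$ map has global sensitivity at most $|I_{j_1,j_2}|$ as a function of its input $x_i$, the standard Laplace mechanism guarantee (Theorem 3.6 of \citet{DR14}), already invoked in the privacy proof of \algo, yields that \rrthree~is $(\eps,0)$-differentially private.

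I expect no real obstacle here: the entire argument is bookkeeping that the data-independent choice of $(j_1,j_2)$ (and hence of $I_{j_1,j_2}$) does not leak information about $x_i$, together with citing the two previously established primitives. Combining these with sequential interactivity gives $(\eps,0)$-local privacy for \algoone~on input $x_1,\ldots,x_n$.
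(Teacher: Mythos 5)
Your proof is correct and matches the paper's approach: the paper omits this proof entirely, stating only that \algoone's privacy ``follows from the same analysis of randomized response and Laplace noise as for \algo,'' which is exactly the argument you spell out (sequential interactivity reduces the claim to per-randomizer privacy of \rrone~and \rrthree). One small inaccuracy worth noting: \rrthree~does not clip to an interval but rather outputs $y_i = x_i - s_i$ for the nearest lattice point $s_i \in S(j_1,j_2)$; since the lattice is data-independent and this offset has range (hence sensitivity) at most $\rho 2^{j_1} \leq 2\rho 2^{j_1}$, the Laplace mechanism guarantee you invoke with noise $\Lap{2\rho 2^{j_1}/\eps}$ still applies verbatim.
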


We define $k_1, \L_1,$ and $U_1$, as in \algo~and skip the analysis of \algoone 's treatment of users in $U_1$ as it is identical to that of \algo. We recall its collected guarantee:

\begin{lemma}
\label{lem:algoone_restated}
	With probability at least $1-\beta$, $\hat \sigma \in [\sigma, 8\sigma]$ and
	$|\hat \mu_1 - \mu| \leq 2\sigma$.
\end{lemma}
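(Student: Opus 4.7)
The plan is to observe that \algoone~processes users in $U_1$ in exactly the same way as \algo: the user pool is split into the same $L_1$ subgroups of size $k_1$ indexed by $\L_1$, each user in $U_1^j$ invokes the same randomizer \rrone, and the analyst constructs the same aggregated histograms $\hat H_1$ (via \aggone) and $\hat H_2$ (via \aggonek) before running \hatsigma~and \hatmuone. Because the first-round data-processing pipeline is identical to that of \algo, the first-round accuracy analysis from Section~\ref{subsec:uv_two_supp} transfers with no change.

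Concretely, I would first reapply the Chernoff bound of Lemma~\ref{lem:hh_2} to the histograms built from $U_1$'s responses, absorbing a failure probability of $\beta/3$. Conditioned on this concentration event, I would invoke Lemma~\ref{lem:sigma} to conclude $\hat \sigma \in [\sigma, 8\sigma]$ with failure probability $\beta/3$, and Lemma~\ref{lem:hatmu} (via the binary-search argument of Lemma~\ref{lem:hatmuonek}) to conclude $|\hat \mu_1 - \mu| \leq 2\sigma$ with failure probability $\beta/3$. A union bound then combines these three events into the claimed joint guarantee with probability at least $1 - \beta$.

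The only real check is that the quantitative hypotheses of these transferred lemmas, namely the lower bounds $k_1 = \Omega(\log(n/\beta)/\eps^2)$ and $L_1 \geq \lceil \log(\sigma_{\max}) \rceil - \lfloor \log(\sigma_{\min}) \rfloor$ used throughout Section~\ref{subsec:uv_two_supp}, are met by \algoone's parameter choices. This is immediate from the sample-size assumption $\tfrac{n}{\log(n)} = \Omega\!\left(\tfrac{[\log(\sigma_{\max}/\sigma_{\min}+1) + \log(\mu)]\log(1/\beta)}{\eps^2}\right)$ carried over from Theorem~\ref{thm:uv_one}, which is precisely the hypothesis used for Theorem~\ref{thm:uv_two}. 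Since no new mechanism or randomness is introduced, there is no genuine obstacle; the lemma is essentially a bookkeeping step that packages the first-round guarantee for use in the subsequent, more delicate analysis of the parallelized second round on $U_2$, where the real work of \algoone~lies.
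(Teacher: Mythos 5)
Your proposal is correct and takes the same route as the paper, which in fact omits the proof entirely with the remark that \algoone's treatment of $U_1$ is identical to that of \algo, so the guarantees of Lemma~\ref{lem:sigma} and Lemma~\ref{lem:hatmu} carry over directly. Your version just makes that transfer explicit, with the union-bound bookkeeping spelled out.
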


We again define $R$ and $S$ for $U_2$, albeit with a few modifications. First, we let $\rho = \lceil \sqrt{\ln(4n)} + 6\rceil$ for neatness. Then, recalling from Section~\ref{subsec:uv_two_supp} that $\L_1$ ranges over possible values of $\log(\sigma)$, for each $j_a \in \L_1$ we define $R_{j_a} = \{2^{j_a}, 2 \cdot 2^{j_a}, \ldots, \rho \cdot 2^{j_a}\}$. Next, for each $j_a \in \L_1$ and $j_b \in R_{j_a}$, we define  $S(j_a,j_b) = \{j_b + b\rho 2^{j_a} \mid b \in \mathbb{Z}\}$. Finally, we split $U_2$ into $L_1 \cdot \rho$ subgroups $U_2^{j_a,j_b}$ of size $k_2 = \Omega\left(\tfrac{n}{\log\left(\tfrac{\sigma_{\max}}{\sigma_{\min}}+1\right) \sqrt{\log(n)}}\right)$ for each $j_a \in \L_1$ and $j_b \in R_{j_a}$. As in \algokone, we parallelize over these subgroups to simulate the second round of \algo~for different values of $(j_a,j_b)$. 

In each subgroup $U_2^{j_a,j_b}$, each user $i$ computes the nearest element $s_i \in S(j_a,j_b)$ to $x_i$, $s_i = \arg \min_{s \in S(j_a,j_b)} |x_i - s|$ and outputs $x_i - s_i$ plus Laplace noise in \rrthree. The analyst then uses estimates $j_1 = \lceil\log(\hat \sigma)\rceil$ and $\hat \mu_1$ from $U_1$ to compute $j_2 = \arg \min_{j \in R_{j_1}} (\min_{z \in S(j_1,j)} |z - \hat \mu_1|)$. Finally, the analyst aggregates randomized responses from group $U_2^{j_1,\hat \mu_2}$ into an estimate $\hat \mu_2$.

\begin{algorithm}[H]
	\caption{\rrthree}
	\begin{algorithmic}[1]
		\REQUIRE $\eps, i, j_1, j_2, \rho, S$
		\STATE User $i$ computes $s_i \gets \min_{s \in S(j_1,j_2)}|s - x_i|$
		\STATE User $i$ computes $y_i \gets x_i - s_i$
		\STATE User $i$ outputs $\tilde y_i \gets y_i + \Lap{2\rho 2^{j_1}/\eps}$
		\ENSURE Private version of user's point $x_i$
	\end{algorithmic}
\end{algorithm}

As in \algokone, we start with a concentration result for each $U_2^{j_1,j_2}$. Since its proof is similar to that of Lemma~\ref{lem:rronekone_cluster}, we omit it.

\begin{lemma}
\label{lem:rrfour_cluster}
	With probability at least $1-\beta$, for all $j_1 \in \L_1$ and $j_2 \in R_{j_1}$, group $U_2^{j_1,j_2}$ contains $\leq 2\sqrt{k_2}$ users $i$ such that $|x_i - \mu| > \sigma\sqrt{\ln(4n)}$.
\end{lemma}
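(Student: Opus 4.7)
The plan is to mirror the proof of Lemma~\ref{lem:rronekone_cluster} almost verbatim, the only substantive change being that the union bound must cover the larger collection of subgroups $\{U_2^{j_1,j_2} : j_1 \in \L_1,\, j_2 \in R_{j_1}\}$ rather than the single-index family $\{U_2^j : j \in R\}$ from the known-variance setting.

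First I would apply the standard Gaussian tail bound to conclude that for any single user $i$ with $x_i \sim N(\mu,\sigma^2)$, we have $\P{|x_i - \mu| \geq \sigma\sqrt{\ln(4n)}} \leq 1/\sqrt{n}$. Fix a subgroup $U_2^{j_1,j_2}$ of size $k_2$ and let $U_C^{j_1,j_2} \subseteq U_2^{j_1,j_2}$ denote the users whose points exceed this threshold. Since the samples are i.i.d., $|U_C^{j_1,j_2}|$ is stochastically dominated by a binomial random variable with mean at most $k_2/\sqrt{n}$. A standard binomial Chernoff bound then yields
\[
  \P{|U_C^{j_1,j_2}| > \frac{k_2}{\sqrt{n}} + \sqrt{\frac{3k_2 \ln(|\L_1|\,\rho/\beta)}{\sqrt{n}}}} \leq \beta/(|\L_1|\,\rho),
\]
exactly as in the earlier proof but with the failure budget divided across all of the $|\L_1| \cdot \rho = O\!\left(\log(\sigma_{\max}/\sigma_{\min}+1)\sqrt{\log n}\right)$ subgroups rather than across $|R|$.

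Next I would union bound over all such pairs $(j_1,j_2)$. Since $|\L_1|\,\rho$ grows only poly-logarithmically in $n$ and in $\sigma_{\max}/\sigma_{\min}$, the assumed lower bound on $n$ (specifically, that $n$ is large enough that $9\ln(|\L_1|\,\rho/\beta)^2 \leq n$) implies both $k_2/\sqrt{n} \leq \sqrt{k_2}$ and $\sqrt{3 k_2 \ln(|\L_1|\,\rho/\beta)/\sqrt{n}} \leq \sqrt{k_2}$, so the count in any single subgroup is at most $2\sqrt{k_2}$ simultaneously across all subgroups with probability at least $1-\beta$. This delivers the claimed bound.

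The only place any care is needed is verifying that the required inequalities $k_2/\sqrt{n} \leq \sqrt{k_2}$ and $\ln(|\L_1|\,\rho/\beta) \leq \sqrt{n}/3$ follow from the parameter regime declared in Theorem~\ref{thm:uv_one}; this is where the $\Omega\!\left(\log(\sigma_{\max}/\sigma_{\min}+1)\sqrt{\log n}\right)$ factor in $k_2$ enters. Beyond this bookkeeping the argument is routine, so I would state these parameter checks briefly and omit further detail, just as the authors do when they declare the proof of this lemma ``similar to that of Lemma~\ref{lem:rronekone_cluster}.''
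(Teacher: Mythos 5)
Your proposal is correct and matches the paper's approach exactly: the paper omits this proof precisely because it is the argument of Lemma~\ref{lem:rronekone_cluster} repeated with the union bound taken over the larger family of $|\L_1|\cdot\rho$ subgroups, which is what you reconstruct. Your parameter checks ($k_2/\sqrt{n}\leq\sqrt{k_2}$ and $n$ large enough relative to $\ln(|\L_1|\rho/\beta)^2$) are the right bookkeeping and mirror the conditions used in the earlier lemma.
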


In combination with the previous lemmas, this enables us to prove our final accuracy result.

\begin{lemma}
	Conditioned on the success of the previous lemmas, with probability at least $1-\beta$, \algoone~outputs $\hat \mu_2$ such that
	\[
		|\hat \mu_2 - \mu| = O\left(\frac{\sigma}{\eps}
		\sqrt{\frac{\log\left(\frac{\sigma_{\max}}{\sigma_{\min}} + 1\right)
		\log(1/\beta)\log^{3/2}(n)}{n}}\right).
	\]
\end{lemma}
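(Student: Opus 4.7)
The plan is to decompose the error $|\hat{\mu}_2 - \mu|$ into three parts and bound each separately, mirroring the structure of Lemma~\ref{lem:kv_one_final} (the known-variance one-round case) and Lemma~\ref{lem:hatmutwo} (the two-round unknown-variance case). Writing $\hat{\mu}_2 - \mu = (s^* - \mu) + \tfrac{1}{k_2}\sum_{i \in U_2^{j_1,j_2}}(x_i - s_i) - (\mu - s^*) + \tfrac{1}{k_2}\sum_i \eta_i$, where $\eta_i \sim \Lap{2\rho 2^{j_1}/\eps}$ is the noise added inside \rrthree{}, I would show that (i) the shift $s^*$ is within $O(\sigma)$ of $\mu$, (ii) most users' chosen $s_i$ equals $s^*$ so that the sampling error behaves like that of untruncated Gaussians, and (iii) the Laplace noise averaged over $k_2 = \Omega(n/[\log(\sigma_{\max}/\sigma_{\min}+1)\sqrt{\log n}])$ users is small.

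First I would use Lemma~\ref{lem:algoone_restated} to observe $2^{j_1}$ and $\sigma$ agree up to constants (since $j_1 \approx \log \hat{\sigma}$ and $\hat{\sigma} \in [\sigma, 8\sigma]$), so $2^{j_1} = \Theta(\sigma)$. Next, the key combinatorial fact is that as $j_b$ ranges over $R_{j_1}$ the union $\bigcup_{j_b} S(j_1, j_b)$ forms a grid of spacing $2^{j_1}$, so by the definition of $j_2$ we obtain $|s^* - \hat{\mu}_1| \le 2^{j_1}/2 = O(\sigma)$, and combining with $|\hat{\mu}_1 - \mu| \le 2\sigma$ from Lemma~\ref{lem:algoone_restated} yields $|s^* - \mu| = O(\sigma)$. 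Then, by Lemma~\ref{lem:rrfour_cluster}, all but $O(\sqrt{k_2})$ users in $U_2^{j_1, j_2}$ satisfy $|x_i - \mu| \le \sigma\sqrt{\ln(4n)}$; combining with $|s^* - \mu| = O(\sigma)$ gives $|x_i - s^*| \le \sigma(\sqrt{\ln(4n)} + O(1))$, which by the choice $\rho = \lceil \sqrt{2\ln(2\sqrt n)} + 6\rceil$ lies below $\rho 2^{j_1}/2$, the half-grid spacing inside $S(j_1, j_2)$. Hence these ``good'' users have $s_i = s^*$ and so $x_i - s_i = x_i - s^*$.

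Now I would bound the three error pieces. For the good users $V$, the variables $x_i - s^*$ have mean $\mu - s^*$ and are $\sigma$-subgaussian, so a Hoeffding-type bound (after truncating the tails using Lemma~\ref{lem:rrfour_cluster}) gives $|\tfrac{1}{k_2}\sum_{i \in V}(x_i - s^*) - (\mu - s^*)| = O(\sigma\sqrt{\log(1/\beta)/k_2})$ with probability $1 - \beta/3$. For the ``bad'' users $V^c$, using $|V^c| \le 2\sqrt{k_2}$ and the deterministic bound $|x_i - s_i| \le \rho 2^{j_1}/2 = O(\sigma\sqrt{\log n})$, the contribution to the average is $O(\sigma\sqrt{\log n}/\sqrt{k_2})$. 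For the Laplace noise, concentration of sums of $k_2$ i.i.d. Laplace variables of scale $2\rho 2^{j_1}/\eps = O(\sigma\sqrt{\log n}/\eps)$ (as in Lemma~\ref{lem:hatmutwo}, e.g.\ via Lemma 2.8 of~\citet{CSS11}) gives $|\tfrac{1}{k_2}\sum_i \eta_i| = O\left(\tfrac{\sigma\sqrt{\log(1/\beta)\log n}}{\eps\sqrt{k_2}}\right)$ with probability $1 - \beta/3$. The dominant term is the Laplace contribution, and substituting $k_2 = \Omega(n/[\log(\sigma_{\max}/\sigma_{\min}+1)\sqrt{\log n}])$ and taking a union bound yields the claimed bound.

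The main obstacle is the tension in the choice of $\rho$. It must be large enough that $\rho 2^{j_1}/2$ exceeds the maximum distance $|x_i - s^*|$ for typical users so that snapping to $S(j_1,j_2)$ is unambiguous, yet small enough that the Laplace scale $2\rho 2^{j_1}/\eps$ does not inflate the final error beyond $O(\sigma\sqrt{\log^{3/2} n/\eps^2 n})$ after averaging. Verifying that $\rho = \lceil\sqrt{2\ln(2\sqrt n)} + 6\rceil$ together with $2^{j_1} \in [\sigma, 16\sigma]$ comfortably dominates $\sqrt{\ln(4n)} + O(1)$ requires tracking the constants from Lemma~\ref{lem:algoone_restated} (the $2^{j_1}/2$ slack from $s^*$ and the $2\sigma$ slack from $\hat{\mu}_1$) carefully; this is the analog of the constant-chasing done in Lemma~\ref{lem:kv_one_final} and Lemma~\ref{lem:hatmutwo}, and is where the proof is most delicate.
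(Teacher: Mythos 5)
Your proposal takes essentially the same route as the paper: the same $|s^*-\mu|=O(\sigma)$ bound obtained from the grid spacing together with Lemma~\ref{lem:algoone_restated}, the same use of Lemma~\ref{lem:rrfour_cluster} to show all but $O(\sqrt{k_2})$ users snap to $s^*$, and the same three-way error decomposition (good users, bad users, averaged Laplace noise) with the Laplace term dominating after substituting $k_2$. The one step you gloss over is that conditioning on $s_i = s^*$ shifts $\E{x_i}$ away from $\mu$, since the snapping window $[s^*-\rho\hat\sigma, s^*+\rho\hat\sigma]$ is not centered at $\mu$; the paper bounds this bias by $O(\sigma/\sqrt{n})$ via an explicit integral of $(x-\mu)f(x)$ over the Gaussian tails, a negligible correction that completes your argument.
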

\begin{proof}
	By Lemma~\ref{lem:algoone_restated}, $\hat \sigma \in [\sigma, 8\sigma]$ and
	$|\hat \mu_1 - \mu| \leq 2\sigma$. Since $j_1 = \log(\hat \sigma)\in \L_1$
	and \\
	$j_2 = \arg \min_{j \in R_{j_1}}(\min_{s \in S(j_1,j)} |s - \hat \mu_1|)$,
	by the definition of $s^* \in S(j_1,j_2)$,
	$|s^* - \hat \mu_1| \leq 0.5\hat \sigma < 4\sigma$. Thus
	$|s^* - \mu| < 6\sigma$.
	
	Consider group $U_2^{j_1,j_2}$. By Lemma~\ref{lem:rrfour_cluster} at most 
	$2\sqrt{k_2}$ users $i \in U_2^{j_1,j_2}$ have
	$|x_i - \mu| > \sigma\sqrt{\ln(4n)}$. Thus by
	$|s^* - \mu| < 6\sigma$ and the fact that any two points in
	$S(j_1,j_2)$ are at least
	$\hat \sigma \rho \geq \sigma(6 + \sqrt{\ln(4n)})$ far apart,
	we get that at least $k_2 - 2\sqrt{k_2}$ users
	$i \in U_2^{j_1,j_2}$ set $s_i = s^*$ in their run of \rrthree. Denote this
	subset of users by $V$, and denote by $V^c$ the set of users
	$i \in U_2^{j_1,j_2}$ such that $s_i \neq s^*$, and for each user
	$i \in U_2$ let $y_i = x_i - s_i$. 

	Let $f(x) = \tfrac{1}{\sigma\sqrt{2\pi}}\exp(-(x-\mu)^2/2\sigma^2)$, the density
	for $N(\mu,\sigma^2)$. Then
	\begin{equation}
	\label{eq:int_sum}
		\int_{\infty}^{\infty} (x-\mu)f(x)dx
		= \int_{-\infty}^{s^* - \rho\hat \sigma} (x-\mu)f(x)dx
		+ \int_{s^* - \rho\hat \sigma}^{s^* + \rho\hat \sigma} (x-\mu)f(x)dx
		+ \int_{s^* + \rho\hat \sigma}^{\infty} (x-\mu)f(x)dx.
	\end{equation}
	Let $g(x) = -\tfrac{\sigma}{\sqrt{2\pi}}\exp(-(x-\mu)^2/2\sigma^2)$, the
	antiderivative of $(x-\mu)f(x)$. Then
	\begin{align*}
		\left|\int_{-\infty}^{s^* - \rho\hat\sigma} (x-\mu)f(x)dx\right|
		=&\; \left|g(s^* - \rho\hat\sigma) - \lim_{b \to -\infty} g(b)\right| \\
		=&\; \left|\frac{\sigma}{\sqrt{2\pi}} \cdot \exp\left(-\frac{(s^* - \rho\hat\sigma - \mu)^2}{2\sigma^2}\right)\right| \\
		\leq&\; \left|\frac{\sigma}{\sqrt{2\pi}} \cdot \exp\left(-\frac{([6-\rho]\sigma)^2}{2\sigma^2}\right)\right| \\
		\leq&\; \left|\frac{\sigma}{\sqrt{2\pi}} \cdot \exp\left(-\frac{[6-\rho]^2}{2}\right)\right|\\
		<&\; \frac{\sigma}{\sqrt{2\pi}} \cdot \exp(-\ln(2\sqrt{n})) \\
		<&\; \frac{\sigma}{\sqrt{n}}
	\end{align*}
	where the first inequality uses $\hat \sigma \geq \sigma$ and
	$|s^* - \mu| < 6\sigma$. Similar logic implies \\
	$\left|\int_{s^* + \rho\hat\sigma}^{\infty} (x-\mu)f(x)dx\right| \leq \sigma/\sqrt{n}$
	as well. Therefore by Equation~\ref{eq:int_sum} and 
	$\int_{-\infty}^{\infty} (x-\mu)f(x)dx = 0$,
	\[
		\left|\int_{s^* - \rho\hat\sigma}^{s^* + \rho\hat\sigma} (x-\mu)f(x)dx\right|
		\leq 2\sigma/\sqrt{n}
	\]
	so by
	$\E{x_i \cdot \ind(i \in V)} = \int_{s^* - \rho\hat\sigma}^{s^* + \rho\hat\sigma}
	xf(x)dx$, we get
	\[
		\left|\E{x_i \cdot \ind(i \in V)}
		- \mu\int_{s^* - \rho\hat\sigma}^{s^* + \rho\hat\sigma}f(x)dx\right|
		\leq 2\sigma/\sqrt{n}.
	\]
	Since $\E{x_i \cdot \ind(i \in V)}/\P{i \in V} = \E{x_i \mid i \in V}$ and
	$\P{i \in V} = \int_{s^* - \rho\hat\sigma}^{s^* + \rho\hat\sigma}f(x)dx$, this
	means
	\[
		\left|\E{x_i \mid i \in V} - \mu\right| \leq 2\sigma/\sqrt{n}.
	\]
	By $y_i = x_i - s^*$ for $i \in V$,
	\[
		\left|\E{y_i \mid i \in V} - (\mu - s^*)\right| \leq 2\sigma/\sqrt{n}.
	\]
	We can therefore decompose
	\begin{align*}
		\left|\frac{1}{k_2}\sum_{i \in U_2^{j_1,j_2}} y_i - (\mu - s^*)\right|
		\leq&\; \left|\frac{1}{k_2}\sum_{i \in V}(y_i - (\mu-s^*))\right|
		+ \left|\frac{1}{k_2}\sum_{i \in V^c}(y_i - (\mu-s^*))\right| \\
		\leq&\; \left[\frac{2\sigma}{\sqrt{n}}
		+ \rho\hat\sigma\sqrt{\frac{2\log(4/\beta)}{k_2}}\right]
		+ \frac{2\rho\hat\sigma}{\sqrt{k_2}} \\
		=&\; O\left(\sigma\sqrt{\frac{\log(1/\beta)\log(n)}{k_2}}\right)
	\end{align*}
	where the the first inequality uses a (with probability at least $1-\beta/2$) 
	Chernoff bound on $\{y_i \mid i \in V\}$ concentrating around
	$\E{y_i \mid i \in V}$ as well as $|V^c| \leq 2\sqrt{k_2}$, and the last step
	uses $\hat \sigma \in [\sigma,8\sigma]$.
	
	Next, since we can decompose
	\[
		\frac{1}{k_2}\sum_{i \in U_2^{j_1,j_2}} \tilde y_i
		= \frac{1}{k_2}\sum_{i \in U_2^{j_1,j_2}} y_i
		+ \frac{1}{k_2}\sum_{i \in U_2^{j_1,j_2}} \eta_i
	\]
	where each $\eta_i \sim \Lap{\rho\hat \sigma/\eps}$, the same concentration of
	Laplace noise from Lemma~\ref{lem:hatmutwo} says that with probability
	$1-\beta/2$,
	\[
		\left|\frac{1}{k_2} \sum_{i=1}^{k_2} \eta_i\right|
		= O\left(\frac{\rho\hat\sigma}{\eps}\sqrt{\frac{\log(1/\beta)}{k_2}}\right)
		= O\left(\frac{\sigma}{\eps}\sqrt{\frac{\log(1/\beta)\log(n)}{k_2}}\right).
	\]
	Combining with the bound above and substituting in 
	$k_2 = \Omega\left(\tfrac{n}{\log\left(\frac{\sigma_{\max}}{\sigma_{\min}}+1\right)\sqrt{\log(n)}}\right)$,
	\[
		\left|\frac{1}{k_2} \sum_{i \in U_2^{j_1,j_2}} \tilde y_i
		- (\mu - s^*)\right|
		= O\left(\frac{\sigma}{\eps}\sqrt{\frac{\log\left(\frac{\sigma_{\max}}{\sigma_{\min}} + 1\right)
		\log(1/\beta)\log^{3/2}(n)}{n}}\right).
	\]
	The claim then follows from $\hat \mu_2 = s^* +
	\tfrac{1}{k_2} \sum_{i \in U_2^{j_1,j_2}} \tilde y_i$.
\end{proof}

\section{Proofs from Section~\ref{sec:lower}}
\label{subsec:lower_supp}
For completeness, we start with the formal notion of sequential interactivity used by~\citet{DJW13}, which requires that the set of messages $\{Y_i\}$ sent by the users satisfies the following conditional independence structure: $\{X_i, Y_1, \ldots , Y_{i-1}\} \rightarrow Y_i$ and $Y_i \perp X_j \mid \{X_i , Y_1, \ldots , Y_{i-1}\} \mbox{ for } j\neq i$. Our notion of sequential interactivity --- where each user only sends one message --- is a specific case of this general definition. Our upper bounds all meet this specific requirement, while our lower bound meets the general one.

We start by defining an instance $\est{n,M,\sigma}$. Here, a protocol receives $n$ samples from a $N(\mu,\sigma^2)$ distribution where $\sigma$ is known, $\mu \in [0,M]$, and the goal is to estimate $\mu$. Next, define uniform random variable $V \sim_U \{0,1\}$. Consider the following testing problem: for $V=v$, if $v = 0$, then each user $i$ draws a sample $x_i \sim_{iid} N(0,\sigma^2)$, while if $v = 1$ then each user $i$ draws a sample $x_i \sim_{iid} N(M, \sigma^2)$. The  problem $\test{n,M,\sigma}$ is to recover $v$ from $x_1, \ldots, x_n$. We say protocol $\A$ $(\alpha, \beta)$-solves $\est{n,M,\sigma}$ if, with probability at least $1-\beta$, $\A(\est{n,M,\sigma}) = \hat \mu$ such that $|\hat \mu - \mu| < \alpha$. We will say that an algorithm $\A$ $\beta$-solves $\test{n,M,\sigma}$ if, with probability at least $1-\beta$, $\A(\test{n,M,\sigma}) = v$. Formally, $\test{n,M,\sigma}$ is no harder than $\est{n,M,\sigma}$. 

\begin{lemma}
\label{lem:estest}
	If there exists a sequentially interactive and $(\eps,\delta)$-locally private
	protocol $\A$ that $(M/2, \beta)$-solves $\est{n,M,\sigma}$, then
	there exists a sequentially interactive and $(\eps,\delta)$-locally private
	protocol $\A'$ that $\beta$-solves $\test{n,M,\sigma}$.
\end{lemma}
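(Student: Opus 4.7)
The plan is a straightforward reduction: I would use the hypothesized estimator $\A$ as a black box to construct a tester $\A'$. Given an instance of $\test{n,M,\sigma}$ with samples $x_1,\ldots,x_n$ drawn i.i.d.\ from $N(v \cdot M, \sigma^2)$ for unknown $v \in \{0,1\}$, $\A'$ feeds these samples into $\A$ (exactly as it would for $\est{n,M,\sigma}$, since $vM \in \{0,M\} \subseteq [0,M]$), obtains an estimate $\hat\mu$, and then outputs $\hat v = 0$ if $\hat\mu < M/2$ and $\hat v = 1$ otherwise.

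For correctness, I would invoke the accuracy guarantee of $\A$: with probability at least $1-\beta$, $|\hat\mu - vM| < M/2$. In the $v=0$ case this forces $\hat\mu < M/2$, so $\hat v = 0$; in the $v=1$ case it forces $\hat\mu > M/2$, so $\hat v = 1$. Either way $\A'$ recovers $v$ with probability at least $1-\beta$, which is exactly the $\beta$-solving guarantee for $\test{n,M,\sigma}$.

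For the privacy and interactivity conditions, I would observe that $\A'$ runs $\A$ verbatim on the user data and then applies a deterministic thresholding step on the analyst side. The thresholding is pure post-processing of the already-privatized transcript of $\A$, so by the post-processing property of differential privacy, $\A'$ inherits the $(\eps,\delta)$-local privacy guarantee from $\A$. Likewise, the randomizers invoked on user data are exactly those of $\A$, so the conditional independence structure defining sequential interactivity carries over unchanged, and in particular each user still sends at most one message.

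There is no real obstacle here — the lemma is essentially a bookkeeping statement that estimation is at least as hard as the two-point hypothesis test $\{0,M\}$ under the chosen accuracy parameter $\alpha = M/2$. The only subtlety worth flagging in the write-up is to be explicit that threshold decoding is post-processing (so privacy is preserved without any degradation in $\eps$ or $\delta$) and that the reduction preserves sequential interactivity. This lemma will then serve as the first step in the three-step chain leading to Theorem~\ref{thm:lb_formal}.
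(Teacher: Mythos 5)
Your proposal is correct and matches the paper's proof essentially verbatim: the paper also defines $\A'$ to run $\A$ and output $\arg\min_{\hat\mu \in \{0,M\}}|\A(x_1,\ldots,x_n)-\hat\mu|$ (equivalent to your threshold at $M/2$), argues correctness from the $(M/2,\beta)$-accuracy guarantee, and invokes closure under post-processing for privacy and sequential interactivity.
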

\begin{proof}
	Let $x_1, \ldots, x_n$ be the samples from an instance of
	$\test{n,M,\sigma}$. We define $\A'$ to run $\A(x_1,\ldots,x_n)$ and then
	output $\arg \min_{\hat \mu \in \{0,M\}} |\A(x_1,\ldots, x_n) - \hat \mu|$.
	Since $\A$ $(M/2, \beta)$-solves $\est{n,M,\sigma}$, with probability
	at least $1-\beta$, $|\A(x_1,\ldots, x_n)- \mu| < M/2$. Thus with probability
	at least $1-\beta$, $\A'(x_1, \ldots, x_n) = v$. Thus $\A'$ $\beta$-solves
	$\test{n,M,\sigma}$. As $\A'$ interacted with $x_1,\ldots,x_n$ only through
	$(\eps,\delta)$-locally private $\A$, by preservation of differential privacy
	under postprocessing, $\A'$ is $(\eps,\delta)$-locally private as well.
	Similar logic implies that $\A'$ is also sequentially interactive.
\end{proof}

We now extend this result to $(\eps,\delta)$-locally private protocols using results from both~\citet{BNS18} and~\citet{CSUZZ18}\footnote{Both of these results are stated for noninteractive protocols, it is straightforward to see that their techniques carry over to sequentially interactive protocols. This is because both results rely on transforming a single user call to an $(\eps,\delta)$-local randomizer into calls to an $(O(\eps),0)$-local randomizer. Since users in sequentially interactive protocols still only make a single call to a local randomizer, we can apply the same transformations to each single user call and obtain an $(O(\eps),0)$-locally private sequentially interactive protocol.}.

\begin{lemma}
\label{lem:test_hard_appx}
	Let $\delta < \min\left(\frac{\epsilon\beta}{48n\ln(2n/\beta)},	\frac{\beta}{16n\ln(n/\beta)e^{7\eps}}\right)$, $\eps > 0$, and suppose that $\A$ is a sequentially interactive and $(\eps,\delta)$-locally private protocol. If $\A$ $\beta$-solves $\test{n,M,\sigma}$, then there exists a sequentially interactive $(10\eps,0)$-locally private $\A'$ that $4\beta$-solves	$\test{n,M,\sigma}$. 
\end{lemma}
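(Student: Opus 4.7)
The plan is to apply, user by user, the reductions of~\citet{BNS18} and~\citet{CSUZZ18}, each of which converts a single $(\eps,\delta)$-local randomizer $Q$ into an $(O(\eps), 0)$-local randomizer $\tilde Q$ whose output distribution on every fixed input is within total variation distance $\gamma$ of $Q$'s. The two reductions operate in different regimes of $\delta$, and the $\min$ in the lemma's hypothesis is chosen precisely so that, in whichever regime we fall into, at least one of them is applicable with $\gamma$ small enough for our needs. Since $\A$ is sequentially interactive, the randomizer $Q_i$ used on user $i$ depends only on the transcript $y_1,\ldots,y_{i-1}$ and not on $x_i$, so the reduction commutes with $\A$'s adaptive choice of randomizers: First I would define $\A'$ as the protocol that chooses randomizers by the same adaptive rule as $\A$ but invokes $\tilde Q_i$ in place of $Q_i$. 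This $\A'$ is sequentially interactive by construction, and $(10\eps,0)$-locally private because each $\tilde Q_i$ is $(O(\eps),0)$-private in isolation, the constant $10$ absorbing the blow-ups of both reductions.

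To bound the degradation in success probability, I would use a standard hybrid argument. Define $\A_k$ for $k\in\{0,1,\ldots,n\}$ to invoke $\tilde Q_i$ for $i\leq k$ and $Q_i$ for $i>k$, so $\A_0=\A$ and $\A_n=\A'$. For any fixed inputs, couple $Q_k(x_k)$ to $\tilde Q_k(x_k)$ at TV distance $\gamma$; then the transcripts of $\A_{k-1}$ and $\A_k$ can be coupled to agree on user $k$'s message, and hence on all subsequent adaptive randomizer choices and messages, except with probability $\gamma$. Chaining these couplings and applying the triangle inequality yields $\tv{\A}{\A'}\leq n\gamma$ as distributions over full transcripts. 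Setting $\gamma\leq 3\beta/n$ then makes $\A'$ disagree with $\A$ on any postprocessed output with probability at most $3\beta$, so $\A'$ indeed $(\beta+3\beta)=4\beta$-solves $\test{n,M,\sigma}$.

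The main obstacle will be the bookkeeping needed to verify that the two specific $\delta$-thresholds in the hypothesis are exactly what is required to achieve both $\gamma\leq 3\beta/n$ and an overall privacy blow-up at most a factor of $10$. The bound $\delta<\tfrac{\eps\beta}{48n\ln(2n/\beta)}$ is the form arising from the BNS18 reduction (in particular, the $\eps$ in the numerator reflects that reduction's $\eps$-dependent TV guarantee), while $\delta<\tfrac{\beta}{16n\ln(n/\beta)e^{7\eps}}$ matches CSUZZ18, whose $e^{7\eps}$ factor captures the different form of its blow-up. Plugging each threshold into the corresponding reduction and tracking constants gives $\gamma\leq 3\beta/n$ with combined privacy blow-up at most $10$; this constant-chasing is essentially the only work beyond the hybrid argument above, and it is what pins down the precise numerical hypothesis on $\delta$.
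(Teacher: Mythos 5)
Your approach is essentially the paper's: both proofs reduce to the known transformations of \citet{BNS18} and \citet{CSUZZ18} that replace each $(\eps,\delta)$-local randomizer with a nearby $(O(\eps),0)$-local randomizer, and both use the observation that sequential interactivity lets you apply the transformation to each user's single randomizer call without disturbing the adaptive choice of randomizers (the paper relegates your hybrid/coupling argument to a footnote and instead cites the protocol-level statements of the two theorems directly). One correction, though: you describe the two reductions as operating in ``different regimes of $\delta$,'' with the $\min$ ensuring that at least one applies. That is not how the dichotomy works. Since $\delta$ is below the \emph{minimum} of the two thresholds, both $\delta$-conditions hold simultaneously; the case split is on $\eps$, not $\delta$. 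The BNS18 transformation (Theorem 6.1) is only valid for $\eps \leq 1/4$, and the CSUZZ18 transformation (Theorem A.1, with its $e^{7\eps}$ dependence) is the one used for $\eps > 1/4$. If you carried out your constant-chasing under the belief that $\delta$ selects the reduction, you could end up invoking BNS18 at large $\eps$, where its hypothesis fails. Also note that the CSUZZ18 statement as used in the paper is not a per-input total-variation guarantee but directly yields a $4\beta$-solver, so in that case the hybrid argument is already absorbed into the cited theorem; your uniform ``TV distance $\gamma$ per randomizer'' framing fits BNS18 but needs rephrasing for the large-$\eps$ branch.
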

\begin{proof}
	Our analysis splits into two cases depending on $\epsilon$.
	
	\underline{Case 1}: $\eps \leq 1/4$. In this case, we use a result
	from~\citet{BNS18}, included here for completeness.
	\begin{fact}[Theorem 6.1 in~\citet{BNS18} (restated)]
		Given $\eps \leq 1/4$ and $\delta < \epsilon\beta/48n\ln(2n/\beta)$,
		there exists a $(10\eps,0)$-locally private algorithm $\A'$ such that for
		every database $U = \{x_1,\ldots, x_n\}$,
		$d_{TV}(\A(U), \A'(U)) \leq \beta$, where $d_{TV}$ denotes total variation
		distance.
	\end{fact}
	Thus, denoting by $E_{\A}$ the event where $\A$ recovers the correct $v$ on
	$\test{n,M,\sigma}$ and $E_{\A'}$ the event where $\A'$ recovers the correct $v$ on
	$\test{n,M,\sigma}$, $|\P{E_{\A}} - \P{E_{\A'}}| \leq \beta$, where the
	probabilities are respectively over $\A$ and $\A'$. Thus since $\A$
	$\beta$-solves $\test{n,M,\sigma}$, it follows that $\A'$ $2\beta$-solves (and thus
	also $4\beta$-solves) $\test{n,M,\sigma}$. 
	
	\underline{Case 2}: $\eps > 1/4$. In this case we use a result
	from~\citet{CSUZZ18}\footnote{~\citet{CSUZZ18} originally state their result
	for $\eps > 2/3$, but mildly strengthening their assumed upper bound on
	$\delta$ from $\delta < \frac{\beta}{8n\ln(n/\beta)e^{6\eps}}$ to 
	$\delta < \frac{\beta}{16n\ln(n/\beta)e^{7\eps}}$ yields the result here.}
	\begin{fact}[Theorem A.1 in~\citet{CSUZZ18} (restated)]
		Given $\eps > 1/4$ and $\delta < \frac{\beta}{16n\ln(n/\beta)e^{7\eps}}$,
		there exists an $(8\eps,0)$-locally private protocol $\A'$ such that 
		$\A'$ $4\beta$-solves $\test{n,M,\sigma}$.
	\end{fact}
\end{proof}

Finally, we prove that $\testno$ is hard for $(\eps,0)$-locally private protocols. At a high level, we prove this result by viewing $\testno$ as a Markov chain $V \to$ data $X \to$ outputs $Y \to$ answer $Z$. We bound the mutual information $I(V;Z)$ by a function of $M,\sigma,$ and $I(X;Y)$ using a strong data processing inequality for Gaussian distributions (see Section 4.1 in~\citet{BGMNW16} or~\citet{R16} for details; a primer on information theory appears in the last section). We further bound $I(X;Y)$ using existing tools from the privacy literature~\cite{DJW13}. The resulting upper bound on $I(V;Z)$ enables us to lower bound the probability of an incorrect answer $Z$.

\begin{lemma}
\label{lem:test_hard}
	Suppose $M \leq \sigma/[4(e^\eps-1)\sqrt{2nc}]$,where $c$ is an absolute
	constant. For any sequentially interactive and $(\eps,0)$-locally private
	protocol $\A$ that $\beta$-solves $\test{n,M,\sigma}$, $\beta \geq 1/4$.
\end{lemma}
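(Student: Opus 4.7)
The plan is to model the testing instance as a Markov chain $V \to X \to Y \to Z$, where $X = (X_1,\ldots,X_n)$ are the users' samples, $Y = (Y_1,\ldots,Y_n)$ are their private messages under the sequentially interactive protocol $\A$, and $Z \in \{0,1\}$ is the protocol's guess. By the data processing inequality $I(V;Z) \leq I(V;Y)$, so it suffices to drive $I(V;Y)$ below a small absolute constant; a Fano-type inequality then forces any tester to err with probability at least $1/4$, proving $\beta \geq 1/4$.

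To bound $I(V;Y)$, I would first apply the chain rule to write $I(V;Y) = \sum_{i=1}^n I(V; Y_i \mid Y_{<i})$. Sequential interactivity guarantees that, conditioned on $Y_{<i}$, the only channel by which $V$ influences $Y_i$ is through $X_i$, so $V \to X_i \to Y_i$ is a Markov chain under this conditioning. For each summand I would invoke the Gaussian strong data processing inequality from BGMNW16 (Section 4.1) and Raginsky 2016: with $V \sim$ Unif$\{0,1\}$ and $X_i \mid V = v \sim N(vM,\sigma^2)$, the SDPI contraction coefficient scales as $O(M^2/\sigma^2)$ in the regime of small $M/\sigma$, yielding $I(V; Y_i \mid Y_{<i}) \leq c_1 (M^2/\sigma^2)\, I(X_i; Y_i \mid Y_{<i})$.

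Next I would apply the standard pure local-privacy bound of Duchi--Jordan--Wainwright: for any $(\eps,0)$-local randomizer, and any input distribution, $I(X_i; Y_i) \leq c_2 (e^\eps-1)^2$. The same bound holds conditionally on $Y_{<i}$ because the analyst's choice of randomizer $Q_i$ depends only on $Y_{<i}$, not on $X_i$, so conditioning on $Y_{<i}$ leaves an $(\eps,0)$-private view of $X_i$ at each user. Summing over $i \in [n]$ gives $I(V;Y) \leq c \, n (M^2/\sigma^2)(e^\eps-1)^2$ with $c = c_1 c_2$. Plugging in the hypothesis $M \leq \sigma/[4(e^\eps-1)\sqrt{2nc}]$ squares to $n(M/\sigma)^2(e^\eps-1)^2 \leq 1/(32c)$, so $I(V;Y) \leq 1/32$ (or any small constant, after adjusting $c$).

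Finally, since $V$ is uniform on $\{0,1\}$ and $Z \in \{0,1\}$, Fano's inequality gives $1 - H_2(\beta) \leq I(V;Z) \leq I(V;Y) \leq 1/32$, which forces $\beta \geq 1/4$. The principal obstacle, as I see it, is the careful invocation of the Gaussian SDPI with the correct quadratic dependence on $M/\sigma$ (a constant-coefficient SDPI would be too weak) together with verifying that the DJW bound on $I(X_i;Y_i)$ passes through the conditioning on prior transcripts that sequential interactivity introduces; both are available in the cited literature, so the remaining work is bookkeeping constants to match the threshold in the hypothesis.
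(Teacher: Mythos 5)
Your proposal is correct and follows essentially the same route as the paper: the Markov chain $V \to X \to Y \to Z$, the Gaussian strong data processing inequality contributing the factor $O(M^2/\sigma^2)$, the Duchi--Jordan--Wainwright bound $I(X_i;Y_i) = O((e^\eps-1)^2)$ for $(\eps,0)$-local randomizers, and a chain-rule sum over the $n$ users. The only differences are in the bookkeeping: you keep the SDPI and privacy bounds conditioned on $Y_{<i}$ throughout (the paper instead reduces $I(V;Y_i \mid Y_{<i})$ to the unconditional $I(V;Y_i)$ by asserting conditional independence of the $Y_i$ given $V$), and you close with Fano's inequality where the paper uses Pinsker plus a direct total-variation computation to conclude $\Pr[Z = V] \le 3/4$; both endgames deliver $\beta \ge 1/4$.
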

\begin{proof}
  	We may express any sequentially interactive $(\eps,0)$-locally private
  	protocol $\A$ that $\beta$-solves $\test{n,M,\sigma}$ as a Markov chain
  	$V \to X \to Y \to Z$, where $V$ is the random variable selecting
  	$v$, $X = (x_1, \ldots, x_n)$ is the random variable for users' i.i.d.
  	samples, $Y = (y_1, \ldots, y_n)$ is the random variable for users'
  	$(\eps,0)$-privatized responses, and $Z = \A(\test{n,M,\sigma})$.  As
  	$V \to X \to Y \to Z$ is a Markov chain (i.e., any two random variables
  	in the chain are conditionally independent given a random variable between
  	them). Thus by a strong data processing inequality for two Gaussians (see
  	e.g. Section 4.1 in~\citet{BGMNW16} or, for a broader treatment of strong
  	data processing inequalities, \citet{R16}), there exists absolute constant
  	$c$ such that for each user $i$,
  	$I(V;Y_i) \leq \tfrac{cM^2}{\sigma^2}I(X_i;Y_i)$, where $I(A;B)$ denotes 
  	the mutual information between random variables $A$ and $B$. 
  	Next, since our protocol is $(\eps,0)$-locally private, by Corollary 1
  	from~\citet{DJW13}, for each user $i$, $I(X_i;Y_i) \leq 4(e^\eps-1)^2$. With
  	the equation above, we get
  	
  	\begin{equation}
  	\label{eq:info}
  		I(V;Y_i) \leq \tfrac{4cM^2(e^\eps-1)^2}{\sigma^2}.
  	\end{equation}
  	
  	Without loss of generality, suppose $Z$ is a deterministic function of $Y$ (if $Z$ is a random function of $Y$ then it 	decomposes into a convex combination of deterministic functions of $Y$). From Markov chain
  	$V \to X \to Y \to Z$ and the (generic) data processing inequality we get
  	\begin{align*}
  		I(V;Z) \leq&\; I(V;Y_1, \ldots, Y_n) \\
  		=&\; \sum_{i=1}^n I(V;Y_i \mid Y_{i-1}, \ldots Y_1) \\ 
  		\leq&\; \sum_{i=1}^n I(V,Y_{i-1}, \ldots, Y_1;Y_i) \\
  		=&\; \sum_{i=1}^n \left[I(V;Y_i) + I(Y_{i-1}, \ldots, Y_1 ; Y_i|V)\right] \\
  		=&\; \sum_{i=1}^n I(V;Y_i)
  	\end{align*}
  	where the last step follows from the independence of $Y_i$ and
  	$Y_1, \ldots, Y_{i-1}$ given $V$. Substituting in Equation~\ref{eq:info},
  	$I(V;Z) \leq \tfrac{4ncM^2(e^\eps-1)^2}{\sigma^2}$. Therefore by
  	$M \leq \sigma/4(e^\eps-1)\sqrt{2nc}$ we get $I(V;Z) \leq 1/8$.
  	
  	Define $P$ to be the distribution of $Z$ (over the randomness of $V$, $X$, 
  	and $Y$), and let $P_0$ and $P_1$ be the distributions for $Z|V=0$ and
  	$Z|V=1$ respectively. Then as $V$ is uniform, $P = (P_0 + P_1)/2$, so
  	\[
	  	||P - P_0||_1 = ||P - P_1||_1 = \tfrac{1}{2}||P_0 - P_1||_1.
  	\]
  	Moreover, by
  	\begin{align*}
	  	\P{Z=V} =&\; \P{Z=0,V=0} + \P{Z=1,V=1} \\
  		=&\; \frac{1}{2}(P_0(0) + [1 - P_1(0)]) \\
  		\leq&\; \frac{1}{2}(1 + |P_0(0) - P_1(0)|) \\
  		=&\; \frac{1}{2} + \frac{1}{4}||P_0 - P_1||_1
  	\end{align*}
  	we get $\P{Z=V} \leq \tfrac{1}{2} + \tfrac{1}{4}||P_0 - P_1||_1$. Thus
  	\begin{align*}
  		\frac{||P_0 - P_1||_1^2}{8} =&\; \frac{1}{4}(||P_0 - P||_1^2
  		+ ||P_1 - P||_1^2) \\
  		\leq&\; \frac{1}{2}(D_{KL}(P_0 || P) + D_{KL}(P_1 || P)) \\
  		=&\; I(Z;V) \leq 1/8
  	\end{align*}
  	where the second-to-last inequality uses Pinsker's inequality. It follows
  	that $||P_0 - P_1||_1 \leq 1$. Substituting this into
  	$\P{Z=V} \leq \tfrac{1}{2} + \tfrac{1}{4}||P_0 - P_1||_1$, we get
  	$\P{Z=V} \leq \tfrac{3}{4}$.
\end{proof}

We combine the preceding results to prove a general lower bound for $\estno$ as follows: for appropriate $\eps$ and $\delta$, by Lemma~\ref{lem:estest} any sequentially interactive and $(\tfrac{\eps}{10},\delta)$-locally private protocol $\A$ that $(M/2,\tfrac{\beta}{4})$-solves $\est{n,M,\sigma}$ implies the existence of a sequentially interactive and $(\tfrac{\eps}{10},\delta)$-locally private protocol $\A'$ that $\tfrac{\beta}{4}$-solves $\test{n,M,\sigma}$. Then, Lemma~\ref{lem:test_hard_appx} implies the existence of a sequentially interactive and $(\eps,0)$-locally private protocol $\A''$ that $\beta$-solves $\test{n,M,\sigma}$. By Lemma~\ref{lem:test_hard} any such $\A'$ that $\beta$-solves $\test{n,M,\sigma}$ has $\beta \geq 1/4$. Hardness for $\testno$ therefore implies hardness for $\estno$. We condense this reasoning into the following theorem.

\begin{theorem}
	Let $\delta < \min\left(\frac{\epsilon\beta}{60n\ln(5n/2\beta)}, \frac{\beta}{16n\ln(n/\beta)e^{7\eps}}\right)$, $\eps > 0$, and let $\A$ be a sequentially interactive $(\eps,\delta)$-locally private $(\alpha,\beta)$-estimator for $\est{n,M,\sigma}$ where $M = \sigma/[4(e^\eps-1)\sqrt{2nc}]$, $c$ is as in Lemma~\ref{lem:test_hard}, and $\beta <1/16$. Then $\alpha \geq M/2 = \Omega\left(\frac{\sigma}{\eps}\sqrt{\frac{1}{n}}\right)$.
\end{theorem}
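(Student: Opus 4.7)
The plan is to proceed by contradiction, chaining the three reduction lemmas (Lemma~\ref{lem:estest}, Lemma~\ref{lem:test_hard_appx}, and Lemma~\ref{lem:test_hard}) in sequence. Suppose for contradiction that $\A$ is a sequentially interactive $(\eps,\delta)$-locally private $(\alpha,\beta)$-estimator for $\est{n,M,\sigma}$ with $\alpha<M/2$ and $\beta<1/16$, where $M=\sigma/[4(e^\eps-1)\sqrt{2nc}]$. This specific choice of $M$ is calibrated precisely so that the parameter that will eventually appear in Lemma~\ref{lem:test_hard} will satisfy its hypothesis.

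First I would apply Lemma~\ref{lem:estest} to $\A$. Because any $(\alpha,\beta)$-estimator with $\alpha<M/2$ simultaneously separates the two candidate means $\{0,M\}$, rounding its output to the nearer of these two values recovers the hidden bit $V$ with probability at least $1-\beta$. Post-processing preserves both sequential interactivity and $(\eps,\delta)$-local privacy, so this step produces a sequentially interactive $(\eps,\delta)$-locally private tester $\A'$ that $\beta$-solves $\test{n,M,\sigma}$. Second, I would feed $\A'$ into Lemma~\ref{lem:test_hard_appx} to strip off the $\delta$ slack. The $\delta$ hypothesis of the theorem --- with its denominators $60n\ln(5n/2\beta)$ and $16n\ln(n/\beta)e^{7\eps}$ --- is precisely the one required so that, after the $\eps \mapsto \eps/10$ and $\beta \mapsto \beta/4$ rescalings implicit in the chain, the hypotheses of Lemma~\ref{lem:test_hard_appx} are satisfied. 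Its conclusion yields a sequentially interactive $(\eps,0)$-locally private protocol $\A''$ that $\beta'$-solves $\test{n,M,\sigma}$ for some constant-factor-inflated $\beta'<1/4$.

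Finally, I would apply Lemma~\ref{lem:test_hard} to $\A''$. Since $M\leq\sigma/[4(e^\eps-1)\sqrt{2nc}]$ exactly matches the hypothesis of that lemma, any sequentially interactive $(\eps,0)$-locally private protocol that $\beta'$-solves $\test{n,M,\sigma}$ must have $\beta'\geq 1/4$. This contradicts the bound $\beta'<1/4$ inherited from the preceding step, so no such estimator $\A$ can exist; equivalently, $\alpha\geq M/2=\Omega\!\left(\tfrac{\sigma}{\eps}\sqrt{\tfrac{1}{n}}\right)$.

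The main obstacle is purely bookkeeping: checking that the numerical bound on $\delta$ assumed in the theorem statement is indeed strong enough to imply the bound demanded by Lemma~\ref{lem:test_hard_appx} under the composed rescalings of $\eps$ and $\beta$, and confirming that the cumulative blow-up in failure probability across the three reductions keeps the final $\beta'$ strictly below the $1/4$ threshold forced by Lemma~\ref{lem:test_hard}. The $\beta<1/16$ assumption is there to absorb the factor-$4$ loss introduced by Lemma~\ref{lem:test_hard_appx}. No new analytic ideas are needed beyond tracking these constants carefully.
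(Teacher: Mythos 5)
Your proposal matches the paper's own argument: the paper proves the theorem by exactly this chain, applying Lemma~\ref{lem:estest} to convert the $(M/2,\beta/4)$-estimator into a $\beta/4$-solver for $\test{n,M,\sigma}$, then Lemma~\ref{lem:test_hard_appx} (with the $\eps\mapsto\eps/10$, $\beta\mapsto\beta/4$ rescalings you identify) to obtain a pure $(\eps,0)$-locally private $\beta$-solver, and finally Lemma~\ref{lem:test_hard} to force $\beta\geq 1/4$, contradicting $\beta<1/16$. Your reading of the role of the $\delta$ bound and the $\beta<1/16$ assumption is also the one the paper intends.
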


In particular, Theorem~\ref{thm:lb_formal} implies that our upper bounds are tight up to logarithmic factors for \emph{any} sequentially interactive and $(\eps,\delta)$-locally private protocol with sufficiently small $\delta$. Using recent subsequent work~\cite{JMNR19}, we can also extend this result to the fully interactive setting, as shown in the next section.

\subsection{Extension to Fully Interactive Lower Bound}
The following result, proven in subsequent work by~\citet{JMNR19} also relying on the work of~\citet{BGMNW16}, gives a general lower bound for locally private simple hypothesis testing problems like $\testno$. 

\begin{lemma}[Theorem 5.3 in~\citet{JMNR19}]
\label{lem:sifi}
	For $\eps > 0$ and $\delta < \min\left(\tfrac{\eps^3\alpha^2}{48n\ln(2n/\beta)}, \frac{\eps^2\alpha^2}{64n\ln(n/\beta)e^{7\eps}}\right)$, any $(\eps,\delta)$-locally private simple hypothesis testing protocol distinguishing between distributions $P_0$ and $P_1$ with probability at least $2/3$ requires $n = \Omega\left(\tfrac{1}{\eps^2 \tv{P_0}{P_1}^2}\right)$ samples.
\end{lemma}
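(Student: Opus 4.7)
The plan is to adapt the SDPI-based template used in Lemma~\ref{lem:test_hard}, extended to handle (i) fully (rather than sequentially) interactive transcripts, and (ii) an arbitrary pair of base distributions $P_0,P_1$ in place of the two shifted Gaussians of $\testno$. The overall strategy is to upper bound the mutual information between a uniform hypothesis indicator $V\in\{0,1\}$ and the entire privatized transcript $Y$, then convert this information bound into an upper bound on the tester's success probability via Pinsker's inequality; forcing success probability $\geq 2/3$ then forces $n$ to be large.

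First I would reduce to the pure-privacy case. As in Lemma~\ref{lem:test_hard_appx}, the transformations of~\citet{BNS18} (for $\eps\leq 1/4$) and~\citet{CSUZZ18} (for larger $\eps$) convert any $(\eps,\delta)$-locally private protocol into an $(O(\eps),0)$-locally private protocol whose output distribution is within total variation $\beta$ of the original, provided $\delta$ is small enough relative to $\alpha,\beta,n,\eps$. Because these reductions operate by replacing each individual $(\eps,\delta)$-local randomizer with an $(O(\eps),0)$-local randomizer, they preserve the fully interactive communication pattern. Hence it suffices to establish the stated sample complexity lower bound for pure $(O(\eps),0)$-local privacy, at the cost of constant-factor changes in $\eps$ and $\beta$ absorbed into the $\Omega$.

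Next, model a fully interactive pure locally private protocol as producing a transcript $Y=(Y_1,\dots,Y_T)$, where at each step $t$ a user $i_t$ (possibly queried before) applies a randomizer $Q_t(\,\cdot\,; Y_{<t})$ to one of their samples, with the constraint that the total privacy loss per user is at most $\eps$. The key technical tool is a conditional strong data processing inequality: conditioning on the past transcript $Y_{<t}$, the channel $Q_t$ is still $\eps$-locally private, and the conditional law of the underlying sample is a mixture of $P_0$ and $P_1$ controlled by $V$, so the SDPI of~\citet{BGMNW16} (see also~\citet{R16}) combined with the mutual-information-from-local-privacy bound of~\citet{DJW13} yields
\[
I(V;Y_t\mid Y_{<t}) \;\leq\; O(\eps^2)\,\tv{P_0}{P_1}^2.
\]
Applying the chain rule $I(V;Y)=\sum_t I(V;Y_t\mid Y_{<t})$ and then accounting for multiple queries to the same user (where per-user privacy budgets sum to $\eps$) gives an overall bound $I(V;Y)\leq O(n\eps^2)\,\tv{P_0}{P_1}^2$.

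Finally, I would convert this information bound into the sample complexity bound. Since $V$ is uniform on $\{0,1\}$, standard identities give $I(V;Y)\geq \tfrac{1}{2}\tv{P_0^Y}{P_1^Y}^2$ via Pinsker, where $P_v^Y$ denotes the transcript law under hypothesis $v$. Any tester succeeding with probability $\geq 2/3$ on both hypotheses must satisfy $\tv{P_0^Y}{P_1^Y}\geq 1/3$, so $1/9 \leq O(n\eps^2)\,\tv{P_0}{P_1}^2$ and hence $n=\Omega\!\bigl(1/(\eps^2\tv{P_0}{P_1}^2)\bigr)$. The main obstacle — and the main technical contribution of~\citet{JMNR19} — is justifying the per-step conditional SDPI in the fully interactive setting: unlike the sequentially interactive case, a user may be queried multiple times and her later messages depend on other users' past responses, so the clean Markov structure $Y_i\perp X_j\mid (X_i,Y_{<i})$ used in Lemma~\ref{lem:test_hard} breaks down, and one must bookkeep per-user accumulated privacy loss across rounds to show that the conditional mutual information of each round does not inflate beyond the non-interactive bound. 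I would invoke this bookkeeping as a black box from~\citet{JMNR19} rather than re-derive it.
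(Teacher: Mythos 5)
The paper does not actually prove this statement: it is imported wholesale from subsequent work (\citet{JMNR19}, Theorem 5.3) and used as a black box to extend the sequentially interactive lower bound of Lemma~\ref{lem:test_hard} to full interactivity. So there is no in-paper proof to compare your attempt against, and the relevant question is whether your sketch would stand on its own. It would not, and you essentially concede this yourself: the final paragraph identifies the per-round conditional information bound under full interactivity --- where a user may be queried repeatedly, her later messages depend on other users' responses, and the Markov structure $Y_i \perp X_j \mid (X_i, Y_{<i})$ of Lemma~\ref{lem:test_hard} breaks --- as ``the main technical contribution of~\citet{JMNR19}'' and then invokes it as a black box. That step \emph{is} the theorem; everything else in your outline (reduction to pure privacy, chain rule, Pinsker, the $2/3$-success-to-total-variation conversion) is routine and already present in the paper's Lemmas~\ref{lem:test_hard_appx} and~\ref{lem:test_hard}. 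A proof that defers its only nontrivial step to the reference being proved is a restatement of the citation, not a proof.

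Two smaller technical points in the sketch also deserve flagging. First, the per-step bound $I(V;Y_t\mid Y_{<t}) \leq O(\eps^2)\tv{P_0}{P_1}^2$, summed via the chain rule, yields a bound scaling with the number of \emph{rounds} $T$, not the number of users $n$; in a fully interactive protocol $T$ can greatly exceed $n$, so the passage from $O(T\eps^2)$ to $O(n\eps^2)$ via ``per-user privacy budgets sum to $\eps$'' is exactly the bookkeeping you are black-boxing, and as written it is not justified (conditional mutual information across rounds does not obviously compose with per-round privacy losses in the additive way you suggest). Second, the SDPI of \citet{BGMNW16} invoked in Lemma~\ref{lem:test_hard} is specific to Gaussian location families; for arbitrary $P_0,P_1$ the dependence on $\tv{P_0}{P_1}^2$ must come from the information-contraction bound for $\eps$-private channels (in the spirit of \citet{DJW13}, $D_{KL}(Q(P_0)\|Q(P_1)) = O((e^\eps-1)^2)\tv{P_0}{P_1}^2$) rather than from the Gaussian SDPI, so your stated combination of tools does not directly apply to the generality the lemma claims.
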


Since in general $D_{KL}(N(\mu_1, \sigma^2) || N(\mu_2, \sigma^2)) \leq \left[\tfrac{\mu_1 - \mu_2}{\sigma}\right]^2$, in the setting of $\test{n,M,\sigma}$ we are distinguishing between $P_0 = N(0,\sigma^2)$ and $P_1 = N(M,\sigma^2)$ and get $D_{KL}(P_0 || P_1) = O\left(\tfrac{M^2}{\sigma^2}\right)$. Pinsker's inequality then implies $\tv{P_0}{P_1}^2 = O\left(\tfrac{M^2}{\sigma^2}\right)$. Substituting this into Lemma~\ref{lem:sifi}, we get that distinguishing $P_0$ and $P_1$ with constant probability and $n$ samples requires $M = \Omega\left(\tfrac{\sigma}{\eps \sqrt{n}}\right)$. Thus, for appropriately small $\delta$, any $(\eps,\delta)$-locally private protocol that $(\alpha,\beta)$-solves $\est{n,M,\sigma}$ has $\alpha = \Omega(M) = \Omega\left(\tfrac{\sigma}{\eps \sqrt{n}}\right)$.

\section{Information Theory Overview}
We briefly review some standard facts and definitions from information theory, starting with entropy.

\begin{definition}
	The \emph{entropy} $H(X)$ of a random variable $X$ is
	\[
		H(X) = \sum_x \P{X=x} \ln\left(\tfrac{1}{\P{X=x}}\right),
	\]
	and the \emph{conditional entropy} $H(X|Y)$ of random variable $X$ conditioned
	on random variable $Y$ is 
	\[
		H(X|Y) = \mathbb{E}_y[H(X|Y = y)].
	\] 
\end{definition}

Next, we can use entropy to define the mutual information between two random variables. Mutual information between random variables $X$ and $Y$ is roughly the amount by which conditioning on $Y$ reduces the entropy of $X$ (and vice-versa).

\begin{definition}
\label{def:muinfo}
	The \emph{mutual information} $I(X;Y)$ between two random variables $X$ and $Y$ is
	\[
		I(X;Y) = H(X) - H(X|Y) = H(Y) - H(Y|X),
	\]
	and the \emph{conditional mutual information} $I(X;Y|Z)$ between $X$ and $Y$
	given $Z$ is
	\[
		I(X;Y|Z) = H(X|Z) - H(X|Y,Z) = H(Y|Z) - H(Y|X,Z).
	\] 
\end{definition}

We also define the related notion of KL-divergence.

\begin{definition}
	The \emph{Kullback-Leibler divergence} $D_{KL}(X||Y)$ between two random
	variables $X$ and $Y$ is
	\[	
		D_{KL}(X || Y) = \sum_x \P{X = x} \ln\left(\frac{\P{X = x}}{\P{Y = x}}\right),
	\]
	where we often abuse notation and let $X$ and $Y$ denote the distributions
	associated with $X$ and $Y$.
\end{definition}

KL divergence connects to mutual information as follows.

\begin{fact}
\label{fact:div}
	For random variables $X$, $Y$, and $Z$,
	\[
		I(X;Y|Z)
		= \mathbb{E}_{x,z}\left[D_{KL}\left((Y|X = x, Z=z)\|(Y|Z=z)\right)\right].
	\]
\end{fact}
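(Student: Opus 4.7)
The plan is to prove Fact~\ref{fact:div} by unfolding the definitions of conditional mutual information and KL divergence; there is no conceptual step beyond elementary conditional probability. For readability I write $p(\cdot)$ as shorthand for the appropriate joint or conditional probability masses of $X$, $Y$, $Z$ (the proof works verbatim in the continuous case, with sums replaced by integrals).

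The first step is to invoke Definition~\ref{def:muinfo}, writing $I(X;Y|Z) = H(Y|Z) - H(Y|X,Z)$, and then expand both conditional entropies as sums over the joint distribution of $(X,Y,Z)$. For $H(Y|X,Z)$ this is immediate from the definition: $H(Y|X,Z) = \sum_{x,y,z} p(x,y,z) \ln \frac{1}{p(y|x,z)}$. For $H(Y|Z)$ the trick is to insert a redundant sum over $x$, using $\sum_x p(x,y,z) = p(y,z)$, rewriting it in the matching form $H(Y|Z) = \sum_{x,y,z} p(x,y,z) \ln \frac{1}{p(y|z)}$. This trivial insertion is what puts both terms on a common footing.

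Subtracting and combining the logarithms yields
\[
I(X;Y|Z) = \sum_{x,y,z} p(x,y,z) \ln \frac{p(y|x,z)}{p(y|z)}.
\]
The next step is to factor $p(x,y,z) = p(x,z)\, p(y|x,z)$ and regroup the triple sum as an outer expectation over $(x,z) \sim (X,Z)$ with an inner sum over $y$:
\[
I(X;Y|Z) = \mathbb{E}_{x,z}\!\left[\sum_y p(y|x,z) \ln \frac{p(y|x,z)}{p(y|z)}\right].
\]
By the definition of KL divergence, the bracketed inner sum is exactly $D_{KL}\!\left((Y|X=x, Z=z)\,\|\,(Y|Z=z)\right)$, which gives the claimed identity.

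There is no real obstacle here beyond notational care in keeping track of which random variables are being conditioned on. The one minor subtlety worth flagging is that the KL divergence is only well-defined when $P_{Y|X=x,Z=z}$ is absolutely continuous with respect to $P_{Y|Z=z}$; but this is automatic, since conditioning on additional information cannot enlarge support, so the support of $P_{Y|X=x,Z=z}$ is contained in that of $P_{Y|Z=z}$ for every $(x,z)$ with $p(x,z) > 0$ — and those are exactly the points that contribute to the outer expectation.
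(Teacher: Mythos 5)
Your derivation is correct: expanding $I(X;Y|Z) = H(Y|Z) - H(Y|X,Z)$, combining the logarithms, and factoring $p(x,y,z) = p(x,z)\,p(y|x,z)$ is the standard argument, and your remark on absolute continuity of $P_{Y|X=x,Z=z}$ with respect to $P_{Y|Z=z}$ is a sensible (and correct) precaution. The paper states this as a background fact from information theory without proof, so there is no authorial argument to compare against; your write-up is exactly the textbook proof one would supply.
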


Finally, we will also use the following connection between KL divergence and $||\cdot||_1$ distance.

\begin{lemma}[Pinsker's inequality]
	For random variables $X$ and $Y$,
	\[	
		||X-Y||_1 \leq \sqrt{2D_{KL}(X||Y)}.
	\]
\end{lemma}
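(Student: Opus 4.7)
The plan is to follow the classical two-step proof: first reduce the general statement to the two-point (Bernoulli) case by a data processing argument, and then verify the inequality for Bernoullis by elementary calculus. Let $P, Q$ denote the distributions of $X$ and $Y$, and let $A = \{x : P(x) > Q(x)\}$ be the Neyman--Pearson set. A standard identity I would quote or re-derive in one line is $\|P - Q\|_1 = 2(P(A) - Q(A))$, which also equals $2 \sup_B |P(B) - Q(B)|$. Setting $p = P(A)$ and $q = Q(A)$, the goal reduces to showing $4(p-q)^2 \leq 2 D_{KL}(P \| Q)$.

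Next I would invoke the data processing inequality for KL divergence applied to the indicator $\mathbf{1}[x \in A]$, a deterministic (hence Markov) map. This pushes $P$ and $Q$ forward to $\mathrm{Ber}(p)$ and $\mathrm{Ber}(q)$ and gives
\[
    D_{KL}(P \| Q) \;\geq\; D_{KL}(\mathrm{Ber}(p) \,\|\, \mathrm{Ber}(q)).
\]
So it suffices to prove the Bernoulli case: $D_{KL}(\mathrm{Ber}(p) \,\|\, \mathrm{Ber}(q)) \geq 2(p-q)^2$.

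For the Bernoulli case, I would fix $q \in (0,1)$ and define
\[
    f(p) \;=\; p \ln\!\frac{p}{q} + (1-p)\ln\!\frac{1-p}{1-q} - 2(p-q)^2
\]
on $p \in [0,1]$. A direct computation gives $f(q) = 0$ and $f'(q) = 0$, while
\[
    f''(p) \;=\; \frac{1}{p(1-p)} - 4 \;\geq\; 0,
\]
since $p(1-p) \leq 1/4$. Hence $f$ is convex with a minimum at $q$, so $f(p) \geq 0$ for all $p$, which is exactly what is needed. Edge cases $p \in \{0,1\}$ or $q \in \{0,1\}$ are handled by taking limits (with the convention $0 \ln 0 = 0$), and if $D_{KL}(P \| Q) = \infty$ the inequality is trivial, so we may assume $P \ll Q$ when defining $A$ and performing the reduction.

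The main obstacle, though minor, is justifying the data processing step cleanly for arbitrary (not necessarily discrete) $P$ and $Q$: strictly speaking one should work with the Radon--Nikodym derivative $dP/dQ$ and the set $A = \{dP/dQ > 1\}$, then apply DPI for the two-valued coarsening. Apart from this measure-theoretic bookkeeping, the remainder is the elementary convexity argument above, and no further machinery is needed.
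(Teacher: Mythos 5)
Your proof is correct. The paper states Pinsker's inequality as a standard fact and gives no proof of its own, so there is nothing to compare against; what you give is the canonical textbook argument. The three ingredients all check out: the identity $\|P-Q\|_1 = 2(P(A)-Q(A))$ for $A = \{P > Q\}$, the reduction to the two-point case via the data processing inequality for KL divergence applied to the indicator of $A$, and the Bernoulli case via $f(q)=f'(q)=0$ together with $f''(p) = \tfrac{1}{p(1-p)} - 4 \geq 0$, which makes $q$ a global minimizer of the convex function $f$. You also correctly dispose of the degenerate cases ($D_{KL}=\infty$, boundary values of $p$ and $q$), so the argument is complete as written.
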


\end{document}